\newtheorem{theorem}{Theorem}
\title{
Beyond Graph Convolutional Network: An Interpretable Regularizer-centered Optimization Framework
}
\author {
    Shiping Wang\textsuperscript{\rm 1,\rm 2},
    Zhihao Wu\textsuperscript{\rm 1,\rm 2},
    Yuhong Chen\textsuperscript{\rm 1,\rm 2},
    Yong Chen\textsuperscript{\rm 3}\thanks{Corresponding author}
}
\begin{document}
\maketitle

\begin{abstract}

Graph convolutional networks (GCNs) have been attracting widespread attentions due to their encouraging performance and powerful generalizations. 
However, few work provide a general view to interpret various GCNs and guide GCNs' designs. 
In this paper, by revisiting the original GCN, we induce an interpretable regularizer-centerd optimization framework, in which by building appropriate regularizers we can interpret most GCNs, such as APPNP, JKNet, DAGNN, and GNN-LF/HF.
Further, under the proposed framework, we devise a dual-regularizer graph convolutional network (dubbed tsGCN) to capture topological and semantic structures from graph data. 
Since the derived learning rule for tsGCN contains an inverse of a large matrix and thus is time-consuming, we leverage the Woodbury matrix identity and low-rank approximation tricks to successfully decrease the high computational complexity of computing infinite-order graph convolutions. 
Extensive experiments on eight public datasets demonstrate that tsGCN achieves superior performance against quite a few state-of-the-art competitors w.r.t. classification tasks.
\end{abstract}

\section{Introduction}
Owing to the powerful ability to aggregate neighborhood information, Graph Convolutional Network (GCN) has been successfully applied to diverse domains, such as computer vision \cite{ChenWWG19, NieZLGS20, WangCFP22}, recommender systems \cite{XuLHLX019, ChenHWL22}, privacy preserving \cite{HuCVB22}, and traffic forecasting \cite{YuYZ18, ChenCXCGF20}. 
Rooted in a series of theoretical foundations, GCN extends convolution operations to the non-Euclidean spaces and effectively propagates label signals, and therefore its variants have been extensively employed for a variety of graph-related tasks, including classification \cite{ZhangPCU19, yang2022new}, clustering \cite{FanWSLLW20, ZhuK21} and link prediction \cite{ChenYS0GM20, Halliwell22}. 
In a nutshell, GCN generates the graph embedding with the well-established graph convolutional layers gathering semantics from neighbors according to the network topology, which are revealed to be the most critical component.

Although GCN has behaved well in many machine learning tasks, lots of studies have pointed out its certain drawbacks and made efforts for further improvements. 
Bo et al. \cite{BoWSS21} indicated that the propagation mechanism could be considered as a special form of low-pass filter, and presented a GCN with an adaptive frequency. 
Zhang et al. \cite{9568649} argued that most GCN-based methods ignored the global information and proposed SHNE, which leveraged the structure and feature similarity to capture latent semantics.
Wang et al. \cite{wang2020gcn} revealed that the original GCN aggregated information from node neighbors inadequately, and then developed a multi-channel GCN by utilizing feature-based semantic graph.
In spite of the performance boosts of these GCN variants, they didn't establish a generalized framework, i.e., these approaches understood and enhanced GCN from certain and non-generalizable perspectives, thereby they are exceedingly difficult to be further developed, and with limited interpretability.

Consequently, it is expected to construct a unified framework for various GCNs with better interpretability; however, it is a pity that this kind of work is still in shortage. 
Zhao et al. \cite{zhao2020connecting} linked GCN and Graph-regularized PCA (GPCA), and then proposed a multi-layer network by stacking the GPCA layers. 
Zhu et al. \cite{ZhuWang2021Interpreting} attempted to interpret existing GCN-based methods with a unified optimization framework, under which they devised an adjustable graph filter for a new GCN variant.
Yang et al. \cite{YangLWZGWZHW21} designed a family of graph convolutional layers inspired by the updating rules of two typical iterative algorithms.
Although these efforts have contributed to better understanding of GCNs, they only explained GCNs in partial aspects, promoting the expectation of a more comprehensive analysis of GCNs.

To tackle the aforementioned issues, this paper induces an interpretable regularizer-centered optimization framework, which provides a novel perspective to digest various GCNs, i.e., this framework captures the common essential properties of existing state-of-the-art GCN variants and could defines them just by devising different regularizers. 
Moreover, in light of the analyses on current representative GCNs, we find that most of the existing approaches only consider capturing the topological regularization, while the feature-based semantic structure is underutilized, and hence this motivates us to design a dual-regularizer graph convolutional network (called tsGCN) within the regularizer-centered optimization framework for the fullest explorations of both structures and semantics from graph data.
Due to the high computational complexity of performing infinite-order graph convolutions, the unified framework provides a straightforward way employing truncated polynomials to approximate the graph Laplacian, similar to the truncated Chebyshev polynomials by vanilla GCN, restricting the message passing of a single graph convolution to the first-order neighborhood.

The main contributions of this paper can be summarized as the following three aspects:
\begin{itemize}
    \item Propose a regularizer-centered constrained optimization framework, which interprets various existing GCNs with specific regularizers.
    
    \item Establish a new dual-regularizer graph convolutional network (tsGCN), which exploits topological and semantic structures of the given data; and develop an efficient algorithm to reduce the computational complexity of solving infinite-order graph convolutions.
    
    \item Conduct a series of experiments to show that tsGCN performs much better than many SOTA GCNs, and also consumes much less time than the newly GNN-HF/LF. 
\end{itemize}

\section{Related Work}
\subsection{Graph Convolutional Networks}
The original GCN was first introduced by Kipf et al. \cite{KipfWelling2017SemiSupervised}, who generalized the convolution operations from the Euclidean domain to the non-Euclidean domain.
SGC \cite{WuJrZhang2019Simplifying} assumed that the nonlinear transform of GCN was not that significant, and then devised a simplified GCN by removing the nonlinear activation functions and collapsing the weight matrices.
PPNP \cite{KlicperaBojchevski2019Predict} employed the relationship between PageRank and GCN for the improvement on the propagation mechanism of GCN, and an iterative version called APPNP was further proposed to reduce the high computational complexity.
Attempting to adaptively learn the influence radii for each node and task, JKNet \cite{XuLiTian2018Representation} combined various aggregations at the last layer and was able to learn representations of different orders for graph substructures.
GNN-LF and GNN-HF \cite{ZhuWang2021Interpreting} considered the low-pass and the high-pass filter as the convolution kernels to improve GCN's expressive power, respectively.
AdaGCN \cite{SunZL21} leveraged Adaboost strategy for the enhancement of GCN, allowing information to be shared between layers.
To sum up, a main characteristic of these methods is exploring GCN from the perspectives of redesigning information aggregation strategies or modifying graph convolutions, and few work try to construct a unified framework to interpret various GCNs and reveal the underlying common principles. 

\subsection{Further Insights on GCNs}
Quite a few studies have been devoted to explore the mechanisms of GCN for deeper insights.
Li et al. \cite{LiHW18} indicated that the convolutional operation of GCN was a special form of Laplacian smoothing, attributed to which GCN suffered from the so-called over-smoothing problem.
Specifically, the performance of GCN will decrease as the number of layers increases, which has been validated by many other studies.
However, Liu et al. \cite{LiuGaoJi2020Towards} held a different opinion that the entanglement of two steps in GCN damages the performance of the deep GCN, where the two steps were explained as propagation and transformation. Based on this view, they decoupled the two operations and further presented a deeper GCN.
Zhu et al. \cite{ZhuWang2021Interpreting} also decomposed the convolution operation of GCN into two separate stages, called aggregation and transformation, and focused on the aggregation process, formulating an optimization objective to interpret it. 
Yang et al. \cite{0002KCJ0G19} explored network topology refinement, leveraging a topology optimization process for the explanation.
Oono et al. \cite{OonoS20} analyzed the forward propagation of GCN and interpreted it with a specific dynamical system, allowing GCN to be related to the underlying topological structures.
Overall, these studies have contributed to the interpretability of GCNs, and also let researchers better understand GCNs. 
In this paper, we build a unified optimization framework from a novel view of graph regularizers to interpret and understand GCNs.

\section{Mathematical Notations}

For the convenience of formal descriptions, derivations, and analyses, necessary notations are narrated as below.
A graph is denoted as $\mathcal{G} = (\mathcal{V}, \mathcal{E}, \mathbf{A})$, where $\mathcal{V}$ marks the vertex set with $|\mathcal{V}|=N$ ($N$ is the total number of nodes in graph $\mathcal{G}$), $\mathcal{E}$ marks the edge set, and $\mathbf{A}=[\mathbf{A}_{ij}]_{N\times N}$ marks an affinity matrix of which $\mathbf{A}_{ij}$ measures the similarity between the $i$-th and the $j$-th node.
In addition, $\mathbf{D}=[\mathbf{D}_{ij}]_{N\times N}$ represents the degree matrix of $\mathcal{G}$ with $\mathbf{D}_{ii}=\sum_{j=1}^{N}\mathbf{A}_{ij}$, and then the normalized symmetrical graph Laplacian of $\mathcal{G}$ is computed as $\widetilde{\mathbf{L}} = \mathbf{I} - \widetilde{\mathbf{A}}$ with $\widetilde{\mathbf{A}} = \mathbf{D}^{-\frac{1}{2}}\mathbf{A}\mathbf{D}^{-\frac{1}{2}}$.

\section{Revisiting Graph Convolutional Network}

\begin{table*}[!htbp]
    \centering
    \resizebox{\linewidth}{!}{
    \begin{tabular}{|m{2cm}<{\centering}|m{7.8cm}<{\centering}|m{6.6cm}<{\centering}|m{4.7cm}|}
    \toprule		
     \multicolumn{1}{|c|}{Methods} &  \multicolumn{1}{c|}{Propagation Rules} & \multicolumn{1}{c|}{Regularizer $\mathcal{L}(\mathbf{H}^{(l)};\mathcal{G})$} & \multicolumn{1}{c|}{Projective Set} \\
    \midrule
    GCN   & $\mathbf{H}^{(l)} = \sigma\left(\widehat{\mathbf{A}}\mathbf{H}^{(l-1)}\mathbf{\Theta}^{(l)}\right)$  & $ Tr\left({\mathbf{H}^{(l)}}^{\top}\widetilde{\mathbf{L}}\mathbf{H}^{(l)}\right)$ & $\left\{\begin{matrix}\mathcal{S}^{(l)} = \mathcal{S}_{+}, l\in[L$$-$$1],\\ \mathcal{S}^{(L)} =\mathcal{S}_{simplex}\end{matrix}\right.$\\
    SGC  & $\mathbf{H}^{(l)} = \sigma\left({\widehat{\mathbf{A}}}\mathbf{H}^{(l-1)}\mathbf{\Theta}^{(l)}\right)$ &  $Tr\left({\mathbf{H}^{(l)}}^{\top}\widetilde{\mathbf{L}}\mathbf{H}^{(l)}\right)$ & $\left\{\begin{matrix}\mathcal{S}^{(l)} = \mathcal{S}, l\in[L$$-$$1],\\ \mathcal{S}^{(L)} =\mathcal{S}_{simplex}\end{matrix}\right.$\\
    APPNP & $\mathbf{H}^{(l)} = \sigma\left((1-\alpha)\widehat{\mathbf{A}}\mathbf{H}^{(l-1)}+\alpha\mathbf{H}^{(0)}\right)$ & $ Tr\left(\frac{1}{1-\alpha}{\mathbf{H}^{(l)}}^{\top}{\widehat{\mathbf{A}}}^{-1}(\mathbf{H}^{(l)}-2\alpha\mathbf{H}^{(0)})\right)$ & $\left\{\begin{matrix}\mathcal{S}^{(l)} = \mathcal{S}, l\in[L$$-$$1],\\ \mathcal{S}^{(L)} =\mathcal{S}_{simplex}\end{matrix}\right.$\\
    JKNet & $\mathbf{H}^{(l)} = \sigma\left(\sum_{k=1}^{K}\alpha_{k}{\widehat{\mathbf{A}}}^{k}\mathbf{H}^{(l-1)}\mathbf{\Theta}^{(l)}\right)$ & $Tr\left({\mathbf{H}^{(l)}}^{\top}{\widehat{\mathbf{A}}}^{-1}(\mathbf{I} + \beta\widetilde{\mathbf{L}})\mathbf{H}^{(l)}\right)$   & $\left\{\begin{matrix}\mathcal{S}^{(l)} = \mathcal{S}, l\in[L$$-$$1],\\ \mathcal{S}^{(L)} =\mathcal{S}_{simplex}\end{matrix}\right.$\\
    DAGNN  & $\mathbf{H}^{(l)} =  \sigma\left(\sum_{k=0}^{K}\alpha_{k}{\widehat{\mathbf{A}}}^{k}\mathbf{H}^{(l-1)}\right)$ & $Tr\left({\mathbf{H}^{(l)}}^{\top}(\mathbf{I} + \beta\widetilde{\mathbf{L}})\mathbf{H}^{(l)}\right)$   & $\left\{\begin{matrix}\mathcal{S}^{(l)} = \mathcal{S}, l\in[L$$-$$1],\\ \mathcal{S}^{(L)} =\mathcal{S}_{simplex}\end{matrix}\right.$ \\
    GNN-HF &  $\mathbf{H}^{(l)} = \sigma\left((\mathbf{I} + \alpha\widehat{\mathbf{L}})^{-1}(\mathbf{I} + \beta\widehat{\mathbf{L}})\mathbf{H}^{(l-1)}\mathbf{\Theta}^{(l)}\right)$ & $Tr\left({\mathbf{H}^{(l)}}^{\top}(\mathbf{I} + \beta\widehat{\mathbf{L}})^{-1}(\mathbf{I} + \alpha\widehat{\mathbf{L}})\mathbf{H}^{(l)}\right)$ & $\left\{\begin{matrix}\mathcal{S}^{(l)} = \mathcal{S}_{+}, l\in[L$$-$$1],\\ \mathcal{S}^{(L)} =\mathcal{S}_{simplex}.\end{matrix}\right.$\\
    GNN-LF &  $\mathbf{H}^{(l)} = \sigma\left((\mathbf{I} + \alpha\widehat{\mathbf{A}})^{-1}(\mathbf{I} + \beta\widehat{\mathbf{A}})\mathbf{H}^{(l-1)}\mathbf{\Theta}^{(l)}\right)$ & $Tr\left({\mathbf{H}^{(l)}}^{\top}(\mathbf{I} + \beta\widehat{\mathbf{A}})^{-1}(\mathbf{I} + \alpha\widehat{\mathbf{A}})\mathbf{H}^{(l)} \right)$ & $\left\{\begin{matrix}\mathcal{S}^{(l)} = \mathcal{S}_{+}, l\in[L$$-$$1],\\ \mathcal{S}^{(L)} =\mathcal{S}_{simplex}\end{matrix}\right.$\\
    \midrule
    tsGCN & $\mathbf{H}^{(l)} = \sigma\left((\mathbf{I} +  \alpha\widetilde{\mathbf{L}}_{\mathcal{G}} + \beta\widetilde{\mathbf{L}}_{\mathcal{X}})^{-1}\mathbf{H}^{(l-1)}\mathbf{\Theta}^{(l)}\right)$ & $Tr\left({\mathbf{H}^{(l)}}^{\top}(\mathbf{I} + \alpha\widetilde{\mathbf{L}}_{\mathcal{G}} + \beta\widetilde{\mathbf{L}}_{\mathcal{X}})\mathbf{H}^{(l)}\right)$ &  $\left\{\begin{matrix}\mathcal{S}^{(l)} = \mathcal{S}_{+}, l\in[L$$-$$1],\\ \mathcal{S}^{(L)} =\mathcal{S}_{simplex}\end{matrix}\right.$\\
    \bottomrule
    \end{tabular}}
    \caption{Different regularizers can derive different GCN variants under the regularizer-centered optimization framework.}
    \label{ConnectionRegularizerGCN}
\end{table*}

For a graph $\mathcal{G} = (\mathcal{V}, \mathcal{E}, \mathbf{A})$, the svd of its graph Laplacian is $\mathbf{L} = \mathbf{U}\mathbf{\Lambda}\mathbf{U}^{\top}$, where $\mathbf{U}\in\mathbb{R}^{N\times N}$ is comprised of orthonormal eigenvectors and $\mathbf{\Lambda} = \mbox{diag}(\lambda_{1}, \cdots, \lambda_{N})$ is a diagonal matrix with $\lambda_i$ denoting the $i$-th eigenvalue and $\lambda_i \ge \lambda_j$ ($i=1,\cdots,N$).
Essentially, this decomposition induces a Fourier transform on the graph domain, where eigenvectors correspond to Fourier components and eigenvalues represent frequencies of the graph.
For an input signal $\mathbf{x}\in\mathbb{R}^{N}$ defined on the graph $\mathcal{G}$, the corresponding graph Fourier transform of $\mathbf{x}$ is $\widehat{\mathbf{x}} = \mathbf{U}^{\top}\mathbf{x}$, and its inverse transform is derived as $\mathbf{x} = \mathbf{U}\widehat{\mathbf{x}}$.
Consequently, the graph convolution between the signal $\mathbf{x}$ and the filter $\mathbf{g}\in\mathbb{R}^{N}$ is
\begin{eqnarray}
\mathbf{g} * \mathbf{x} = \mathbf{U}(\widehat{\mathbf{g}}\odot\widehat{\mathbf{x}}) = \mathbf{U}((\mathbf{U}^{\top}\mathbf{g})\odot(\mathbf{U}^{\top}{\mathbf{x}})),
\end{eqnarray}
where $\odot$ is the Hadamard product between two vectors.
Particularly, denoting $\mathbf{g}_{\mathbf{\Theta}}=\mbox{diag}(\mathbf{\Theta}) := \mathbf{U}^{\top}\mathbf{g}$ parameterized by $\mathbf{\Theta}\in\mathbb{R}^{N}$, the graph convolution between $\mathbf{x}$ and $\mathbf{g}$ can be rewritten as
\begin{eqnarray}
\mathbf{g} * \mathbf{x} = \mathbf{U}(\widehat{\mathbf{g}}\odot\widehat{\mathbf{x}}) = \mathbf{U}\mathbf{g}_{\mathbf{\Theta}}\mathbf{U}^{\top}{\mathbf{x}},
\end{eqnarray}
where $\mathbf{\Theta}$ is regarded as the filter coefficients to be optimized.
Especially, $\mathbf{\Theta}$ is assumed to be the polynomials of the spectrums of the graph Laplacian \cite{HammondVander20111Wavelets}, i.e.,
\begin{eqnarray}
\mathbf{\Theta} = \mathbf{\Theta}(\mathbf{\Lambda}) = \sum_{i=1}^{K}\mathbf{\Theta}_{i}\mathbf{\Lambda}^{i},
\end{eqnarray}
where $K$ is the order of Chebyshev polynomials.
By fixing $K=2$, the graph convolutional network (GCN) \cite{KipfWelling2017SemiSupervised} takes an effective form
\begin{eqnarray}
\mathbf{g} * \mathbf{x} = \theta(\mathbf{I} + \mathbf{L})\mathbf{x},
\end{eqnarray}
where $\mathbf{\Theta}=[\theta]$ is a parameter to be optimized.
When extending single channel signal $\mathbf{x}$ and filter $\theta$ to multi-channel $\mathbf{H}^{(l)}\in\mathbb{R}^{N\times d_{l}}$ and $\mathbf{\Theta}^{(l)}\in\mathbb{R}^{d_{l}\times f_{l}}$, the GCN is converted to
\begin{eqnarray}\label{GCNupdatingRule}
\mathbf{H}^{(l)} = \sigma(\widehat{\mathbf{A}}\mathbf{H}^{(l-1)}\mathbf{\Theta}^{(l)}),
\end{eqnarray}
where $\widehat{\mathbf{A}}$ is a normalized version of $\mathbf{I}+\widetilde{\mathbf{A}}$, $\sigma(\cdot)$ is an activation function, and $\mathbf{H}^{(l)}\in\mathbb{R}^{N\times d_{l}}$ is the output of the $l$-th layer with $\mathbf{H}^{(0)} = \mathbf{X}$ being the input feature matrix.

\section{An Interpretable Regularizer-centered Optimization Framework for GCNs}

Given the input $\mathbf{H}^{(l-1)}$ of the $(l)$-th layer, GCN can compute the output $\mathbf{H}^{(l)}$ by minimizing
\begin{align}\label{manifoldEmbedding}
\begin{split}
\mathcal{L} = -\text{Tr}({\mathbf{H}^{(l)}}^{\top}\mathbf{H}^{(l-1)}\mathbf{\Theta}^{(l)}) + \frac{1}{2}\text{Tr}({\mathbf{H}^{(l)}}^{\top}\widetilde{\mathbf{L}}\mathbf{H}^{(l)}) \\
\end{split}
\end{align}
\begin{displaymath}
\mbox{ s.t. } \mathbf{H}^{(l)} \geq \mathbf{0},
\end{displaymath}
where $\frac{1}{2}\text{Tr}({\mathbf{H}^{(l)}}^{\top}\widetilde{\mathbf{L}}\mathbf{H}^{(l)}) = \frac{1}{4}\sum_{j=1}^{N}\sum_{i=1}^{N}\mathbf{A}_{ij}||\frac{\mathbf{h}^{(l)}_{i}}{\sqrt{\mathbf{D}_{ii}}} -\frac{\mathbf{h}^{(l)}_{j}}{\sqrt{\mathbf{D}_{jj}}}||_{2}^{2}$ with $\mathbf{H}^{(l)} = [\mathbf{h}_{1}^{(l)}; \cdots; \mathbf{h}_{N}^{(l)}]$; it is a normalized regularizer to preserve the pairwise similarity of any two nodes in the given graph. 
Besides, the $-\text{Tr}({\mathbf{H}^{(l)}}^{\top}\mathbf{H}^{(l-1)}\mathbf{\Theta}^{(l)})$ is actually a fitting loss term bewteen $\mathbf{H}^{(l)}$ and $\mathbf{H}^{(l-1)}\mathbf{\Theta}^{(l)}$, i.e., $||\mathbf{H}^{(l)}-\mathbf{H}^{(l-1)}\mathbf{\Theta}^{(l)}||_{F}^{2}$ with $\mathbf{H}^{(l-1)}$ and $\mathbf{\Theta}^{(l)}$ fixed when optimizing $\mathbf{H}^{(l)}$. Note that the square term $||\mathbf{H}^{(l)}||_{F}^{2}$ is a L2-regularized smoother, which can be ignored or absorbed in the second graph regularizer $\text{Tr}({\mathbf{H}^{(l)}}^{\top}\widetilde{\mathbf{L}}\mathbf{H}^{(l)})$.

Taking derivative of $\mathcal{L}$ with respect to $\mathbf{H}^{(l)}$ and setting it to zero, we obtain $\mathbf{H}^{(l_{+})}$ as
\begin{align}\label{ClosedFormME}
\mathbf{H}^{(l_{+})} = (\mathbf{I}-\widetilde{\mathbf{A}})^{-1} \mathbf{H}^{(l-1)}{\mathbf{\Theta}^{(l)}}; 
\end{align}
and then there yields
\begin{equation}\label{ClosedFormME_Final}
   \mathbf{H}^{(l)}=\sigma\left( \mathbf{H}^{(l_{+})} \right),
\end{equation}
when the nonnegative constraints $\mathbf{H}^{(l)} \geq \mathbf{0}$ are further considered.
Notice that $\sigma(\cdot)$ is the $\mathrm{ReLU}(\cdot)$ activation function. 
Here, if the matix inverse $(\mathbf{I}-\tilde{\mathbf{A}})^{-1}  = \sum_{i=0}^{\infty}\tilde{\mathbf{A}}^{i}$ is approximated by the first-order expansion, i.e., $(\mathbf{I}-\tilde{\mathbf{A}})^{-1}  \approx \mathbf{I} + \tilde{\mathbf{A}}$, then Eq.~\eqref{ClosedFormME_Final} will lead to the updating rule \eqref{GCNupdatingRule} of GCN. 

Usually, the activation functions in GCN are $\mathrm{ReLU}(\cdot)$ and $\mathrm{Softmax}(\cdot)$, which could be converted to different projection optimizations.
Concretely, the $\mathrm{ReLU}(\cdot)$ activation function is equivalent to project a point $\mathbf{x}$ onto the non-negative plane $\mathcal{S}_{+} = \{\mathbf{s}\in\mathbb{R}^{d} | \mathbf{s} \geq \mathbf{0}\}$, i.e., 
\begin{align}
\mathrm{ReLU}(\mathbf{x}) = \mathop{\arg\min}_{\mathbf{y}\in\mathcal{S}_{+}} -\mathbf{x}^{\top}\mathbf{y} + \frac{1}{2}||\mathbf{y}||_{2}^{2}. 
\end{align}
By the way, we denote $\mathcal{S} = \{\mathbf{s}\in\mathbb{R}^{d}\}$, which corresponds to an identity activation function.
In terms of the $\mathrm{Softmax}(\cdot)$ activation function, it can be regarded as projecting $\mathbf{x}$ onto the set $\mathcal{S}_{simplex} = \{\mathbf{s}\in\mathbb{R}^{d} | \mathbf{1}^{\top}\mathbf{s} = 1, \mathbf{s} \geq \mathbf{0}\}$, i.e., 
\begin{align}\label{projection_simplex}
\mathrm{Softmax}(\mathbf{x}) = \mathop{\arg\min}_{\mathbf{y}\in\mathcal{S}_{simplex}} -\mathbf{x}^{\top}\mathbf{y} + \mathbf{y}^{\top}\log(\mathbf{y}),
\end{align}
where $\mathbf{y}^{\top}\log(\mathbf{y}) = \sum_{i=1}^{d}\mathbf{y}_{i}\log(\mathbf{y}_{i})$ is the negative entropy of $\mathbf{y}$ \cite{Amos2019Differentiable}. 
In fact, with respect to other activation functions, they can also be equivalent to project a point onto some feasible set with some metric.

Up to present, we have actually utilized a constrained optimization problem to interpret GCN, including information propagations (i.e., Eq.~\eqref{ClosedFormME}) and the nonlinear activation functions (i.e., $\mathrm{ReLU}(\cdot)$ and $\mathrm{Softmax}(\cdot)$).

The above analyses can not only explain the vanilla GCN, but also stimulate a regularizer-centered optimization framework that can further unify various GCNs. By extending the optimization \eqref{manifoldEmbedding}, a more general framework is written as
\begin{align}\label{regularizer_framework}
\begin{split}
\mathcal{L} = -\text{Tr}({\mathbf{H}^{(l)}}^{\top}\mathbf{H}^{(l-1)}\mathbf{\Theta}^{(l)}) + \frac{1}{2}\mathcal{L}(\mathbf{H}^{(l)};\mathcal{G}) \\
\end{split}
\end{align}
\begin{displaymath}
\mbox{ s.t. } \mathbf{H}^{(l)} \in \{\mathcal{S}_{+}\; or \;\mathcal{S}\}, l\in[L-1], \mathbf{H}^{(L)} \in \mathcal{S}_{simplex}.
\end{displaymath}

Under this framework, different regularizers could derive different GCNs, for example,
\begin{itemize}
    \item If $\mathcal{L}(\mathbf{H}^{(l)};\mathcal{G})=Tr\left({\mathbf{H}^{(l)}}^{\top}(\mathbf{I} + \mu\widehat{\mathbf{L}})^{-1}(\mathbf{I} + \lambda\widehat{\mathbf{L}})\mathbf{H}^{(l)}\right)$ with $\lambda = \beta + \frac{1}{\alpha} - 1$, $\mu = \beta$, and $\widehat{\mathbf{L}}=\mathbf{I}-\widehat{\mathbf{A}}$, then it induces the updating rule $\mathbf{H}^{(l)} = \sigma\left((\mathbf{I} + \alpha\widehat{\mathbf{A}})^{-1}(\mathbf{I} + \beta\widehat{\mathbf{A}})\mathbf{H}^{(l-1)}\mathbf{\Theta}^{(l)}\right)$, which corresponds to GNN-HF \cite{ZhuWang2021Interpreting}.
    
    \item If $\mathcal{L}(\mathbf{H}^{(l)};\mathcal{G})=Tr\left({\mathbf{H}^{(l)}}^{\top}(\mathbf{I} + \mu\widehat{\mathbf{A}})^{-1}(\mathbf{I} + \lambda\widehat{\mathbf{A}})\mathbf{H}^{(l)}\right)$ with $\lambda = \frac{-\alpha\beta+2\alpha-1}{\alpha\beta-\alpha+1}$ and $\mu = \frac{1}{\beta}-1$, then it gives rise to the updating rule $\mathbf{H}^{(l)} = \sigma\left((\mathbf{I} + \alpha\widehat{\mathbf{A}})^{-1}(\mathbf{I} + \beta\widehat{\mathbf{A}})\mathbf{H}^{(l-1)}\mathbf{\Theta}^{(l)}\right)$, which corresponds to GNN-LF \cite{ZhuWang2021Interpreting}.
\end{itemize}
For more cases, their results are summarized in Table \ref{ConnectionRegularizerGCN}, and the derivation details can refer to those of the original GCN (from Eq.~(\ref{ClosedFormME}) to Eq.~(\ref{projection_simplex})) and the supplementary.

\textbf{Remarks.} The work \cite{ZhuWang2021Interpreting} is most similar to our work with the same research idea: they both want to propose a unified framework to interpret the current GCNs and guide the design of new GCN variants; however, they are realized in different ways. To be specific, (1) Zhu et al. \cite{ZhuWang2021Interpreting} develop an optimization framework to explain different GCNs' propagation processes; whereas we propose a constrained optimization framework not only to interpret various GCNs' propagation processes, but also explain the nonlinear activation layers; (2) \cite{ZhuWang2021Interpreting} unifies various GCNs via devising various fitting items which are essentially constructed by limited graph filters; while our work derives different GCNs through designing different regularizers.
To sum up, our work interprets the whole (not partial) GCNs with regularizer-centered constrained optimizations.

\begin{algorithm}[!tbp]
    \caption{
    Topological and Semantic Regularized GCN}
    \label{alg:Framework}
    \begin{algorithmic}[1]
    \REQUIRE Graph data {$\mathcal{G} = (\mathcal{V}, \mathcal{E}, \mathbf{A})$, labels $\mathbf{y}$,
    number of layers $L$, and hyperparameters $\{\alpha, \beta, r\}$.}
    \ENSURE {Predicted label set $\{y_{i}^{*}\}_{i=n+1}^{N}$.}
           
    \STATE {Initialize model parameters $\{\mathbf{H}^{(l)}, \mathbf{\Theta}^{(l)}\}_{l=1}^{L}$; }
    \STATE {Compute the joint graph Laplacian $\alpha\widetilde{\mathbf{L}}_{\mathcal{G}} + \beta\widetilde{\mathbf{L}}_{\mathcal{X}}$ and its low-rank factorization $\mathbf{W}\mathbf{V}^{\top}$;}
    \STATE {Substitute the matrix inverse $(\mathbf{I} +  \alpha\widetilde{\mathbf{L}}_{\mathcal{G}} + \beta\widetilde{\mathbf{L}}_{\mathcal{X}})^{-1}$ with $\mathbf{I} - \mathbf{W}(\mathbf{I} +  \mathbf{V}^{\top}\mathbf{W})^{-1}\mathbf{V}^{\top}$;}
       \WHILE {not convergent}
           \STATE {Calculate hidden layers $\{\mathbf{H}^{(l)}\}_{l=1}^{L}$ by Eq.~\eqref{ForwardUpdating};}
           \STATE {Update weights: $\mathbf{\Theta}^{(l+1)} \leftarrow \mathbf{\Theta}^{(l)} - \eta \frac{\partial \mathcal{L}}{\partial \mathbf{\Theta}^{(l)}}$;}
       \ENDWHILE
    \RETURN {The predicted labels: $y_{i}^{*} = \arg\max_{j}\mathbf{H}^{(L)}_{ij}$.}
    \end{algorithmic}
\end{algorithm}

\begin{table}[]
\caption{Dataset statistics.}
\label{datasets}
\resizebox{\columnwidth}{!}{%
\begin{tabular}{l|rrrrr}
\toprule
\multicolumn{1}{l|}{Datasets} & \multicolumn{1}{c}{\#Nodes} & \multicolumn{1}{c}{\#Edges} & \multicolumn{1}{c}{\#Classes} & \multicolumn{1}{c}{\#Features} & \multicolumn{1}{c}{\#Train/Val/Test} \\ \midrule
Cora                           & 2,708                        & 5,429                        & 7                              & 1,433                           & 140/500/1,000                         \\ 
Citeseer                       & 3,327                        & 4,732                        & 6                              & 3,703                           & 120/500/1,000                         \\ 
Pubmed                         & 19,717                       & 44,338                       & 3                              & 500                             & 60/500/1,000                          \\ 
ACM                            & 3,025                        & 13,128                       & 3                              & 1,870                           & 60/500/1,000                          \\ 
BlogCatalog                    & 5,196                        & 171,743                      & 6                              & 8,189                           & 120/500/1,000                         \\ 
CoraFull                       & 19,793                       & 65,311                       & 70                             & 8,710                           & 1,400/500/1,000                       \\ 
Flickr                         & 7,575                        & 239,738                      & 9                              & 12,047                          & 180/500/1,000                         \\ 
UAI                            & 3,067                        & 28,311                       & 19                             & 4,973                           & 367/500/,1000                         \\ \bottomrule
\end{tabular}%
}
\end{table}

\begin{table*}[!tbp]
    \begin{center}
      \caption{{Accuracy} and {F1-score} (mean\% and standard deviation\%) of all methods, where the best results are in red and the second-best are in blue. Note that GraphSAGE fails to work on the ACM dataset, and thus its results are marked with ``---''.}
      \resizebox{0.98\textwidth}{!}{
        \begin{tabular}{m{1.15cm}<{\centering}|m{2.75cm}|m{1.78cm}<{\centering}m{1.78cm}<{\centering}m{1.78cm}<{\centering}m{1.78cm}<{\centering}m{1.78cm}<{\centering}m{1.78cm}<{\centering}m{1.78cm}<{\centering}m{1.78cm}<{\centering}}
            \toprule[1pt]
            Metrics                & \multicolumn{1}{c|}{Methods / Datasets} & Cora                                       & Citeseer                                   & Pubmed                                     & ACM                                        & BlogCatalog                                & CoraFull                                   & Flickr                                     & UAI                                        \\ \midrule
                                    & Chebyshev                             & 76.2 (0.7)                                 & 69.3 (0.4)                                 & 74.0 (0.8)                                 & 82.8 (1.4)                                 & 68.3 (1.6)                                 & 57.2 (1.1)                                 & 38.5 (1.6)                                 & 49.7 (0.4)                                 \\
                                    & GraphSAGE                             & 76.7 (0.6)                                 & 64.4 (0.9)                                 & 75.5 (0.2)                                 & ---                                          & 57.8 (0.7)                                 & 59.9 (0.7)                                 & 32.7 (1.0)                                 & 41.7 (1.4)                                 \\
                                    & GAT                                   & 79.1 (0.8)                                 & 68.3 (0.5)                                 & 78.4 (0.3)                                 & 84.6 (0.5)                                 & 67.1 (1.7)                                 & 62.4 (0.4)                                 & 40.4 (0.9)                                 & 49.7 (3.0)                                 \\ \cmidrule{2-10} 
                                    & GCN                                   & 80.6 (1.4)                                 & 69.1 (1.5)                                 & 77.6 (1.3)                                 & 88.8 (0.5)                                 & 84.2 (0.6)                                 & 62.8 (0.4)                                 & 51.0 (1.2)                                 & 58.5 (1.1)                                 \\
                                    & SGC                                   & 79.3 (1.0)                                 & 66.4 (1.7)                                 & 76.8 (2.0)                                 & 80.8 (2.7)                                 & 81.3 (0.2)                                 & 62.9 (2.2)                                 & 51.0 (0.1)                                 & 56.5 (3.5)                                 \\
                                    & APPNP                                 & 78.0 (0.1)                                 & 65.8 (0.2)                                 & 78.0 (0.0)                                 & 88.2 (0.0)                                 & {\textcolor{blue}{\textbf{87.7 (0.3)}}}                                 & 63.1 (0.5)                                 & 57.5 (0.2)                                 & 62.3 (1.2) \\
                                    & JKNet                                 & {\textcolor{red}{\textbf{83.1 (0.1)}}} & 72.3 (0.1)                                 & 80.1 (0.2)                                 & 82.3 (0.6)                                 & 75.7 (0.1)                                 & 62.6 (0.0)                                 & 54.0 (0.3)                                 & 45.6 (0.5)                                 \\
                                    & DAGNN                                 & 81.9 (0.7)                                 & 70.0 (1.1)                                 & {\textcolor{blue}{\textbf{80.6 (0.7)}}} & 87.4 (0.9)                                 & 84.6 (1.9)                                 & 65.6 (0.3)                                 & 54.6 (5.9)                                 & 46.7 (12.4)                                \\
                                    & GNN-LF                                & 81.1 (0.5)                                 & 72.3 (0.9)                                 & 80.0 (0.4)                                 & 90.8 (0.5)                                 & 86.7 (0.6)                                 & 63.5 (0.9)                                 & 56.6 (0.6)                                 & 36.6 (19.8)                                \\
                                    & GNN-HF                                & 80.7 (0.2)                                 & 68.8 (1.3)                                 & 77.7 (0.2)                                 & {\textcolor{blue}{\textbf{91.2 (0.5)}}} & 84.5 (0.4)                                 & 63.0 (0.7)                                 & {\textcolor{blue}{\textbf{60.7 (0.4)}}} & 54.8 (1.4)                                 \\ \cmidrule{2-10} 
                                    & tsGCN (inv)                          & 80.3 (0.3)                                 & {\textcolor{red}{\textbf{73.3 (0.4)}}} & 78.4 (0.3)                                 & 85.1 (1.6)                                 & 87.8 (6.3) & {\textcolor{blue}{\textbf{67.0 (0.9)}}} & 53.3 (12.6)                                & {\textcolor{blue}{\textbf{64.2 (1.8)}}}                                 \\
            \multirow{-12}{*}{ACC} & tsGCN                                & {\textcolor{blue}{\textbf{82.0 (0.3)}}} & {\textcolor{blue}{\textbf{73.1 (0.4)}}} & {\textcolor{red}{\textbf{82.4 (0.1)}}} & {\textcolor{red}{\textbf{92.8 (0.3)}}} & {\textcolor{red}{\textbf{92.3 (0.5)}}} & {\textcolor{red}{\textbf{67.9 (0.9)}}} & {\textcolor{red}{\textbf{79.1 (3.0)}}} & {\textcolor{red}{\textbf{67.9 (0.6)}}} \\ \midrule
                                    & Chebyshev                             & 76.3 (0.7)                                 & 65.4 (0.8)                                 & 73.9 (0.7)                                 & 82.5 (1.4)                                 & 64.3 (1.6)                                 & 40.0 (0.5)                                 & 38.4 (1.5)                                 & 39.1 (0.2)                                 \\
                                    & GraphSAGE                             & 76.7 (0.5)                                 & 60.7 (0.5)                                 & 74.7 (0.2)                                 & ---                                          & 54.7 (0.6)                                 & 51.9 (0.6)                                 & 31.0 (1.1)                                 & 35.3 (1.0)                                 \\
                                    & GAT                                   & 77.1 (0.7)                                 & 64.6 (0.5)                                 & 78.2 (0.2)                                 & 84.8 (0.5)                                 & 66.3 (1.9)                                 & 46.4 (0.4)                                 & 38.1 (1.1)                                 & 40.8 (1.3)                                 \\ \cmidrule{2-10} 
                                    & GCN                                   & 79.4 (1.4)                                 & 65.2 (2.4)                                 & 77.2 (1.4)                                 & 88.9 (0.5)                                 & 82.4 (0.5)                                 & 52.8 (0.8)                                 & 50.0 (1.7)                                 & 45.0 (1.1)                                 \\
                                    & SGC                                   & 77.7 (0.9)                                 & 61.5 (1.7)                                 & 76.5 (2.3)                                 & 81.1 (2.6)                                 & 80.7 (0.3)                                 & 53.2 (2.1)                                 & 44.2 (0.2)                                 & 46.7 (1.7)                                 \\
                                    & APPNP                                 & 77.6 (0.1)                                 & 63.2 (0.2)                                 & 77.7 (0.0)                                 & 88.3 (0.0)                                 & 85.7 (0.3)                                 & 48.2 (0.7)                                 & 56.9 (0.2)                                 & {\textcolor{blue}{\textbf{48.6 (1.6)}}} \\
                                    & JKNet                                 & {\textcolor{red}{\textbf{82.3 (0.3)}}} & 67.8 (0.1)                                 & 79.3 (0.3)                                 & 82.2 (0.6)                                 & 75.0 (0.1)                                 & 51.3 (0.1)                                 & 51.1 (0.5)                                 & 31.7 (1.5)                                 \\
                                    & DAGNN                                 & 80.0 (0.7)                                 & 65.7 (0.7)                                 & {\textcolor{blue}{\textbf{80.7 (0.7)}}} & 87.5 (0.9)                                 & 83.8 (2.4)                                 & 53.0 (0.9)                                 & 55.5 (6.7)                                 & 39.3 (11.2)                                \\
                                    & GNN-LF                                & 79.1 (0.7)                                 & 66.7 (0.4)                                 & 80.2 (0.5)                                 & 90.9 (0.5)                                 & {\textcolor{blue}{\textbf{85.9 (0.6)}}} & 50.5 (1.9)                                 & 54.3 (1.0)                                 & 29.7 (15.1)                                \\
                                    & GNN-HF                                & 78.6 (0.3)                                 & 64.3 (1.7)                                 & 78.1 (0.2)                                 & {\textcolor{blue}{\textbf{91.3 (0.5)}}} & 83.8 (0.4)                                 & 49.0 (1.1)                                 & {\textcolor{blue}{\textbf{58.6 (0.6)}}} & 44.9 (0.8)                                 \\ \cmidrule{2-10} 
                                    & tsGCN (inv)                          & 78.5 (0.3)                                 & {\textcolor{red}{\textbf{69.6 (0.4)}}} & 78.7 (0.3)                                 & 85.1 (1.5)                                 & 85.2 (7.1)                                 & {\textcolor{blue}{\textbf{57.2 (1.1)}}} & 52.9 (15.8)                                & 48.5 (0.8)                                 \\
            \multirow{-12}{*}{F1}   & tsGCN                                & {\textcolor{blue}{\textbf{80.5 (0.5)}}} & {\textcolor{blue}{\textbf{69.0 (0.3)}}} & {\textcolor{red}{\textbf{82.4 (0.1)}}} & {\textcolor{red}{\textbf{92.8 (0.4)}}} & {\textcolor{red}{\textbf{90.1 (0.6)}}} & {\textcolor{red}{\textbf{58.7 (0.7)}}} & {\textcolor{red}{\textbf{79.3 (2.9)}}} & {\textcolor{red}{\textbf{50.1 (0.1)}}} \\ \bottomrule[1pt]
        \end{tabular}}\label{Performance}
    \end{center}
  \end{table*}
 
\section{tsGCN: Topological and Semantic Regularized Graph Convolutional Network}

One finding from most existing GCNs is that they often ignored feature-based semantic structures, which can weaken the representation learning abilities of graph networks, 
then we focus on two regularizers, i.e.,
\begin{align}
\mathcal{L}_{1}(\mathbf{H}^{(l)};\mathcal{G}) =  \frac{1}{2}\text{Tr}\left(\{\mathbf{H}^{(l)}\}^{\top}(\frac{1}{2}\mathbf{I} + \alpha\widetilde{\mathbf{L}}_{\mathcal{G}})\mathbf{H}^{(l)}\right), \\
\mathcal{L}_{2}(\mathbf{H}^{(l)}; \mathcal{X}) =  \frac{1}{2}\text{Tr}\left(\{\mathbf{H}^{(l)}\}^{\top}(\frac{1}{2}\mathbf{I} + \beta\widetilde{\mathbf{L}}_{\mathcal{X}})\mathbf{H}^{(l)}\right), 
\end{align}
where $\widetilde{\mathbf{L}}_{\mathcal{G}}$ is a graph Laplacian from the given adjacency matrix (e.g., $\widetilde{\mathbf{L}}_{\mathcal{G}}=\widetilde{\mathbf{L}}$), and $\widetilde{\mathbf{L}}_{\mathcal{X}}$ is a graph Laplacian calculated from the pairwise similarity of any two graph nodes.  
Hence, we devise a dual-regularizer, i.e., $\mathcal{L}(\mathbf{H}^{(l)})=\mathcal{L}_{1}(\mathbf{H}^{(l)};\mathcal{G})+\mathcal{L}_{2}(\mathbf{H}^{(l)};\mathcal{X})$, and if it is under the optimization framework (\ref{regularizer_framework}), then there yields the following updating rule
\begin{align}\label{ForwardUpdating}
\mathbf{H}^{(l)} = \sigma\left((\mathbf{I} + \alpha\widetilde{\mathbf{L}}_{\mathcal{G}} + \beta\widetilde{\mathbf{L}}_{\mathcal{X}})^{-1}\mathbf{H}^{(l-1)}\mathbf{\Theta}^{(l)}\right).
\end{align}
Since this method seeks to preserve both the topological and semantic structures for more accurate presentations, we call it tsGCN (i.e., \textbf{T}opological and \textbf{S}emantic regularized \textbf{GCN}).

Notably, the computational complexity of $(\mathbf{I} +  \alpha\widetilde{\mathbf{L}}_{\mathcal{G}} + \beta\widetilde{\mathbf{L}}_{\mathcal{X}})^{-1}$ is $\mathcal{O}(N^{3})$, which tends to be unaffordable in practical applications. 
To this end, a low-rank approximation is operated, i.e.,   $\alpha\widetilde{\mathbf{L}}_{\mathcal{G}} + \beta\widetilde{\mathbf{L}}_{\mathcal{X}} \approx \mathbf{W}\mathbf{V}^{\top}$, where $\mathbf{W}, \mathbf{V}\in\mathbb{R}^{N\times r}$ with $r \ll N$. 
This leads to the Woodbury matrix identity:
\begin{align}
 (\mathbf{I} +  \mathbf{W}\mathbf{V}^{\top})^{-1} =  \mathbf{I} - \mathbf{W}(\mathbf{I} +  \mathbf{V}^{\top}\mathbf{W})^{-1}\mathbf{V}^{\top}, 
\end{align}
of which the computational complexity costs $\mathcal{O}(N^{2})$. 
 
Given that the optimal $\mathbf{M}^{*}$ of the following problem 
\begin{align}\label{label_rank_approximation}
\min_{\mathbf{M}\in\mathbb{R}^{N\times N}:~\text{rank}(\mathbf{M})=r}||\mathbf{M} - (\alpha\widetilde{\mathbf{L}}_{\mathcal{G}} + \beta\widetilde{\mathbf{L}}_{\mathcal{X}})||_{\textsc{F}}^{2}
\end{align}
is attained at the $r$-truncated singular value decomposition of $\alpha\widetilde{\mathbf{L}}_{\mathcal{G}} + \beta\widetilde{\mathbf{L}}_{\mathcal{X}}$, i.e., $\mathbf{M}^{*} = \mathbf{U}\mathbf{\Sigma}\mathbf{U}^{\top}$, where 
$\mathbf{\Sigma}\in\mathbb{R}^{r\times r}$ is a diagonal matrix containing the $r$ largest singular values. 
An optimal $\{\mathbf{W}^{*}, \mathbf{V}^{*}\}$ to $\alpha\widetilde{\mathbf{L}}_{\mathcal{G}} + \beta\widetilde{\mathbf{L}}_{\mathcal{X}} \approx \mathbf{W}\mathbf{V}^{\top}$ can be given by an analytic form of $\mathbf{W}^{*}=\mathbf{V}^{*}=\mathbf{U}\mathbf{\Sigma}^{\frac{1}{2}}$. 

To obtain the optimum $\{\mathbf{W}^{*}, \mathbf{V}^{*}\}$, the iterative algorithm \cite{SunXu2019Neural} with $\mathcal{O}(N^{2})$ is leveraged as
\begin{equation}
   \mathbf{Z}^{(t+1)} \leftarrow (\alpha\widetilde{\mathbf{L}}_{\mathcal{G}} + \beta\widetilde{\mathbf{L}}_{\mathcal{X}})\mathbf{U}^{(t)}, 
\end{equation}
\begin{equation}
    \{\mathbf{U}^{(t+1)}, \mathbf{R}^{(t+1)}\} \leftarrow \text{QR} (\mathbf{Z}^{(t+1)}),
\end{equation}
where QR$(\cdot)$ denotes the QR-decomposition. 
Note that this algorithm can converge to the $r$ largest eigenvalues $\mathbf{R}^{(t+1)}$ and its corresponding eigenvectors $\mathbf{Z}^{(t+1)}$ when the iterative number $t$ is large enough. 
Finally, there will be $\mathbf{W}^{*} = \mathbf{V}^{*} = \mathbf{U}^{(t+1)}[\mathbf{R}^{(t+1)}]^{\frac{1}{2}}$. 

Gathering all analyses mentioned above, the procedure for tsGCN is summarized in Algorithm \ref{alg:Framework}. 

\section{Experiment}
This section will show tsGCN's effectiveness and efficiency via comprehensive experiments.

\subsection{Datasets}

Cora, Citeseer and Pubmed are citation networks, and CoraFull is a larger version of Cora;
ACM is a paper network, and BlogCatalog and Flickr are social networks;
UAI has been utilized for community detection.
The detailed statistics of the above eight public datasets are concluded in Table \ref{datasets}.

\subsection{Compared Methods}
Two types of methods are employed here for comparisons. Chebyshev \cite{DefferrardBV16}, GraphSAGE \cite{HamiltonYL17} and GAT \cite{VelickovicCCRLB18} are classical graph neural networks. GCN, SGC \cite{WuJrZhang2019Simplifying}, APPNP \cite{KlicperaBojchevski2019Predict}, JKNet \cite{XuLiTian2018Representation}, DAGNN \cite{LiuGaoJi2020Towards}, GNN-LF and GNN-HF \cite{ZhuWang2021Interpreting} are selected as state-of-the-art GCN variants.


\begin{figure*}[!ht]
    \centering
	\begin{subfigure}[Runtime\label{fig:a}]{
	    \centering
		\includegraphics[width=0.345\textwidth]{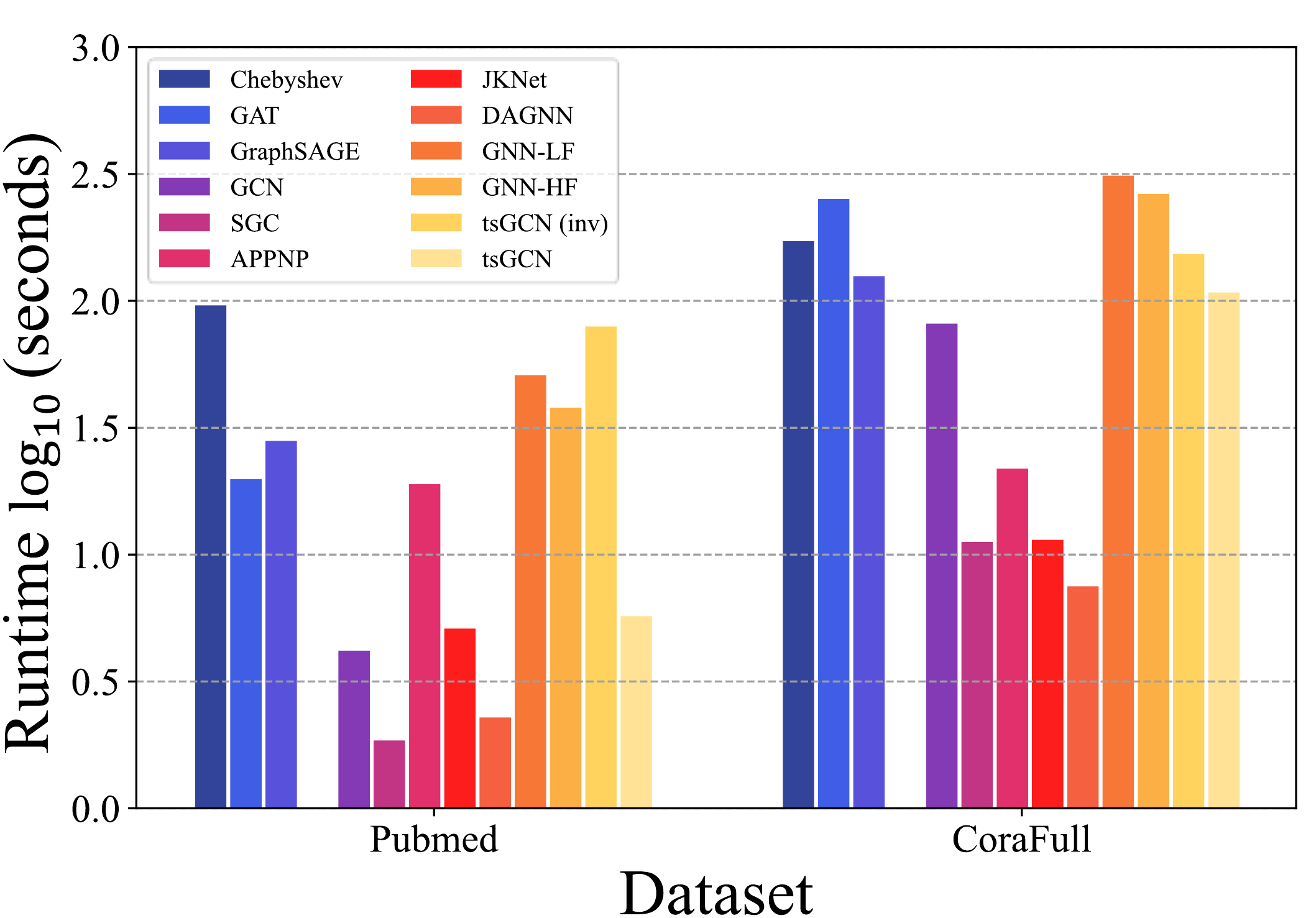}}
	\end{subfigure}\hspace{20mm} 
	\begin{subfigure}[Classification Accuracy\label{fig:b}]{
	    \centering
		\includegraphics[width=0.345\textwidth]{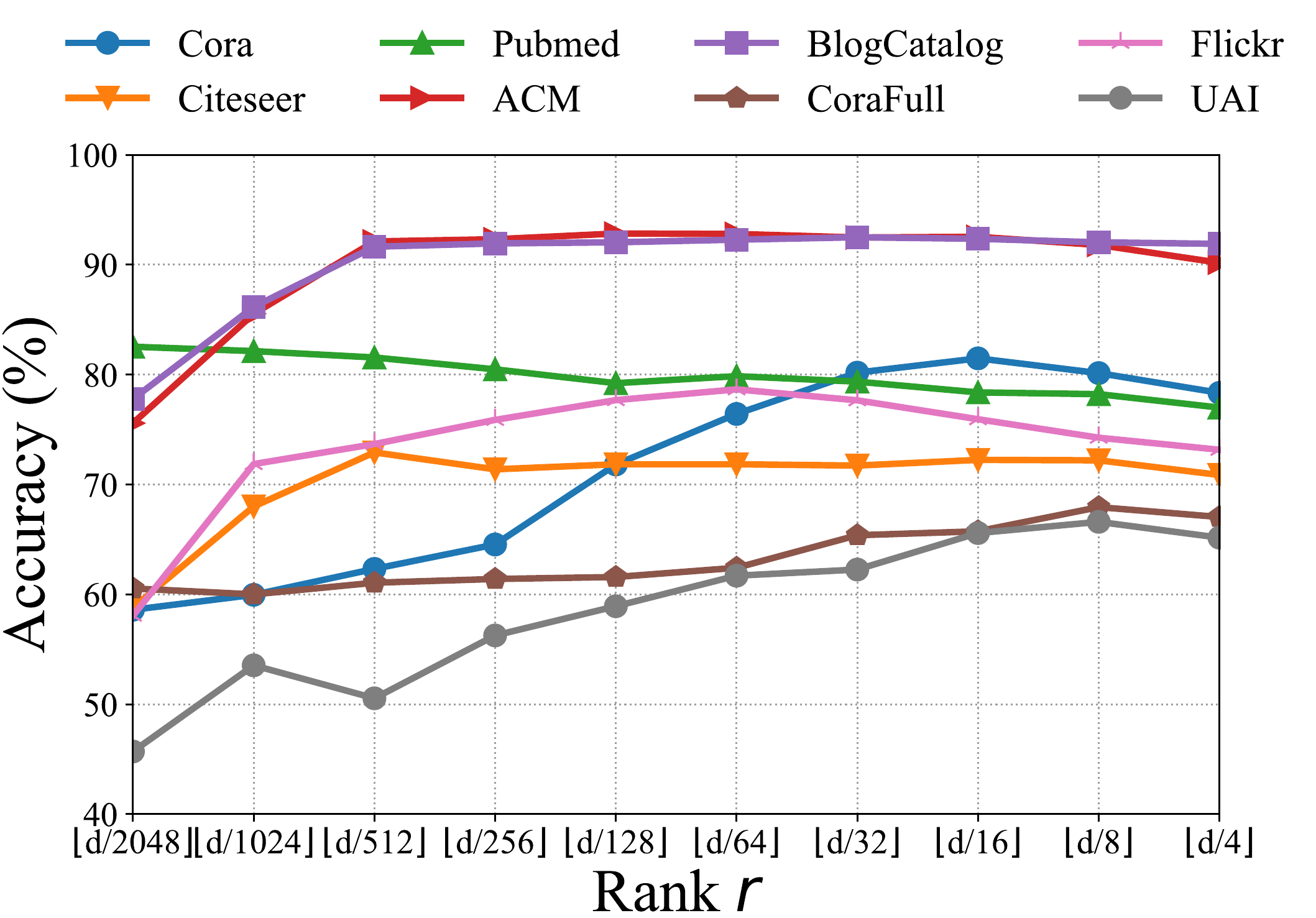}}
	\end{subfigure}
	\caption{(a) All methods' runtime on two large datasets. (b) The classification accuracy of tsGCN w.r.t. ($\alpha$, $\beta$) on all datasets.}
	\label{Runtime}
\end{figure*}


\begin{figure*}[ht]
	\centering
	\subfigbottomskip=-3pt
	\subfigcapskip=-1.5pt
	\begin{subfigure}[Cora]{
	    \centering
		\includegraphics[width=0.2\textwidth]{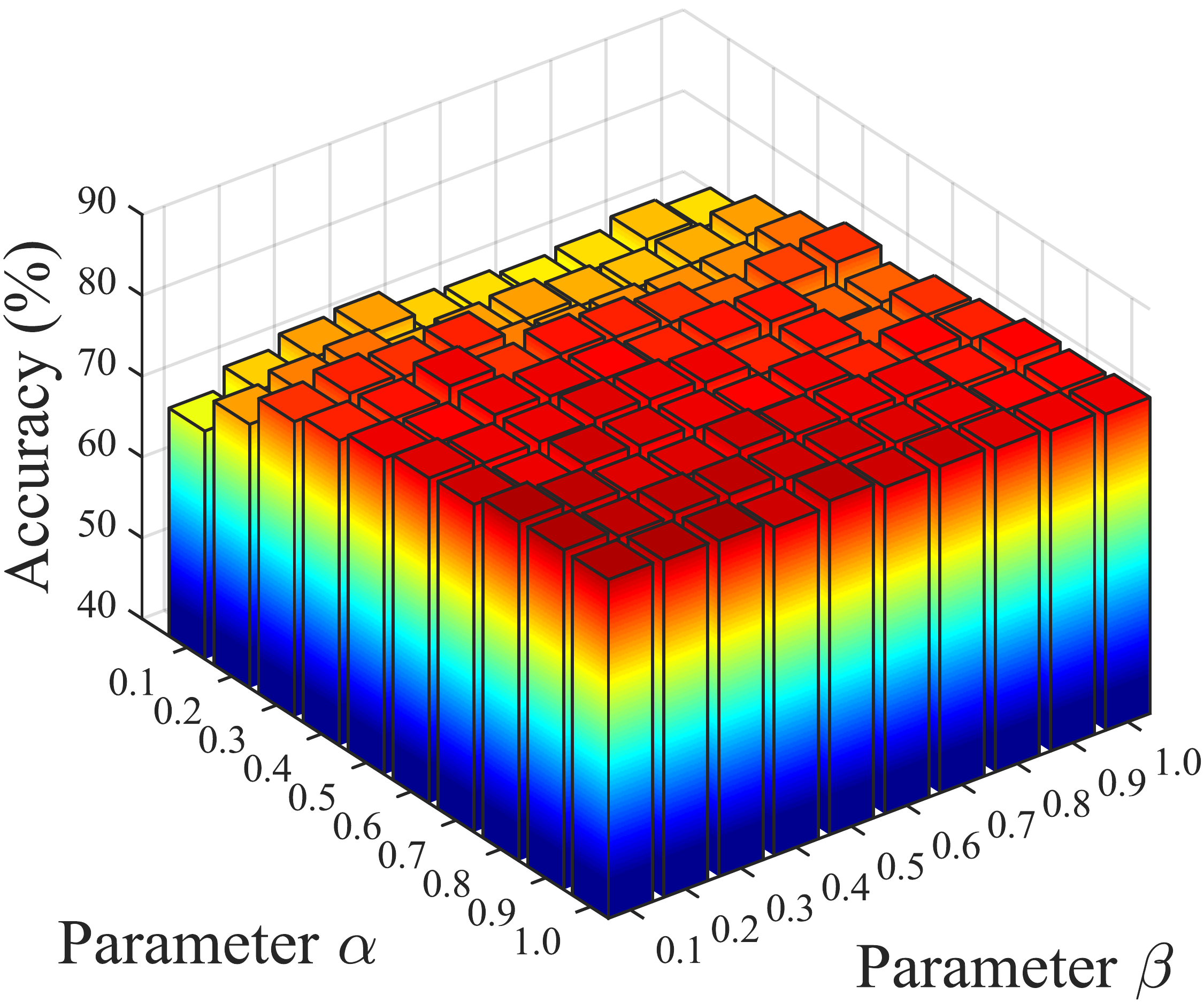}}
	\end{subfigure}\hspace{7mm}
	\begin{subfigure}[Citeseer]{
	    \centering
		\includegraphics[width=0.2\textwidth]{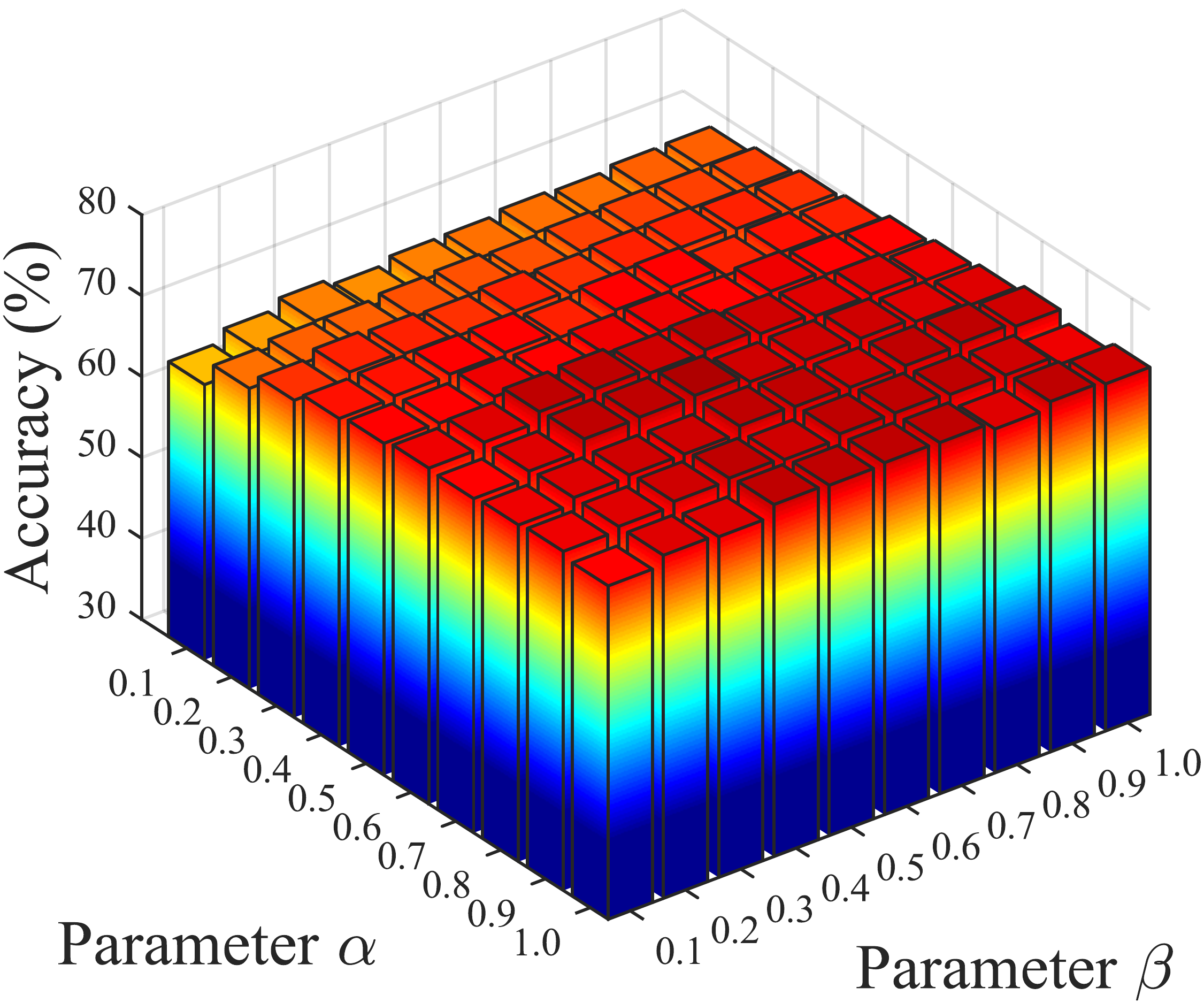}}
	\end{subfigure}\hspace{7mm}
	\begin{subfigure}[Pubmed]{
	    \centering
		\includegraphics[width=0.2\textwidth]{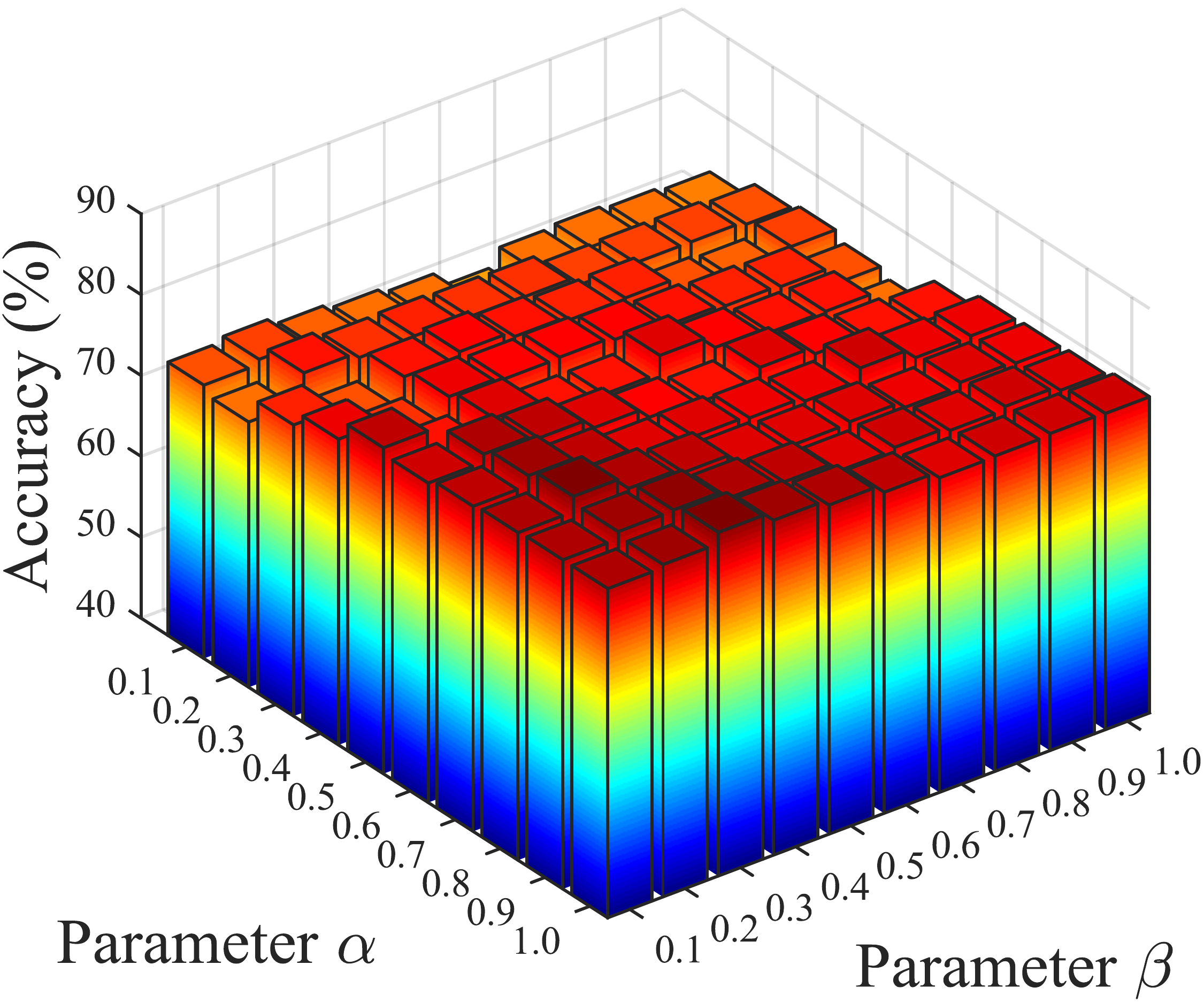}}
	\end{subfigure}\hspace{7mm}
	\begin{subfigure}[ACM]{
	    \centering
		\includegraphics[width=0.2\textwidth]{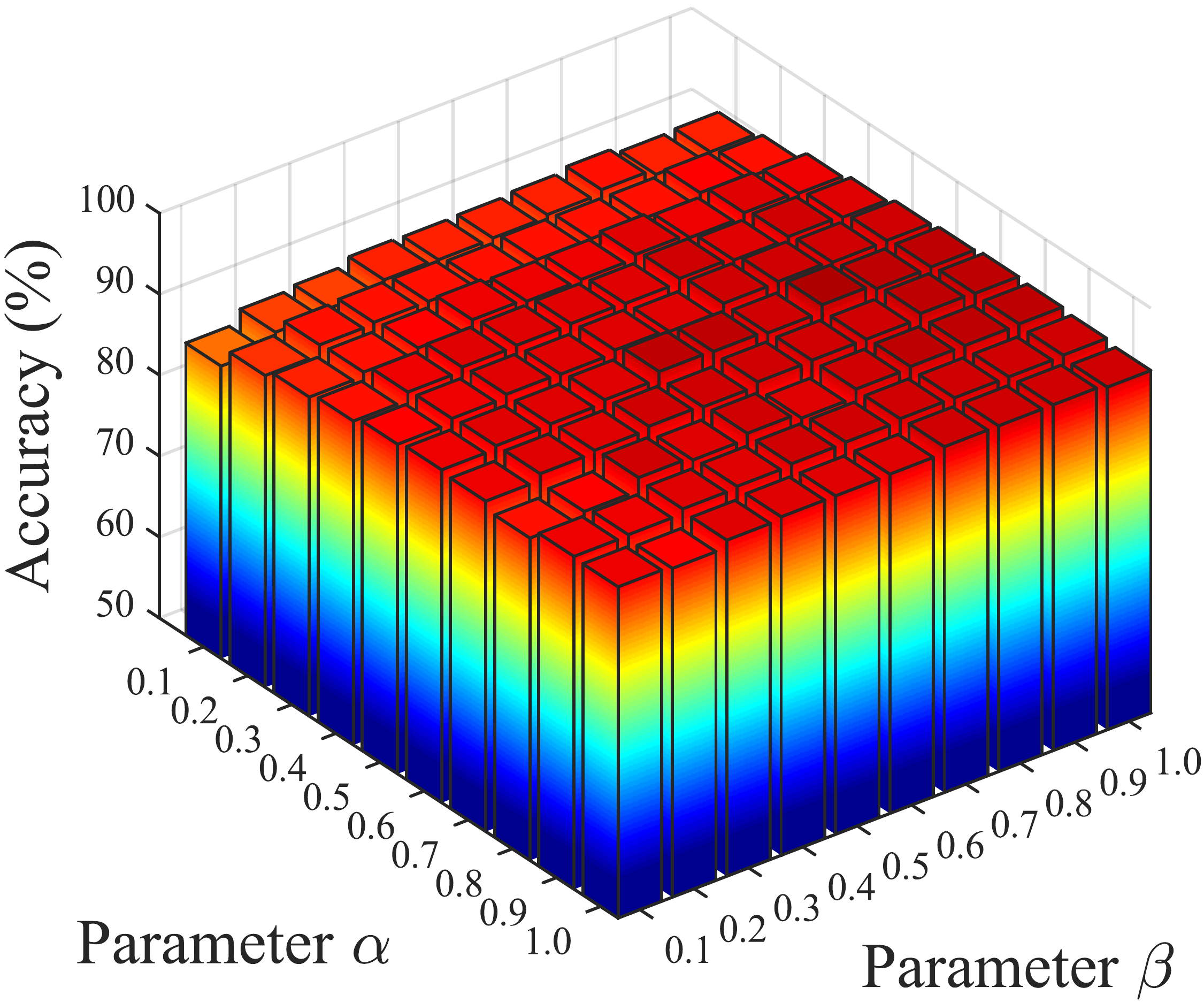}}
	\end{subfigure}\\
	\begin{subfigure}[BlogCatalog]{
	    \centering
		\includegraphics[width=0.2\textwidth]{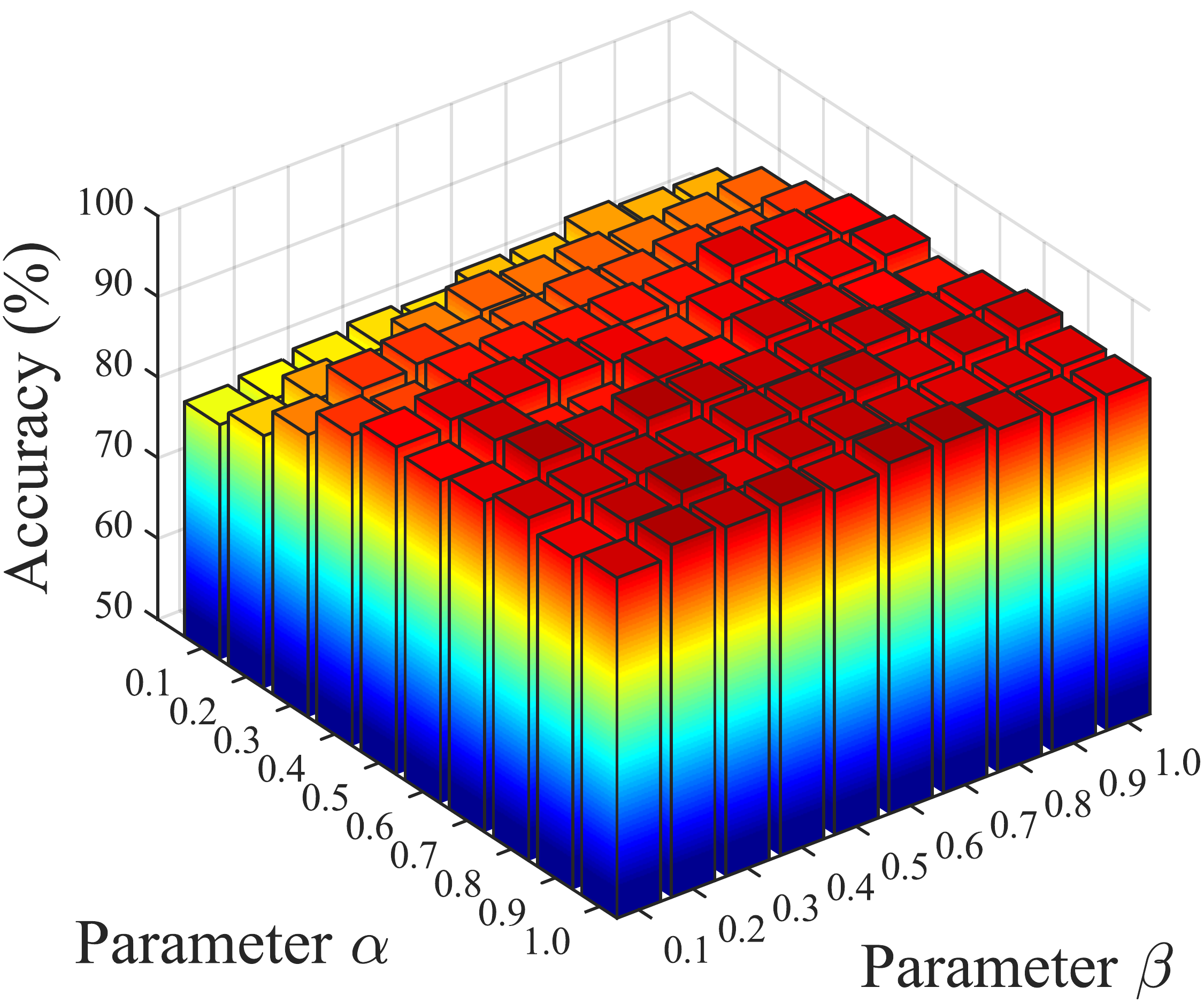}}
	\end{subfigure}\hspace{7mm}
	\begin{subfigure}[CoraFull]{
	    \centering
		\includegraphics[width=0.2\textwidth]{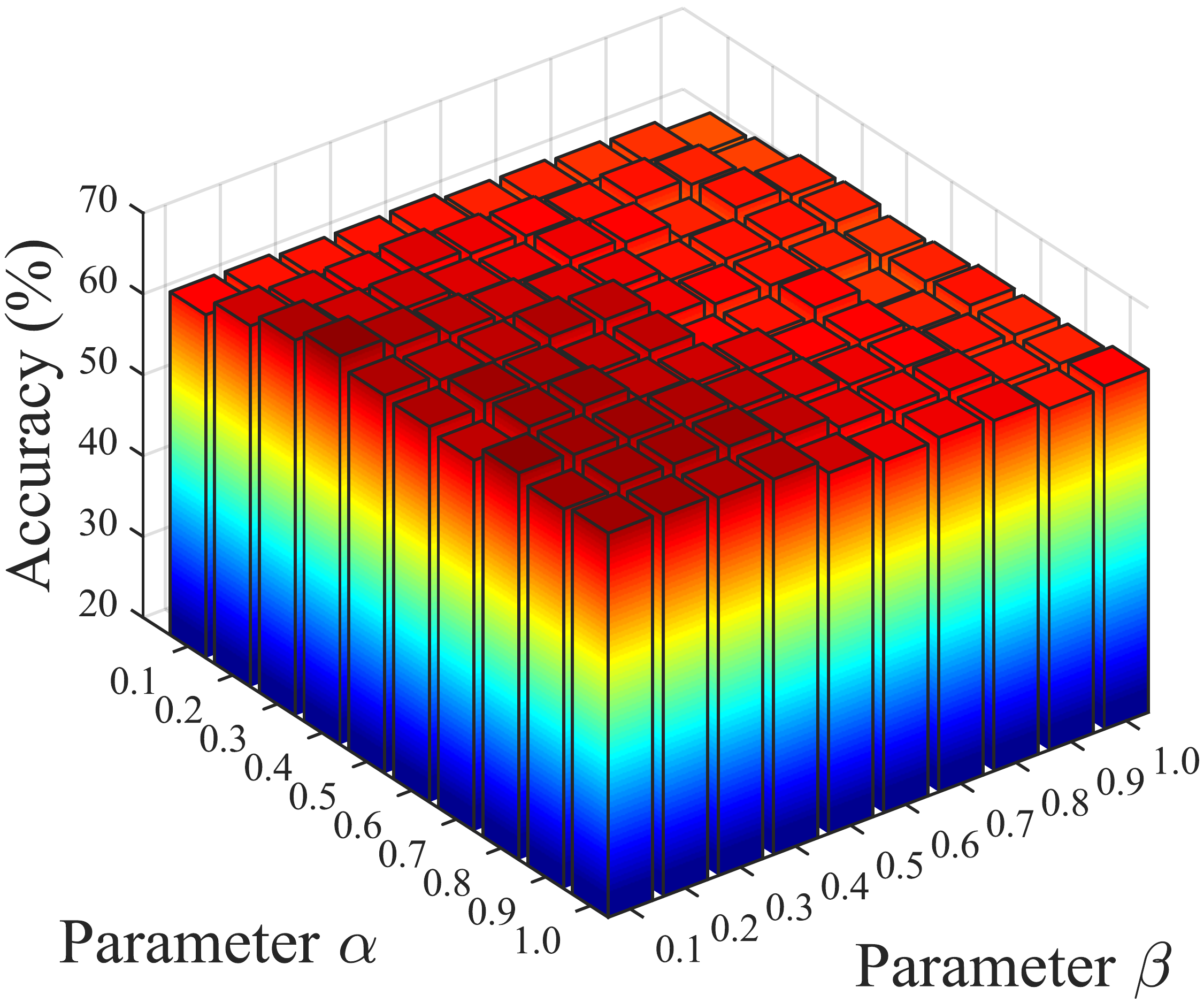}}
	\end{subfigure}\hspace{7mm}
	\begin{subfigure}[Flickr]{
	    \centering
		\includegraphics[width=0.2\textwidth]{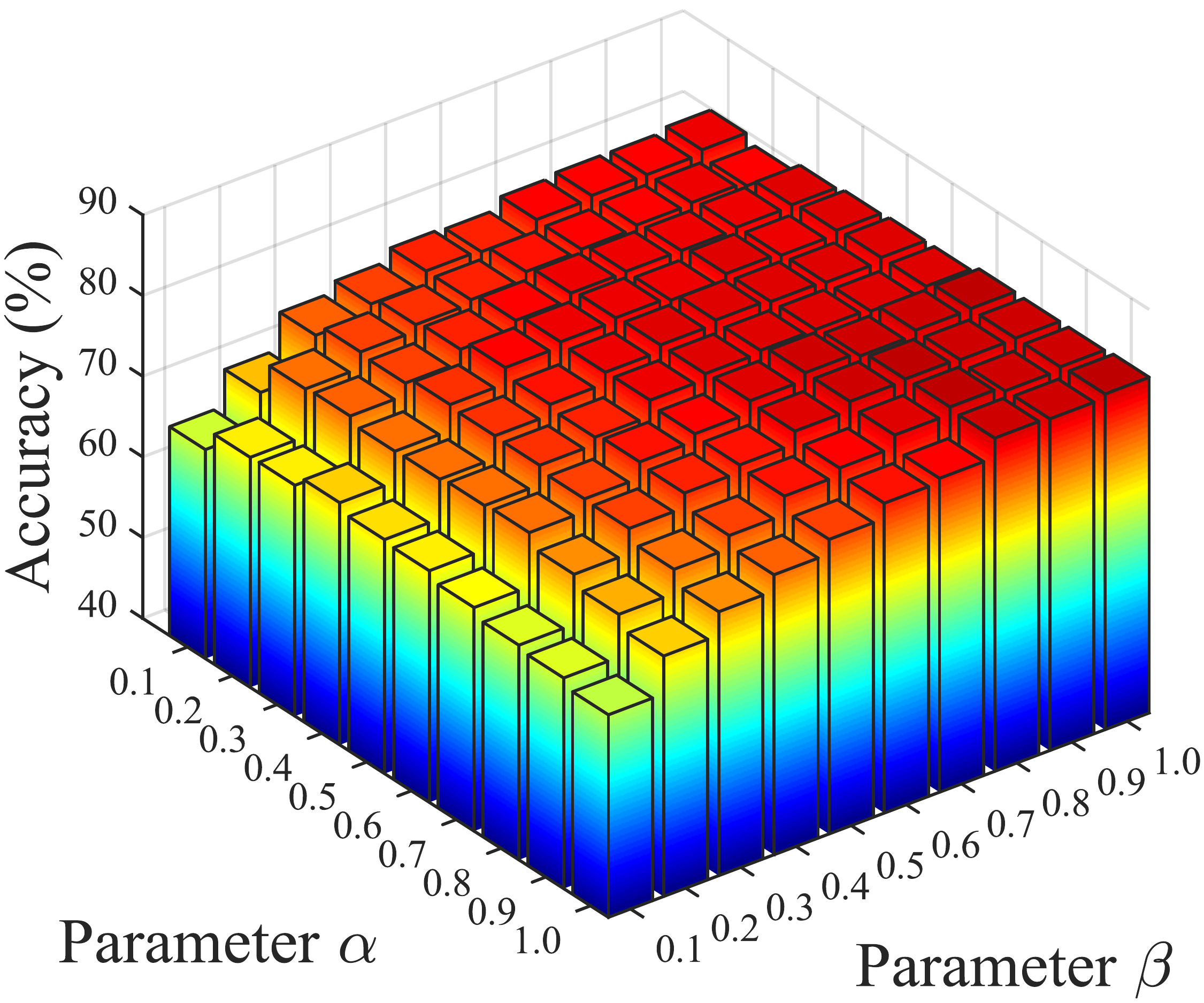}}
	\end{subfigure}\hspace{7mm}
	\begin{subfigure}[UAI]{
	    \centering
		\includegraphics[width=0.2\textwidth]{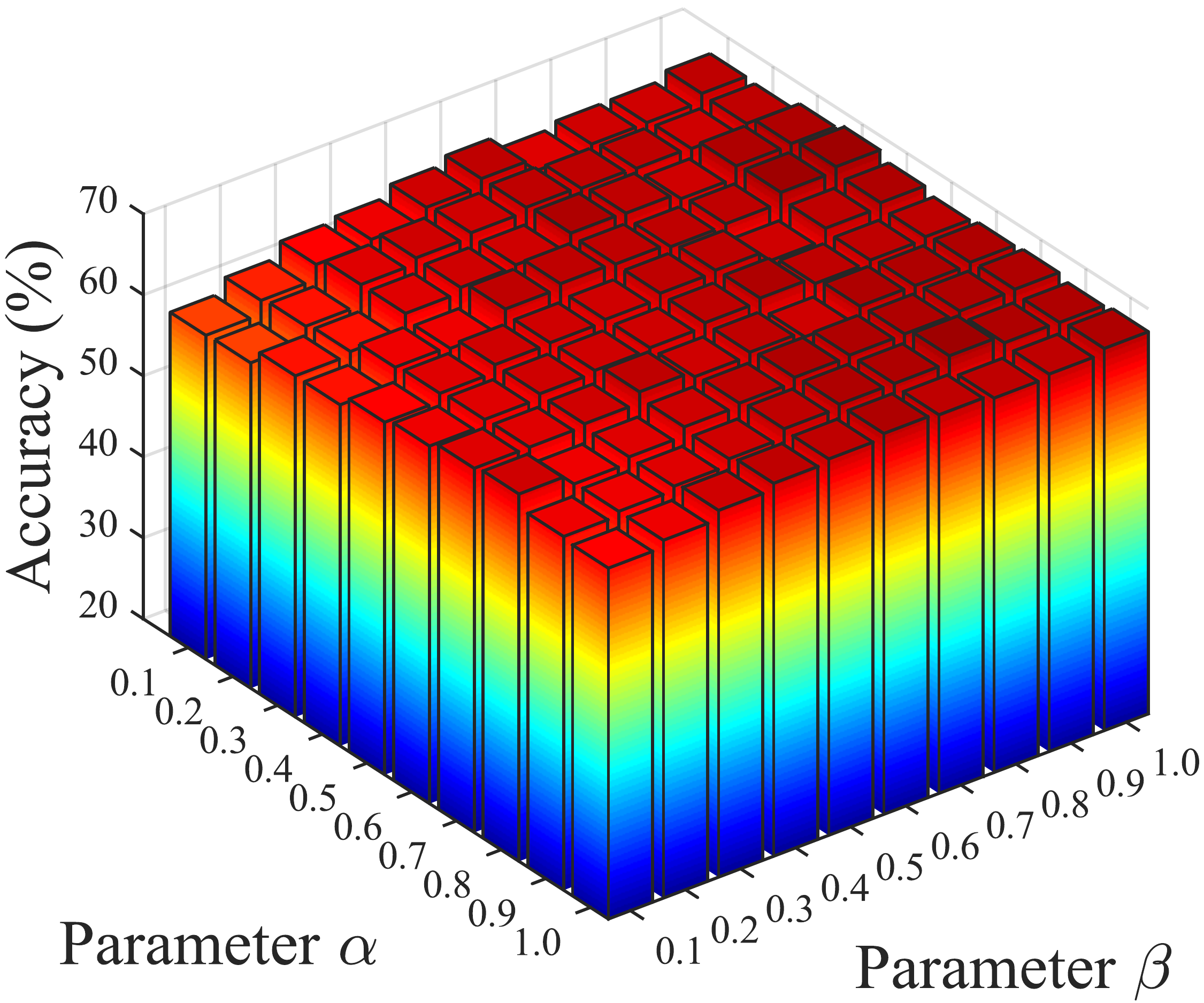}}
	\end{subfigure}
	\caption{The classification accuracy of tsGCN w.r.t. hyperparameters $\alpha$ and $\beta$ on all datasets. 
 	}
	\label{Sensitivity}
\end{figure*}

\subsection{Parameter Setups}
For all experiments, we randomly split samples into a small set of $20$ labeled samples per class for training, a set of $500$ samples for validating and a set of $1,000$ samples for testing. 
In terms of the ten baseline methods, all their configurations are set as the default in their original papers.
With respect to tsGCN, following the vanilla GCN, the learning rate, weight decay and the size of hidden units are set to $1 \times 10^{-2}$, $5 \times 10^{-4}$ and $32$, respectively.
The hyperparameters $\alpha$ and $\beta$ are selected in $\{0.1, 0.2, \ldots, 1.0\}$ for different datasets, and $r$ is chosen in $\{\lfloor \frac{d}{2^{11}} \rfloor, \lfloor \frac{d}{2^{10}} \rfloor, \ldots, \lfloor \frac{d}{2^{3}} \rfloor\}$, where $d$ is the feature dimension of the original data.

\subsection{Semi-supervised Classification}

\textbf{Performance Comparisons.} The semi-supervised classification task is conducted on selected datasets, whose results are recorded in Table \ref{Performance}.
Specifically, we compare our tsGCN with the ten baseline methods in terms of both accuracy and F1-score, marking the best and second-best results on each dataset.
Note that tsGCN (inv) denotes tsGCN without the low-rank approximation, which directly calculates the matrix inverse in Eq.~(\ref{ForwardUpdating}). From Table \ref{Performance}, we have the following observations:
\begin{itemize}
    \item tsGCN achieves the best performances on most datasets, and is only slightly inferior to the JKNet method on the smallest Cora dataset.
    
    \item tsGCN yields higher scores than JKNet and APPNP, especially on Pubmed, CoraFull, BlogCatalog, and Flickr, where the first two are relatively large datasets and the latter two have dense edges.
    tsGCN even outperforms the second-best approach GNN-HF by about $20\%$ on Flickr.
\end{itemize}

It is worth mentioning that tsGCN utilizes high-order information by the infinite-order graph convolution, and JKNet and APPNP also develop different techniques for the same goal.
Hence, the advantage of tsGCN over APPNP and JKNet implies that the infinite-order graph convolution implemented by the low-rank approximation not only requires less computational complexity, but also effectively captures high-order neighborhood information and filters significant noises.

\textbf{Runtime Comparisons.}
This section collects the training time (i.e., runtime) of all methods on two selected large datasets, i.e., Pubmed and CoraFull, as exhibited in Fig.~\ref{fig:a}: the first three columns correspond to classical GNNs, while the rest are GCNs.
From Fig.~\ref{fig:a}, we find that tsGCN takes much less runtime than Chebyshev, GAT, and GraphSAGE; however, it performs moderately well among the state-of-the-art GCN variants.
Specifically, tsGCN is (1) inferior to SGC, JKNet, and DAGNN; (2) well-matched with the original GCN; (3) but advantageous over the recently proposed GNN-LF and GNN-HF.

\begin{figure*}[ht]
	\centering
	\includegraphics[width=0.99\textwidth]{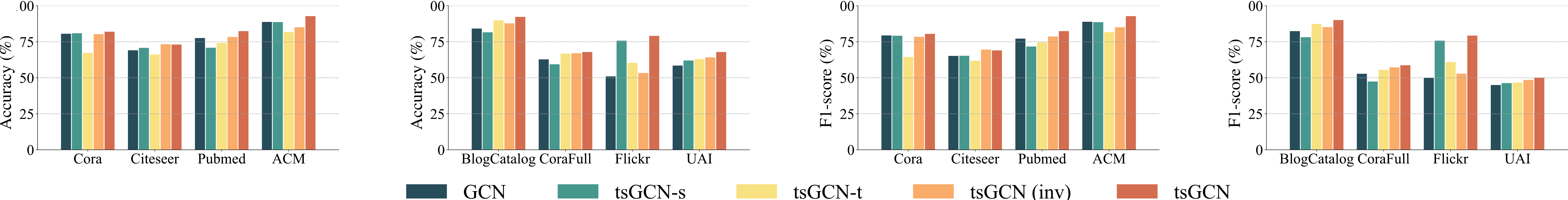}\\
	\caption{{Accuracy} and {F1-score} of tsGCN and its variants on all datasets.}
	\label{Ablation_2}
\end{figure*}

\begin{figure*}[ht]
	\centering
	\subfigbottomskip=-2pt
	\subfigcapskip=-1.5pt
	\begin{subfigure}[Chebyshev]{
	    \centering
		\includegraphics[width=0.13\textwidth]{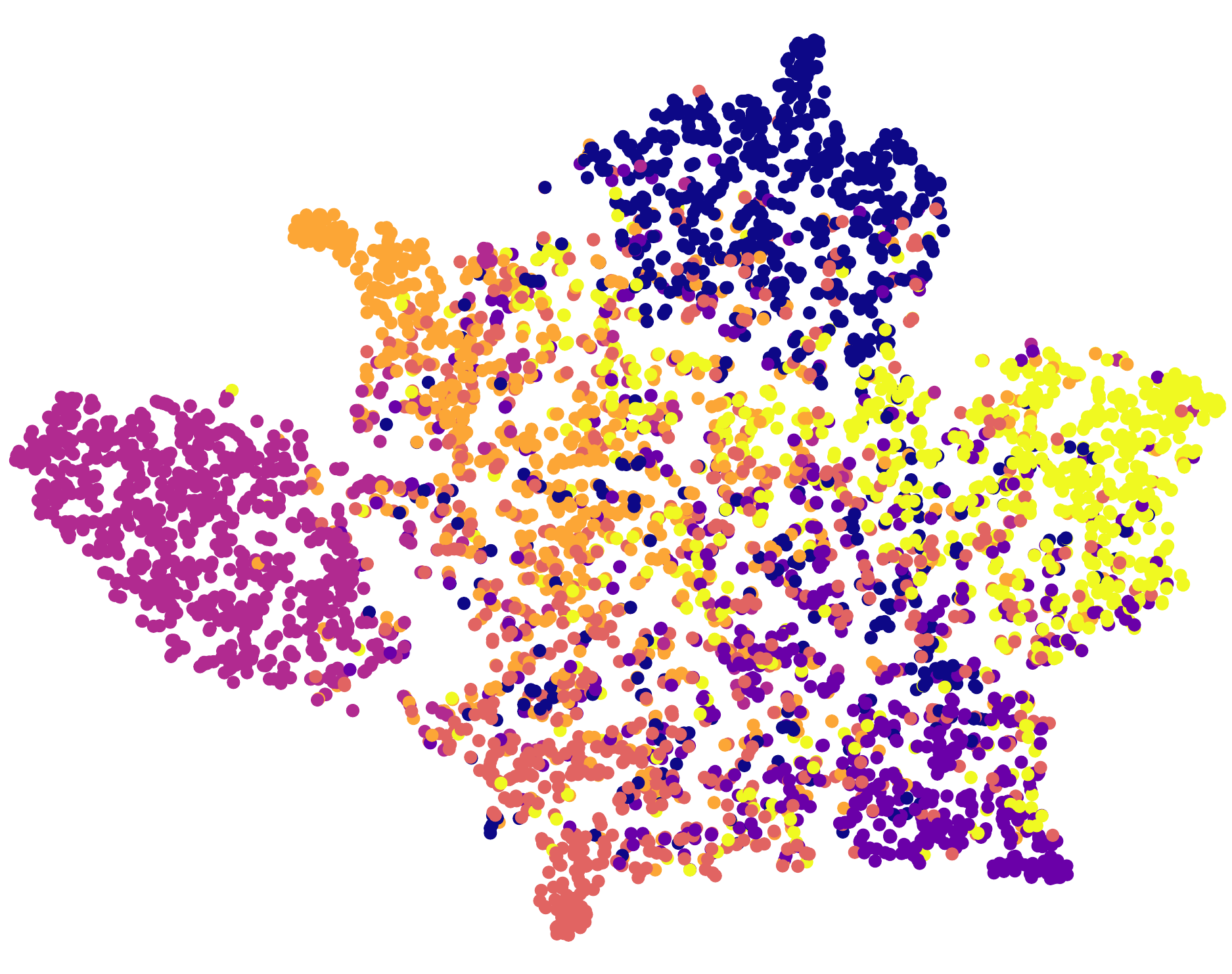}}
	\end{subfigure}\hspace{3.5mm}
	\begin{subfigure}[GraphSAGE]{
	    \centering
		\includegraphics[width=0.13\textwidth]{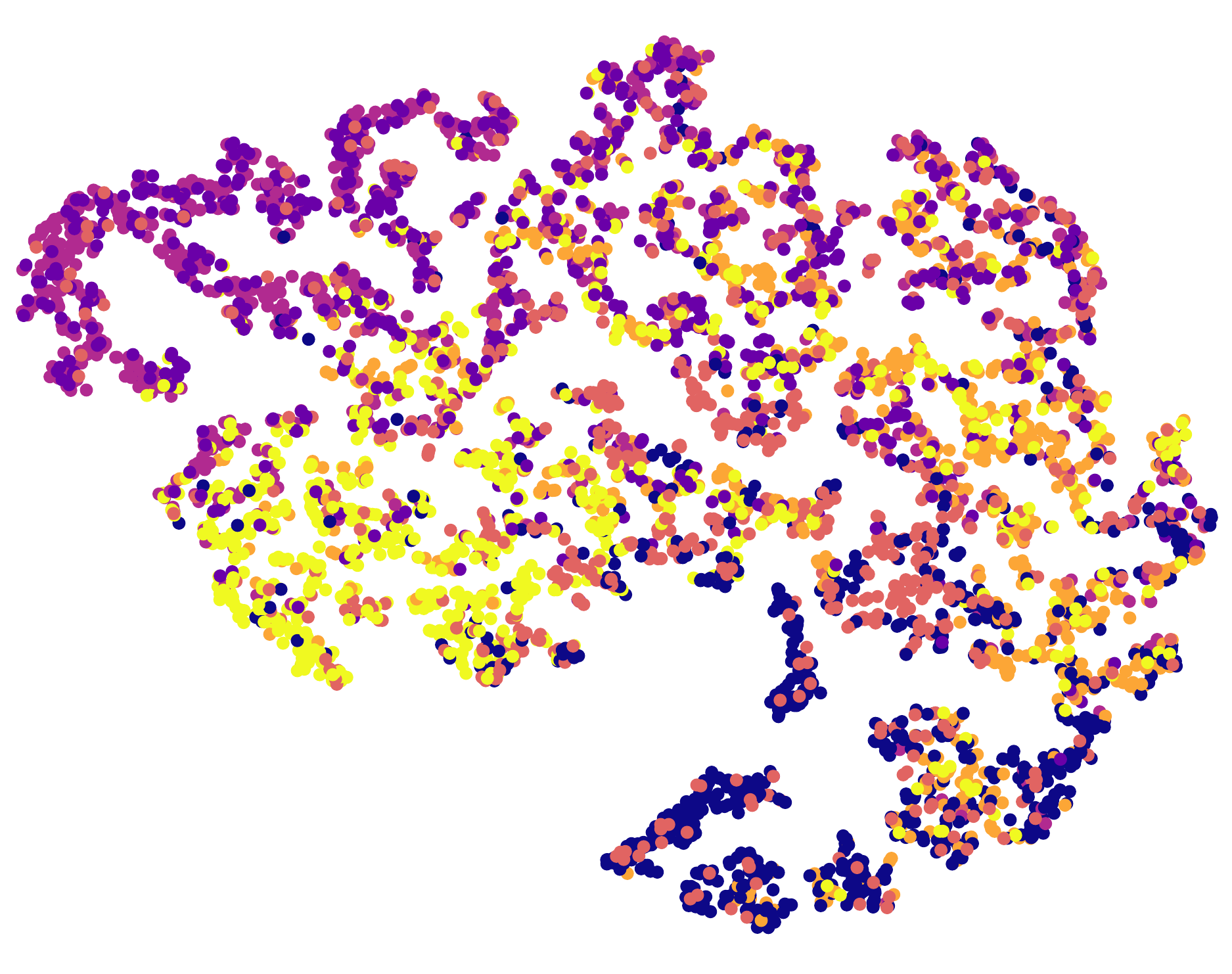}}
	\end{subfigure}\hspace{3.5mm}
	\begin{subfigure}[GAT]{
	    \centering
		\includegraphics[width=0.13\textwidth]{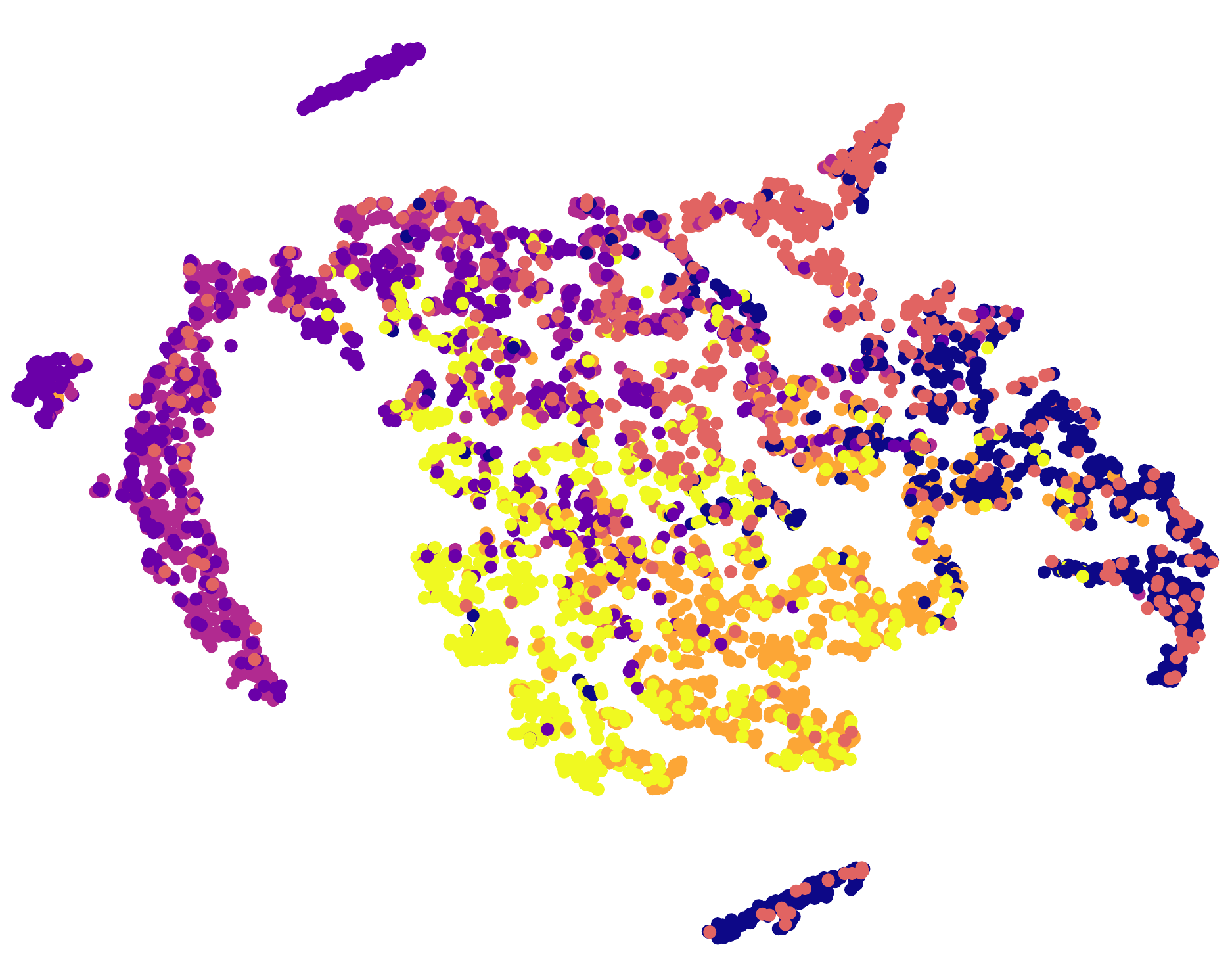}}
	\end{subfigure}\hspace{3.5mm}
	\begin{subfigure}[GCN]{
	    \centering
		\includegraphics[width=0.13\textwidth]{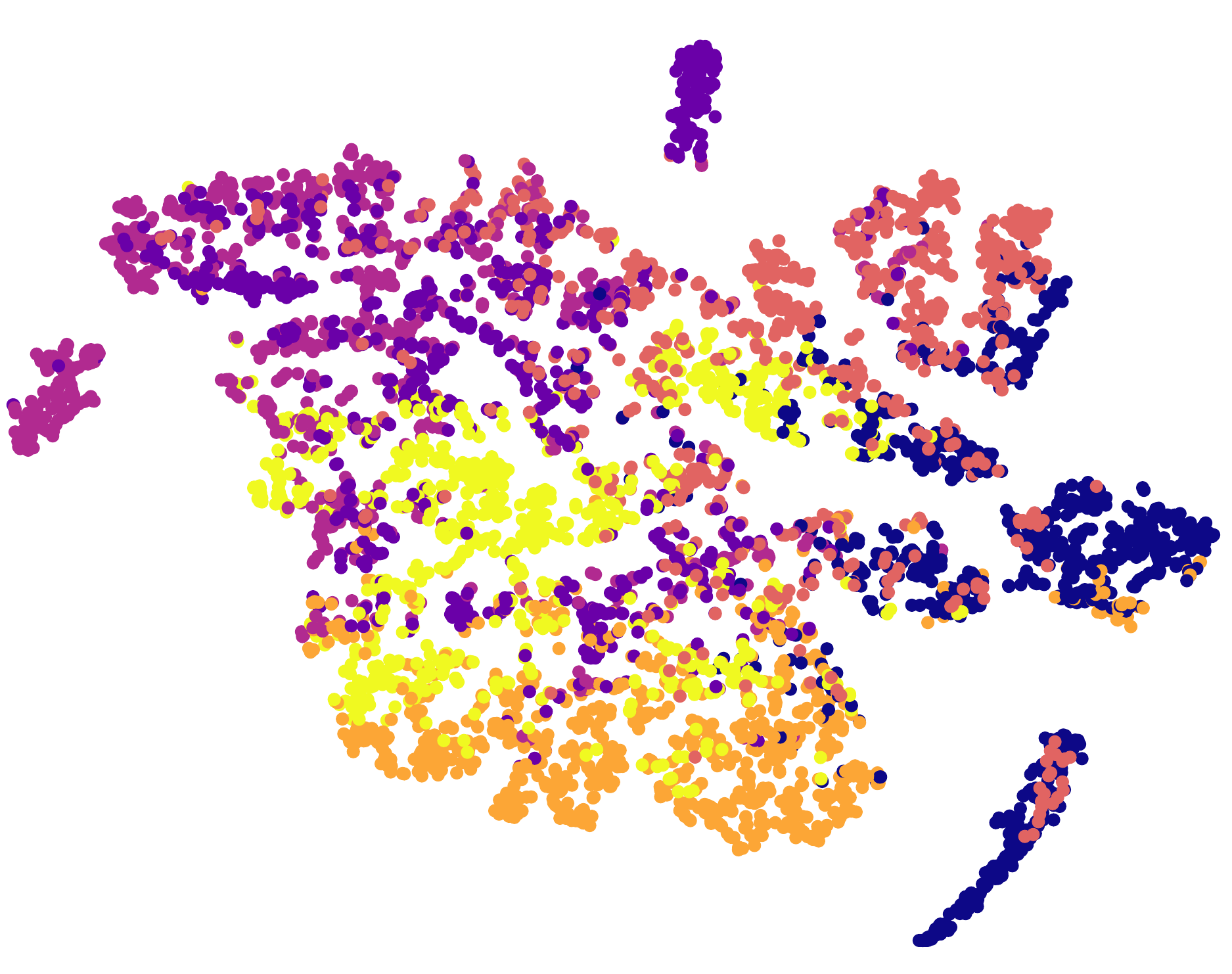}}
	\end{subfigure}\hspace{3.5mm}
	\begin{subfigure}[SGC]{
	    \centering
		\includegraphics[width=0.13\textwidth]{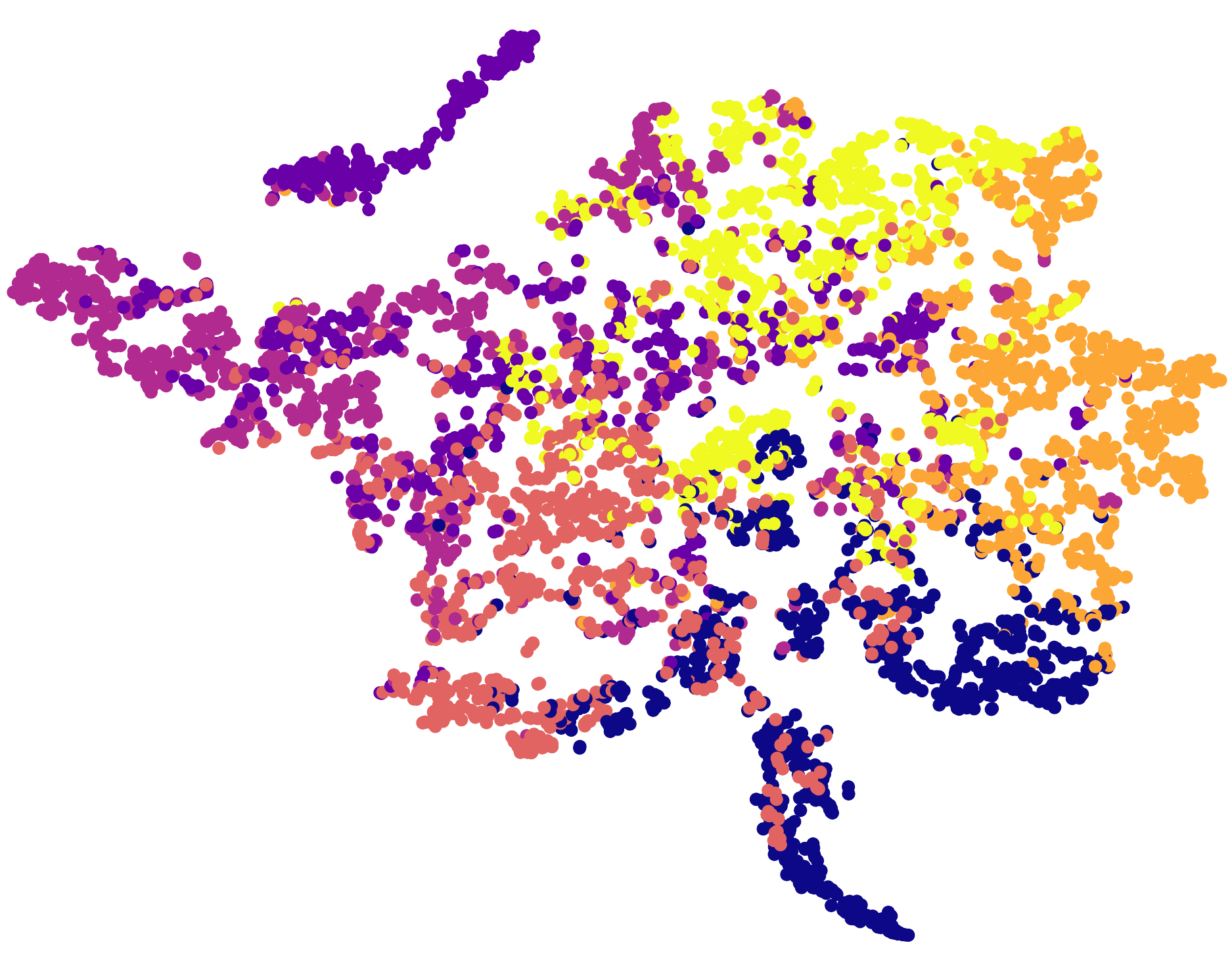}}
	\end{subfigure}\hspace{3.5mm}
	\begin{subfigure}[APPNP]{
	    \centering
		\includegraphics[width=0.13\textwidth]{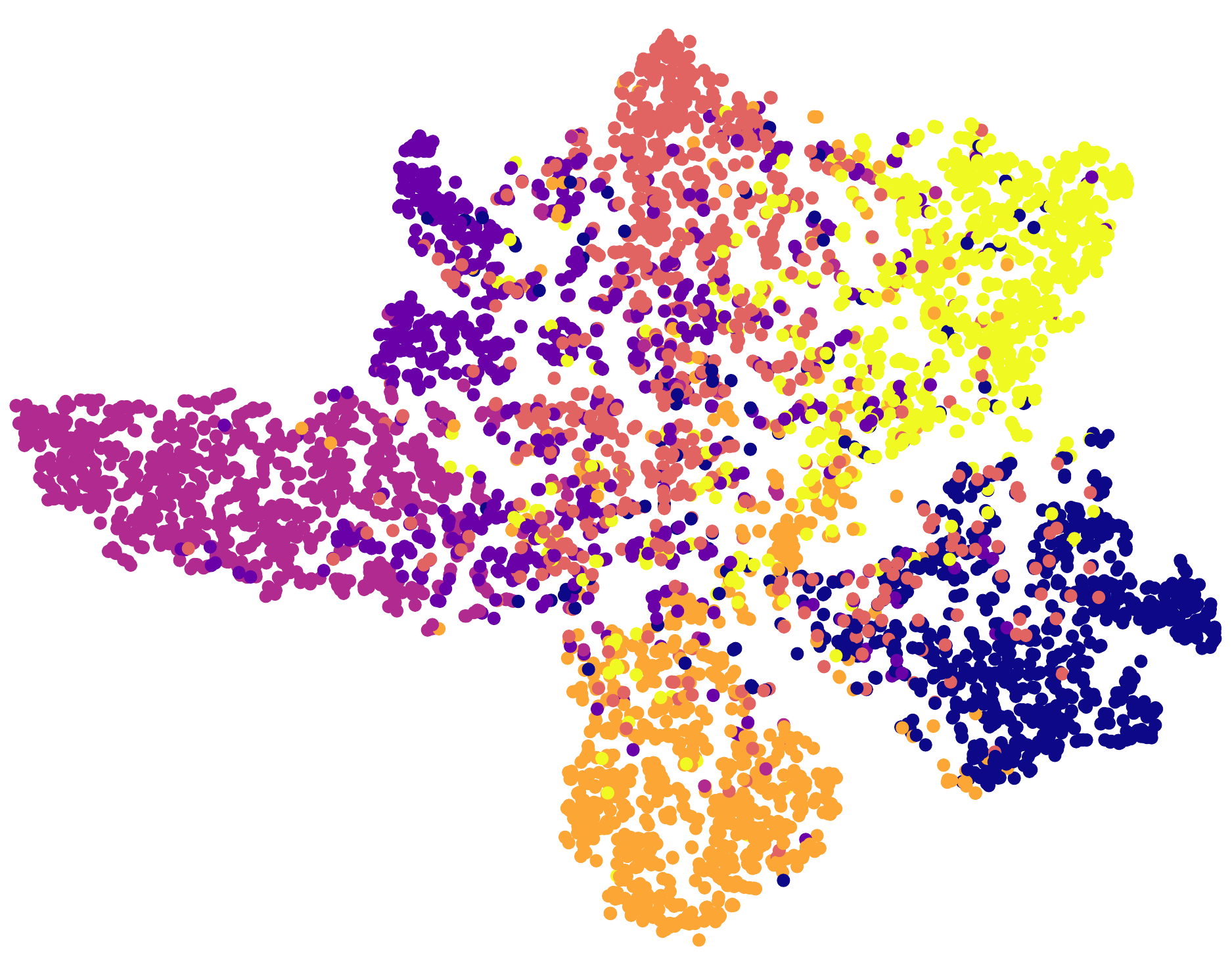}}
	\end{subfigure}\\
	\begin{subfigure}[JKNet]{
	    \centering
		\includegraphics[width=0.13\textwidth]{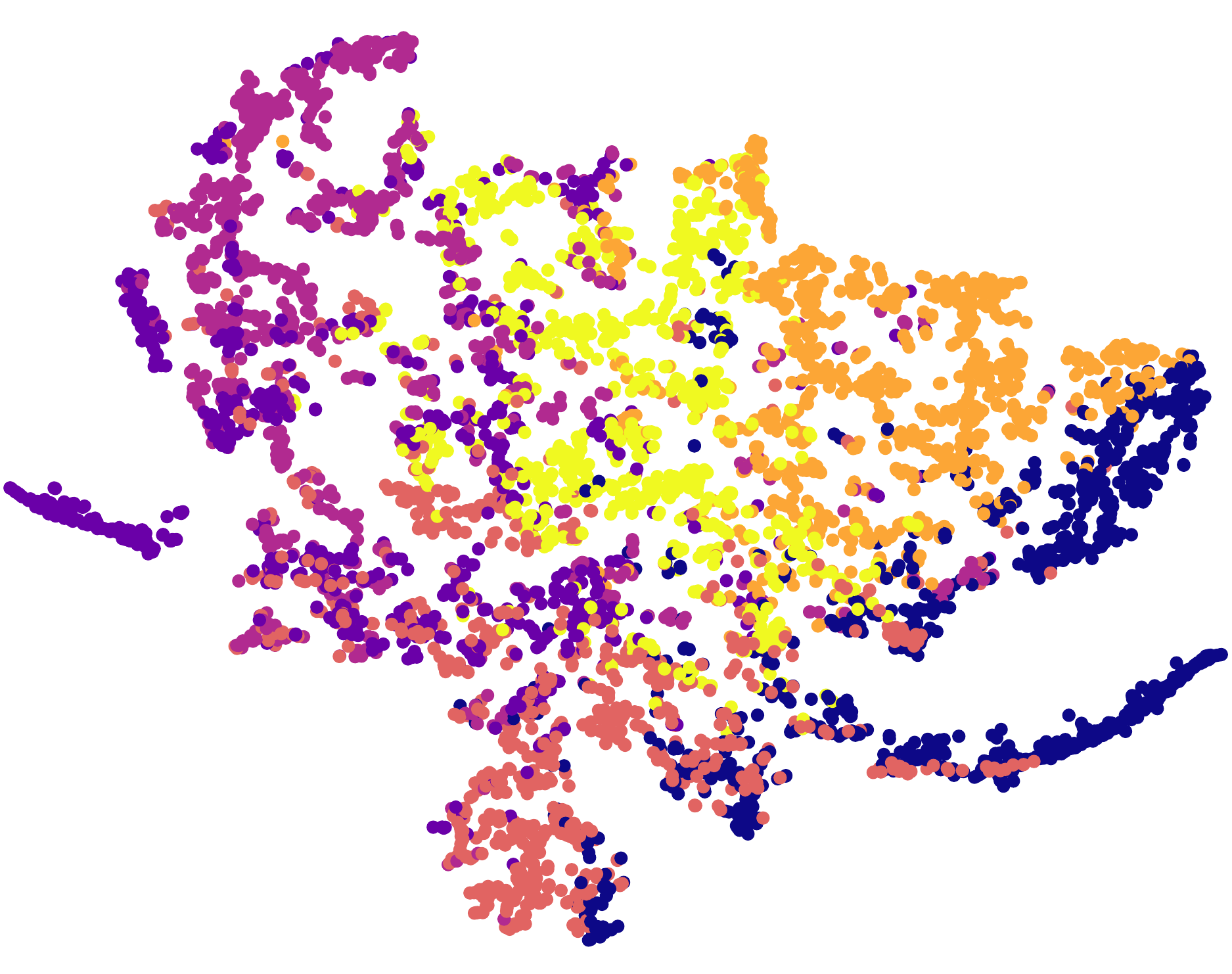}}
	\end{subfigure}\hspace{3.5mm}
	\begin{subfigure}[DAGNN]{
	    \centering
		\includegraphics[width=0.13\textwidth]{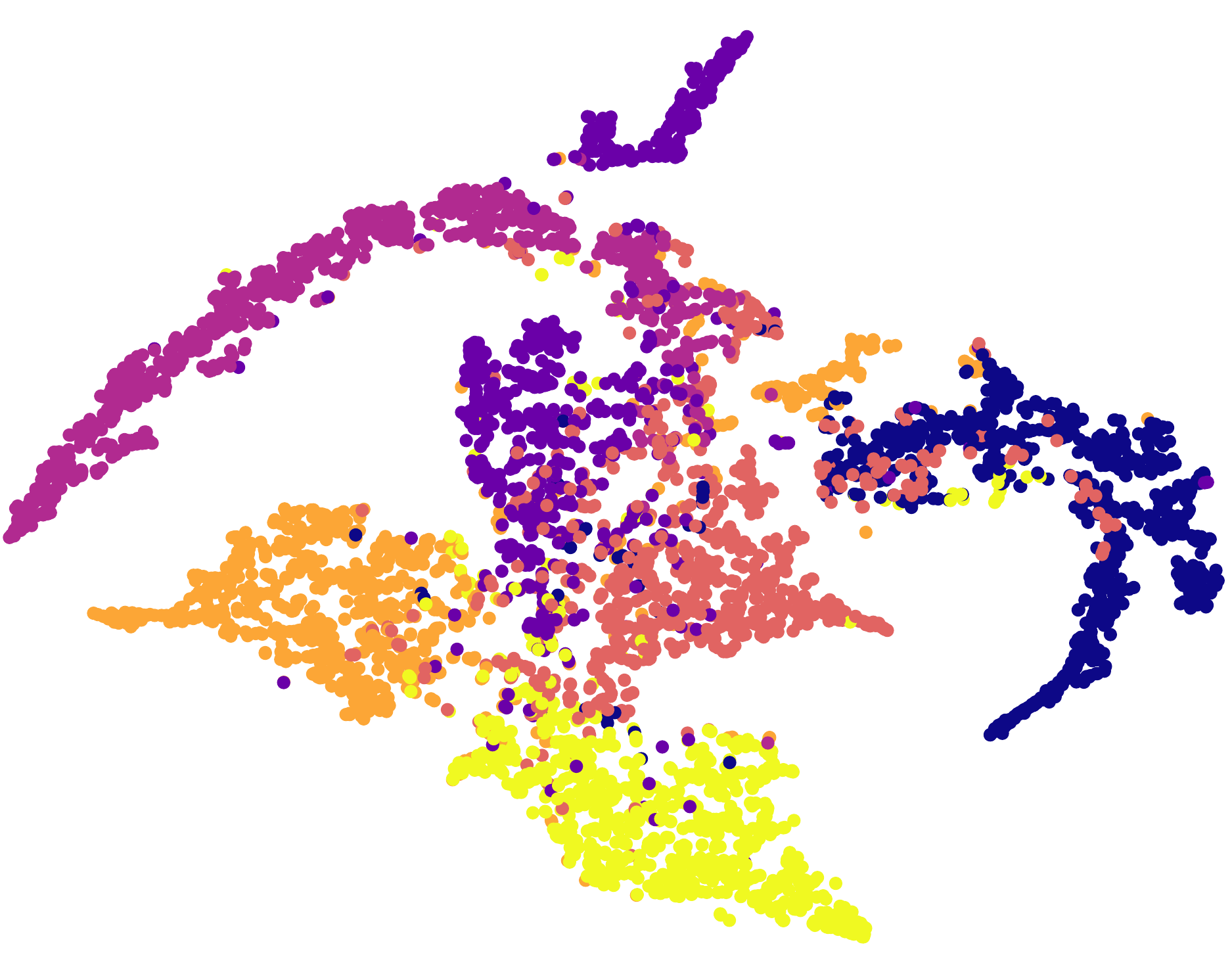}}
	\end{subfigure}\hspace{3.5mm}
	\begin{subfigure}[GNN-LF]{
	    \centering
		\includegraphics[width=0.13\textwidth]{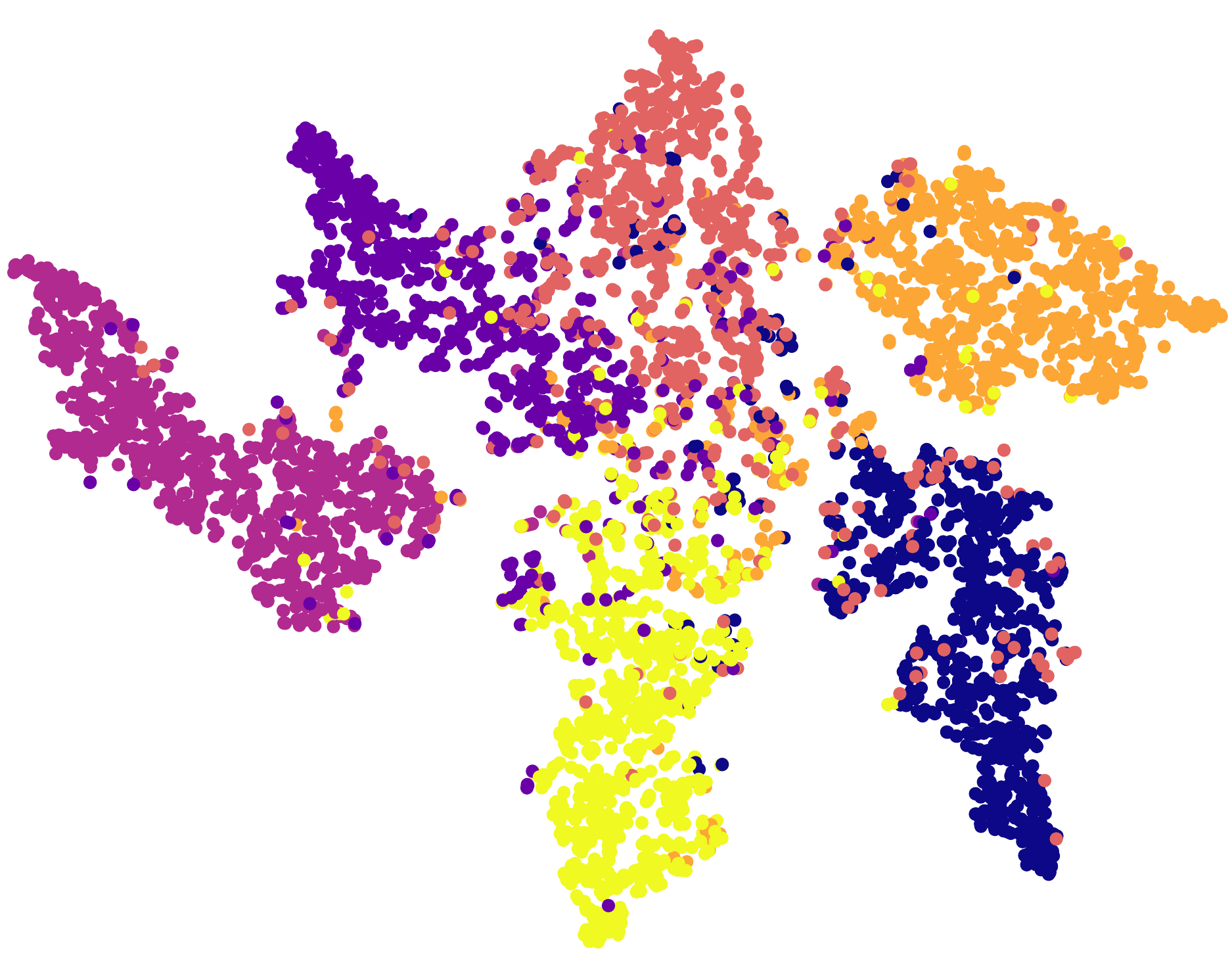}}
	\end{subfigure}\hspace{3.5mm}
	\begin{subfigure}[GNN-HF]{
	    \centering
		\includegraphics[width=0.13\textwidth]{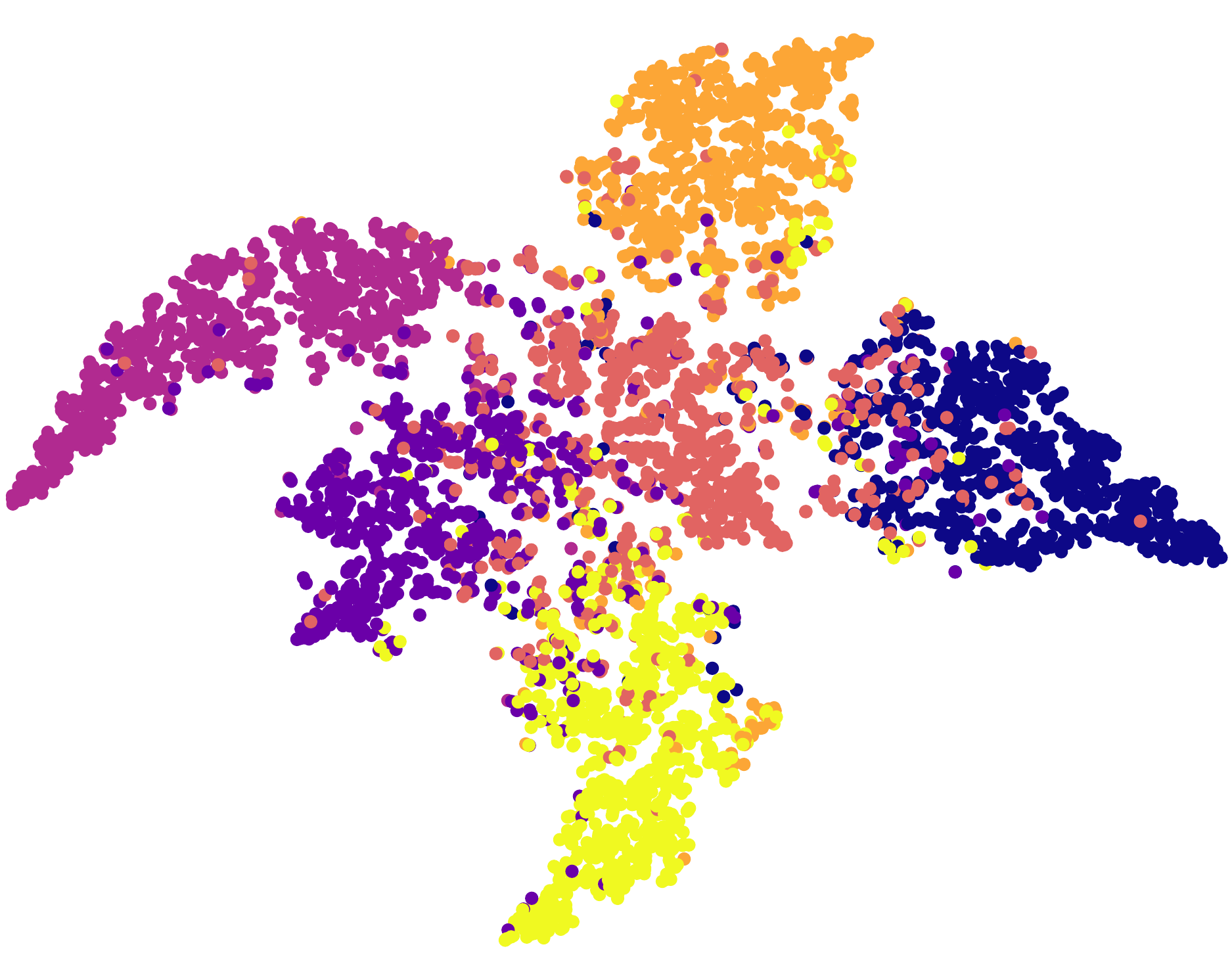}}
	\end{subfigure}\hspace{3.5mm}
	\begin{subfigure}[tsGCN (inv)]{
	    \centering
		\includegraphics[width=0.13\textwidth]{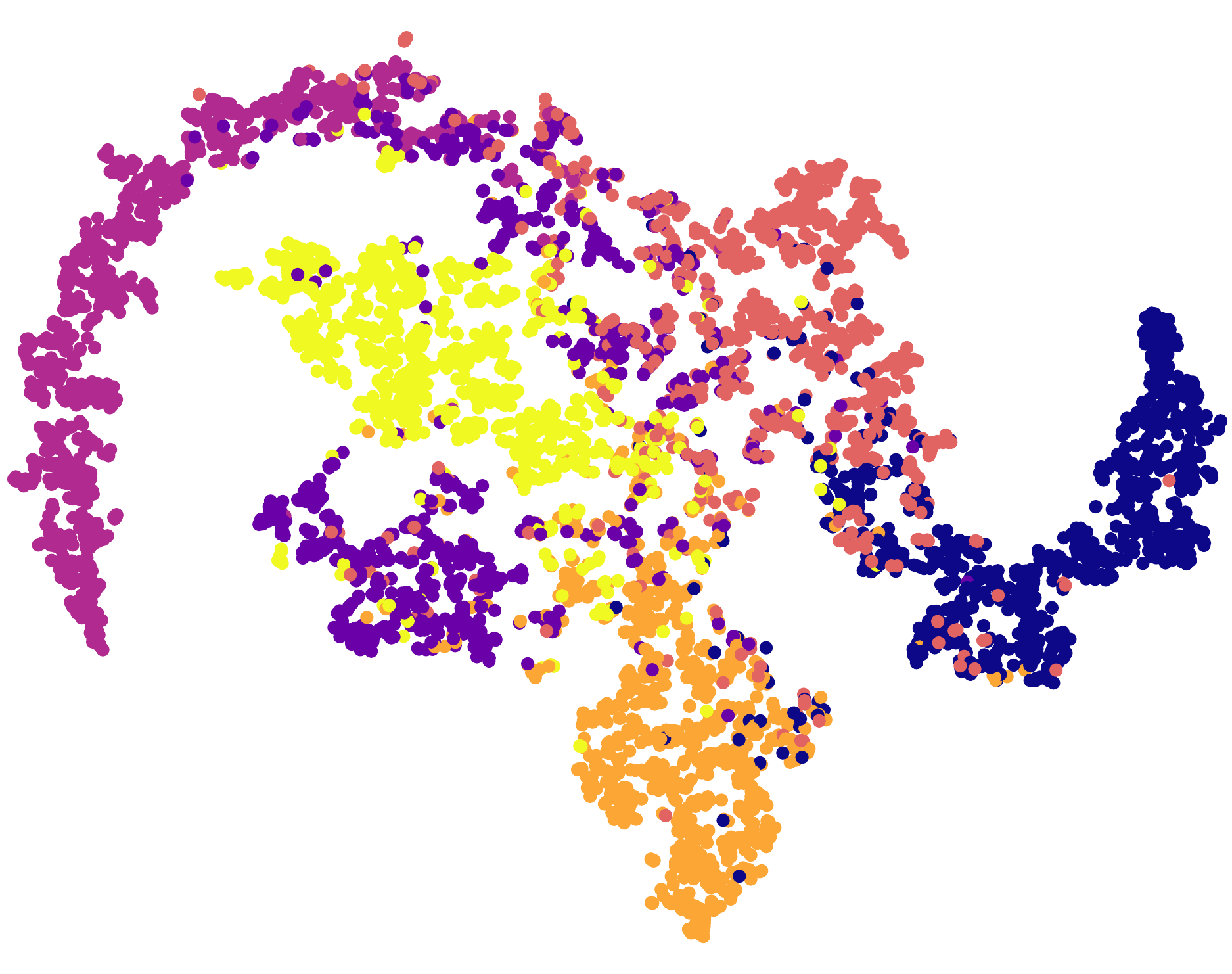}}
	\end{subfigure}\hspace{3.5mm}
	\begin{subfigure}[tsGCN]{
	    \centering
		\includegraphics[width=0.13\textwidth]{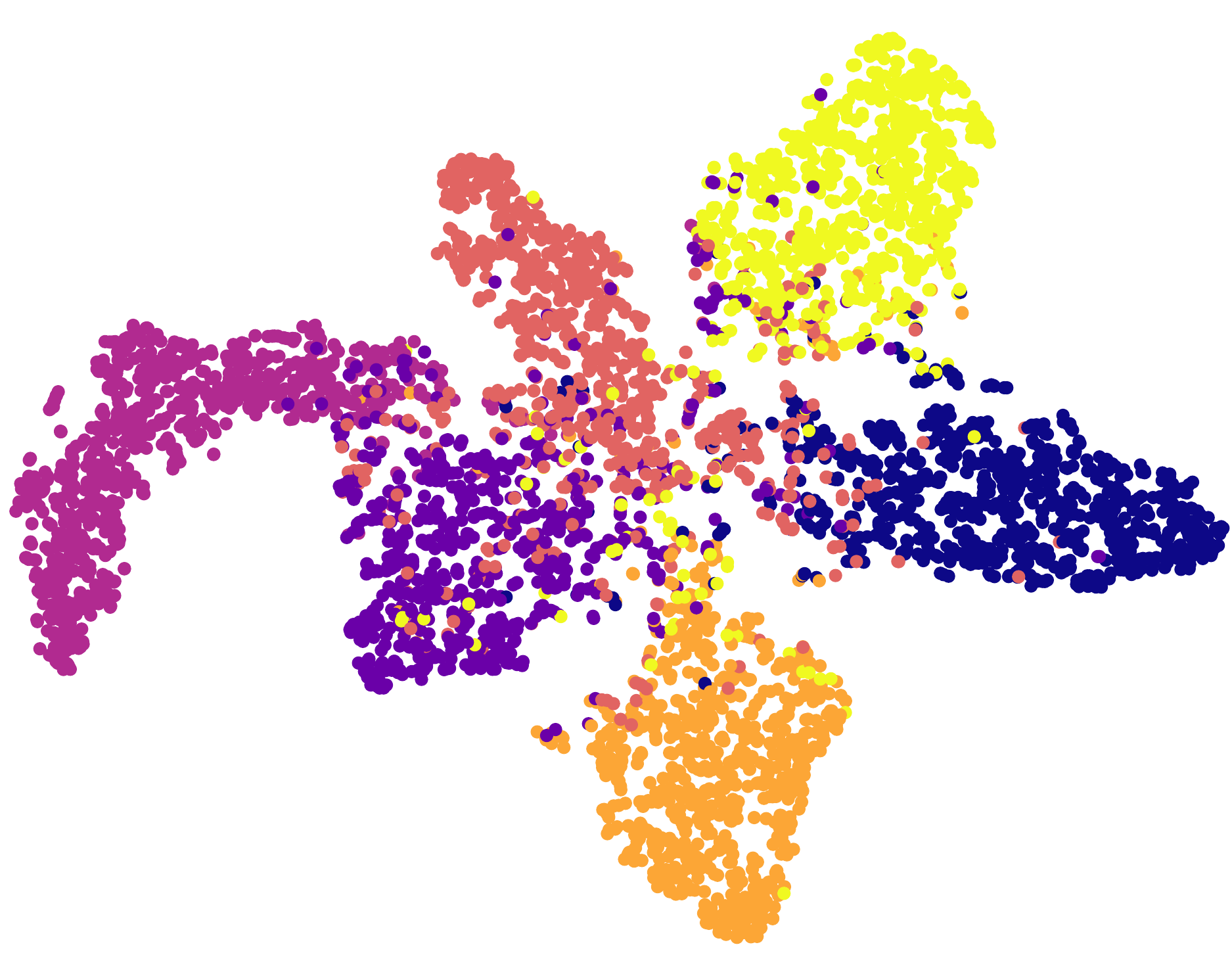}}
	\end{subfigure}
	\caption{Different methods' t-SNE visualizations on BlogCatalog, where each color corresponds to one class.}
	\label{Tsne}
\end{figure*}

\subsection{Parameter Sensitivity Analysis}

Fig.~\ref{fig:b} curves the accuracy of tsGCN w.r.t. various ranks by fixing other parameters $\alpha$ and $\beta$. 
Considering that different datasets hold different distributions, their optimal ranks to the optimization~(\ref{label_rank_approximation}) are also different. 
For example, in regard to the curves on BlogCatalog and ACM, their accuracy first go up to a high value and then keep steady, which indicates that when rank $r=\lfloor d/512 \rfloor$, the low-rank approximation is effective and efficient enough. 
When it comes to the curve on Pubmed, the trend of its performance monotonically decreases as rank $r$ becomes bigger, which implies that a very low-rank (i.e., $r=\lfloor d/2048 \rfloor$) approximation is sufficient enough to preserve abundant information for good results. 
However, with respect to the other curves such as on Flickr and Cora, the y-axis' scores generally rise to a peak first and then fall continuously as the rank $r$ increases. For these cases, the optimal ranks differ at their peaks. 

Fig.~\ref{Sensitivity} plots the accuracy of tsGCN w.r.t. ($\alpha$, $\beta$) by fixing the optimal ranks. 
On Cora, Citeseer, Pubmed, BlogCatalog, and CoraFull, tsGCN performs well with large $\alpha$ and small $\beta$; while, on ACM, Flickr, and UAI, tsGCN generates high accuracy when these two parameters are both large. 

For detailed settings of these hyperparameters, please reference the codes and datasets to be released on Github.

\subsection{Ablation Study}

The results of GCN, tsGCN-s, tsGCN-t, tsGCN (inv), and tsGCN are plotted in Fig.~\ref{Ablation_2} (notice that tsGCN-s and tsGCN-t are with semantic and topological regularizer, respectively), telling us:
\begin{itemize}
    \item The performance is unsatisfactory when the two regularizers are adopted alone, while tsGCN can always effectively fuse the two to better capture underlying structures.
    
    \item tsGCN (inv) is even worse than single-regularizer model on some datasets, indicating that the infinite-order graph convolutions implemented by the matrix inverse can pull-in instability to the model.
    
    \item Compared to GCN, tsGCN (inv) performs comparable or even worse, whereas tsGCN shows substantial improvements on all datasets, which indicates that the low-rank approximation enhances the robustness of infinite-order graph convolutions.
\end{itemize}

\subsection{Data Visualization}

Fig.~\ref{Tsne} exhibits the graph representations learned by different methods on BlogCatalog.
As can be seen clearly, the results of the three classical graph neural networks, i.e., Chebyshev, GraphSAGE and GAT, are unsatisfactory; while for the other competitors, there are:
\begin{itemize}
	\item Both tsGCN (inv) and tsGCN are better than other GCNs, which indicates that the dual-regularizer can extract more accurate inter-relationships from the topological and semantic structures.
	\item Comparing the embeddings learned by tsGCN with those of tsGCN (inv), classes in the former sub-figure are more clearly recognized and the within-clusters are more compact, which testifies the effectiveness of the low-rank approximation.
\end{itemize}

In a nutshell, the embeddings of the proposed model show the best inter-class separation and intra-class aggregation.

\section{Conclusion}

By revisiting GCN, this paper puts forward an interpretable regularizer-centered optimization framework, in which the connections between existing GCNs and diverse regularizers are revealed. It's worth mentioning that this framework provides a new perspective to interpret existing work and guide new GCNs just by designing new graph regularizers.
Impressed by the significant effectiveness of the feature based semantic graph, we further combine it with nodes' topological structures, and develop a novel dual-regularizer graph convolutional network, called tsGCN.
Since the analytical updating rule of tsGCN contains a time-consuming matrix inverse, we devise an efficient algorithm with low-rank approximation tricks.
Experiments on node classification tasks demonstrate that tsGCN performs much better than quite a few state-of-the-art competitors, and also exhibit that tsGCN runs much faster than the very recently proposed GCN variants, e.g., GNN-HF and GNN-LF.

\section{Acknowledgments}
This work is in part supported by the National Natural Science Foundation of China (Grant Nos. U21A20472 and 62276065), the Natural Science Foundation of Fujian Province (Grant No. 2020J01130193).

\section{Supplementary}


In this supplementary, we mainly present specific details to link various GCNs with various graph regularizers under the regularizer-centered optimization framework. Besides, more experimental settings and results are provided to further enrich the main paper.

\subsection{The Framework Review}
An interpretable regularizer-centered constrained optimiazation framework is induced as
\begin{align}\label{regularizer_framework}
\begin{split}
\arg\min_{\mathbf{H}^{(l)}} \mathcal{L} = \underbrace{-\text{Tr}({\mathbf{H}^{(l)}}^{\top}\mathbf{H}^{(l-1)}\mathbf{\Theta}^{(l)})}_{fitting} + \underbrace{\frac{1}{2}\mathcal{L}(\mathbf{H}^{(l)};\mathcal{G})}_{regularization} \\
\end{split}
\end{align}
\begin{displaymath}
\mbox{ s.t. } \mathbf{H}^{(l)} \in \{\mathcal{S}_{+}\; or \;\mathcal{S}\}, l\in[L-1], \mathbf{H}^{(L)} \in \mathcal{S}_{simplex},
\end{displaymath}
with the aim to unify various GCNs in an interpretable way, and also to guide the design of new GCN variants.
Note that the first term in optimization (\ref{regularizer_framework}) is equivalent to the fitting loss between the forward  propagation $\mathbf{H}^{(l-1)}\mathbf{\Theta}^{(l)}$ and the output $\mathbf{H}^{(l)}$, while the second term is the priors-based graph regularizer. Besides, $\mathcal{S}$, $\mathcal{S}_{+}$ and $\mathcal{S}_{simplex}$ are separately defined to be
\begin{equation}
\mathcal{S} = \{\mathbf{s}\in\mathbb{R}^{d}\},
\end{equation} 
\begin{equation}
\mathcal{S}_{+} = \{\mathbf{s}\in\mathbb{R}^{d} | \mathbf{s} \geq \mathbf{0}\},
\end{equation}
and
\begin{equation}
\mathcal{S}_{simplex} = \{\mathbf{s}\in\mathbb{R}^{d} | \mathbf{1}^{\top}\mathbf{s} = 1, \mathbf{s} \geq \mathbf{0}\}.
\end{equation}
The above three sets correspond to the $\text{Identity}(\cdot)$, $\text{Relu}(\cdot)$, and $\text{Softmax}(\cdot)$ activation functions frequently used in graph convolutional networks, respectively.

It's claimed that by designing different regularizers, this framework can give birth to different GCN methods. In the following, we will give specific details about how they could be derived from optimization (\ref{regularizer_framework}).

\subsection{Link Various GCNs with Various Regularizers}
\begin{table*}[!htbp]
    \centering
    \resizebox{0.99\textwidth}{!}{
    \begin{tabular}{|m{1.5cm}<{\centering}|m{7.1cm}<{\centering}|m{5.9cm}<{\centering}|m{3.6cm}|}
    \toprule		
     \multicolumn{1}{|c|}{Methods} &  \multicolumn{1}{c|}{Propagation Rules} & \multicolumn{1}{c|}{Regularizer $\mathcal{L}(\mathbf{H}^{(l)};\mathcal{G})$} & \multicolumn{1}{c|}{Projective Set} \\
    \midrule
    GCN   & $\mathbf{H}^{(l)} = \sigma\left(\widehat{\mathbf{A}}\mathbf{H}^{(l-1)}\mathbf{\Theta}^{(l)}\right)$  & $ \text{Tr}\left({\mathbf{H}^{(l)}}^{\top}\widetilde{\mathbf{L}}\mathbf{H}^{(l)}\right)$ & $\left\{\begin{matrix}\mathcal{S}^{(l)} = \mathcal{S}_{+}, l\in[L$$-$$1],\\ \mathcal{S}^{(L)} =\mathcal{S}_{simplex}\end{matrix}\right.$\\
    SGC  & $\mathbf{H}^{(l)} = \sigma\left({\widehat{\mathbf{A}}}\mathbf{H}^{(l-1)}\mathbf{\Theta}^{(l)}\right)$ &  $\text{Tr}\left({\mathbf{H}^{(l)}}^{\top}\widetilde{\mathbf{L}}\mathbf{H}^{(l)}\right)$ & $\left\{\begin{matrix}\mathcal{S}^{(l)} = \mathcal{S}, l\in[L$$-$$1],\\ \mathcal{S}^{(L)} =\mathcal{S}_{simplex}\end{matrix}\right.$\\
    APPNP & $\mathbf{H}^{(l)} = \sigma\left((1-\alpha)\widehat{\mathbf{A}}\mathbf{H}^{(l-1)}+\alpha\mathbf{H}^{(0)}\right)$ & $ \text{Tr}\left(\frac{1}{1-\alpha}{\mathbf{H}^{(l)}}^{\top}{\widehat{\mathbf{A}}}^{-1}(\mathbf{H}^{(l)}-2\alpha\mathbf{H}^{(0)})\right)$ & $\left\{\begin{matrix}\mathcal{S}^{(l)} = \mathcal{S}, l\in[L$$-$$1],\\ \mathcal{S}^{(L)} =\mathcal{S}_{simplex}\end{matrix}\right.$\\
    JKNet & $\mathbf{H}^{(l)} = \sigma\left(\sum_{k=1}^{K}\alpha_{k}{\widehat{\mathbf{A}}}^{k}\mathbf{H}^{(l-1)}\mathbf{\Theta}^{(l)}\right)$ & $\text{Tr}\left({\mathbf{H}^{(l)}}^{\top}{\widehat{\mathbf{A}}}^{-1}(\mathbf{I} + \beta\widetilde{\mathbf{L}})\mathbf{H}^{(l)}\right)$   & $\left\{\begin{matrix}\mathcal{S}^{(l)} = \mathcal{S}, l\in[L$$-$$1],\\ \mathcal{S}^{(L)} =\mathcal{S}_{simplex}\end{matrix}\right.$\\
    DAGNN  & $\mathbf{H}^{(L)} =  \sigma\left(\sum_{k=0}^{K}\alpha_{k}{\widehat{\mathbf{A}}}^{k}\mathbf{H}^{(0)}\right)$ & $\text{Tr}\left({\mathbf{H}^{(l)}}^{\top}(\mathbf{I} + \beta\widetilde{\mathbf{L}})\mathbf{H}^{(l)}\right)$   & $\left\{\begin{matrix}\mathcal{S}^{(l)} = \mathcal{S}, l\in[L$$-$$1],\\ \mathcal{S}^{(L)} =\mathcal{S}_{simplex}\end{matrix}\right.$ \\
    GNN-HF &  $\mathbf{H}^{(l)} = \sigma\left((\mathbf{I} + \alpha\widehat{\mathbf{L}})^{-1}(\mathbf{I} + \beta\widehat{\mathbf{L}})\mathbf{H}^{(l-1)}\mathbf{\Theta}^{(l)}\right)$ & $\text{Tr}\left({\mathbf{H}^{(l)}}^{\top}(\mathbf{I} + \beta\widehat{\mathbf{L}})^{-1}(\mathbf{I} + \alpha\widehat{\mathbf{L}})\mathbf{H}^{(l)}\right)$ & $\left\{\begin{matrix}\mathcal{S}^{(l)} = \mathcal{S}_{+}, l\in[L$$-$$1],\\ \mathcal{S}^{(L)} =\mathcal{S}_{simplex}.\end{matrix}\right.$\\
    GNN-LF &  $\mathbf{H}^{(l)} = \sigma\left((\mathbf{I} + \alpha\widehat{\mathbf{A}})^{-1}(\mathbf{I} + \beta\widehat{\mathbf{A}})\mathbf{H}^{(l-1)}\mathbf{\Theta}^{(l)}\right)$ & $\text{Tr}\left({\mathbf{H}^{(l)}}^{\top}(\mathbf{I} + \beta\widehat{\mathbf{A}})^{-1}(\mathbf{I} + \alpha\widehat{\mathbf{A}})\mathbf{H}^{(l)} \right)$ & $\left\{\begin{matrix}\mathcal{S}^{(l)} = \mathcal{S}_{+}, l\in[L$$-$$1],\\ \mathcal{S}^{(L)} =\mathcal{S}_{simplex}\end{matrix}\right.$\\
    \midrule
    tsGCN & $\mathbf{H}^{(l)} = \sigma\left((\mathbf{I} +  \alpha\widetilde{\mathbf{L}}_{\mathcal{G}} + \beta\widetilde{\mathbf{L}}_{\mathcal{X}})^{-1}\mathbf{H}^{(l-1)}\mathbf{\Theta}^{(l)}\right)$ & $\text{Tr}\left({\mathbf{H}^{(l)}}^{\top}(\mathbf{I} + \alpha\widetilde{\mathbf{L}}_{\mathcal{G}} + \beta\widetilde{\mathbf{L}}_{\mathcal{X}})\mathbf{H}^{(l)}\right)$ &  $\left\{\begin{matrix}\mathcal{S}^{(l)} = \mathcal{S}_{+}, l\in[L$$-$$1],\\ \mathcal{S}^{(L)} =\mathcal{S}_{simplex}\end{matrix}\right.$\\
    \bottomrule
    \end{tabular}}
    \caption{Different regularizers can derive different GCN variants under the regularizer-centered optimization framework.}
    \label{ConnectionRegularizerGCN}
\end{table*}

\begin{theorem}
The updating rule of the vanilla GCN \cite{KipfWelling2017SemiSupervised}
\begin{align}\label{GCNupdatingrule}
\mathbf{H}^{(l)} = \sigma\left(\widehat{\mathbf{A}}\mathbf{H}^{(l-1)}\mathbf{\Theta}^{(l)}\right), l \in [L],
\end{align}
is equivalent to solving the following optimization
\begin{equation}
\mathbf{H}^{(l)} = \arg\min_{\mathbf{H}\in\mathcal{S}^{(l)}}\mathcal{J}^{(l)}
\end{equation}
\begin{displaymath}
s.t.\; \mathcal{S}^{(l)}\in\{\mathcal{S}\; or \;\mathcal{S}_{+}\; or\; \mathcal{S}_{simplex}\},
\end{displaymath}
where  
\begin{align}\label{GCNoptimization}
\mathcal{J}^{(l)} = -{\rm{Tr}}\left(\mathbf{H}^{\top}\mathbf{H}^{(l-1)}\mathbf{\Theta}^{(l)}\right) + \frac{1}{2}{\rm{Tr}}\left(\mathbf{H}^{\top}\widetilde{\mathbf{L}}\mathbf{H}\right).
\end{align}
\end{theorem}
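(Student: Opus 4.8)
The plan is to split the equivalence into the two ingredients of the forward rule—the linear propagation $\widehat{\mathbf{A}}\mathbf{H}^{(l-1)}\mathbf{\Theta}^{(l)}$ and the nonlinearity $\sigma(\cdot)$—and handle each in turn, reusing the derivation already carried out for the vanilla GCN (Eqs.~\eqref{ClosedFormME}--\eqref{projection_simplex}). First I would drop the constraint and locate the stationary point of $\mathcal{J}^{(l)}$. Using that $\widetilde{\mathbf{L}}=\widetilde{\mathbf{L}}^{\top}$ is symmetric, the differential of the quadratic term is $\widetilde{\mathbf{L}}\mathbf{H}$ and that of the fitting term is $-\mathbf{H}^{(l-1)}\mathbf{\Theta}^{(l)}$, so setting $\partial\mathcal{J}^{(l)}/\partial\mathbf{H}=\mathbf{0}$ gives $(\mathbf{I}-\widetilde{\mathbf{A}})\mathbf{H}=\mathbf{H}^{(l-1)}\mathbf{\Theta}^{(l)}$, i.e.\ the closed form $\mathbf{H}^{(l_+)}=(\mathbf{I}-\widetilde{\mathbf{A}})^{-1}\mathbf{H}^{(l-1)}\mathbf{\Theta}^{(l)}$ of Eq.~\eqref{ClosedFormME}. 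Since $\widetilde{\mathbf{L}}\succeq\mathbf{0}$, the objective is convex, so this stationary point is in fact the global minimizer.

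Next I would pass from the exact inverse to the propagation matrix of GCN. Expanding $(\mathbf{I}-\widetilde{\mathbf{A}})^{-1}=\sum_{i=0}^{\infty}\widetilde{\mathbf{A}}^{i}$ and truncating at first order yields $(\mathbf{I}-\widetilde{\mathbf{A}})^{-1}\approx\mathbf{I}+\widetilde{\mathbf{A}}$, which is exactly the (unnormalized) quantity whose renormalization defines $\widehat{\mathbf{A}}$; hence $\mathbf{H}^{(l_+)}\approx\widehat{\mathbf{A}}\mathbf{H}^{(l-1)}\mathbf{\Theta}^{(l)}$, reproducing the affinity-weighted message passing that sits inside the activation of Eq.~\eqref{GCNupdatingrule}. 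Finally I would reinstate the constraint and identify the projection onto $\mathcal{S}^{(l)}$ with $\sigma$: feeding $\mathbf{H}^{(l_+)}$ row-wise into the variational forms already established, projection onto $\mathcal{S}_{+}$ with the quadratic proximity term reproduces $\mathrm{ReLU}$ via $\mathrm{ReLU}(\mathbf{x})=\arg\min_{\mathbf{y}\in\mathcal{S}_{+}}-\mathbf{x}^{\top}\mathbf{y}+\tfrac12\|\mathbf{y}\|_2^2$, projection onto $\mathcal{S}_{simplex}$ with the negative-entropy term reproduces $\mathrm{Softmax}$, and the choice $\mathcal{S}^{(l)}=\mathcal{S}$ recovers the identity. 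Composing the two steps gives $\mathbf{H}^{(l)}=\sigma(\widehat{\mathbf{A}}\mathbf{H}^{(l-1)}\mathbf{\Theta}^{(l)})$ for every admissible $\mathcal{S}^{(l)}$.

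The step I expect to be the main obstacle is the last one: justifying that solving the constrained problem may be realized as \emph{minimize unconstrained, then project}, rather than treating the projection as an unjustified afterthought. This is only legitimate because the activation's variational form carries its own quadratic proximity term $\tfrac12\|\mathbf{H}\|_F^2$, which must be reconciled with the graph regularizer $\tfrac12\mathrm{Tr}(\mathbf{H}^{\top}\widetilde{\mathbf{L}}\mathbf{H})$ in $\mathcal{J}^{(l)}$. The reconciliation hinges on the splitting $\widetilde{\mathbf{L}}=\mathbf{I}-\widetilde{\mathbf{A}}$, so that the identity part supplies precisely the $\tfrac12\|\mathbf{H}\|_F^2$ smoother the projection needs while the $-\tfrac12\mathrm{Tr}(\mathbf{H}^{\top}\widetilde{\mathbf{A}}\mathbf{H})$ part is folded into the linear propagation. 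Making this row-wise decoupling precise—and checking it survives the first-order truncation used to pass from $(\mathbf{I}-\widetilde{\mathbf{A}})^{-1}$ to $\widehat{\mathbf{A}}$—is where the argument needs the most care.
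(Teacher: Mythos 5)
Your proposal follows essentially the same route as the paper's own proof: differentiate $\mathcal{J}^{(l)}$ with respect to $\mathbf{H}$, set the gradient to zero to obtain $\mathbf{H}^{*}=(\mathbf{I}-\widetilde{\mathbf{A}})^{-1}\mathbf{H}^{(l-1)}\mathbf{\Theta}^{(l)}$, truncate the Neumann series $(\mathbf{I}-\widetilde{\mathbf{A}})^{-1}=\sum_{i=0}^{\infty}\widetilde{\mathbf{A}}^{i}$ at first order to recover $\widehat{\mathbf{A}}$, and realize $\sigma$ as the projection onto $\mathcal{S}^{(l)}$. The subtlety you flag at the end --- that ``minimize unconstrained, then project'' requires reconciling the activation's own proximity term $\tfrac12\|\mathbf{H}\|_F^2$ with the identity part of $\widetilde{\mathbf{L}}=\mathbf{I}-\widetilde{\mathbf{A}}$ --- is a genuine gap, but it is one the paper's proof shares, since it likewise asserts the projection step without justification.
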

\begin{proof}
Taking derivative of $\mathcal{J}^{(l)}$ w.r.t. $\mathbf{H}$, we obtain
\begin{align}
\frac{\partial \mathcal{J}^{(l)}}{\partial \mathbf{H}} =  - \mathbf{H}^{(l-1)}\mathbf{\Theta}^{(l)} + \widetilde{\mathbf{L}}\mathbf{H};
\end{align}
if it ($\frac{\partial \mathcal{J}^{(l)}}{\partial \mathbf{H}}$) is further set to zero, then there yields
\begin{equation}
\mathbf{H}^{*} = (\mathbf{I}-\widetilde{\mathbf{A}})^{-1}\mathbf{H}^{(l-1)}\mathbf{\Theta}^{(l)}.
\end{equation}

By projecting $\mathbf{H}^{*}$ onto $\mathcal{S}^{(l)}$, we could arrive at
\begin{equation}\label{eq:updating_with_inverse}
\mathbf{H}^{(l)}=\sigma(\mathbf{H}^{*}).
\end{equation}
Notably, $(\mathbf{I}-\widetilde{\mathbf{A}})^{-1} = \sum_{i=0}^{\infty}\widetilde{\mathbf{A}}^{i}$; 
and when its first-order approximation is leveraged, i.e., $(\mathbf{I}-\widetilde{\mathbf{A}})^{-1} \approx \mathbf{I} + \tilde{\mathbf{A}} = \widehat{\mathbf{A}}$, Eq.~\eqref{eq:updating_with_inverse} gives birth to the updating rule \eqref{GCNupdatingrule}. 

The above analyses reveal that when the regularizer is designed to $\frac{1}{2}{\rm{Tr}}\left(\mathbf{H}^{\top}\widetilde{\mathbf{L}}\mathbf{H}\right)$, the framework \eqref{regularizer_framework} could generate the vanilla GCN \cite{KipfWelling2017SemiSupervised}.
\end{proof}

\begin{theorem}
	Given $\mathbf{H}^{(0)} = f_{\Theta}^{MLP}(\mathbf{X})$ and $\alpha \in [0, 1)$, the updating rule of APPNP \cite{KlicperaBojchevski2019Predict}
	\begin{align}
	\mathbf{H}^{(l)} = \sigma\left((1-\alpha)\widehat{\mathbf{A}}\mathbf{H}^{(l-1)}+\alpha\mathbf{H}^{(0)}\right),\; l \in [L], 
	\end{align}
	is equivalent to solving the following optimization
	\begin{equation}
	\mathbf{H}^{(l)} = \arg\min_{\mathbf{H}\in\mathcal{S}^{(l)}}\mathcal{J}^{(l)}, 
	\end{equation}
	\begin{displaymath}
	s.t. ~\mathbf{H}^{(l)} = \mathcal{S},\; l\in[L-1],\; \mathbf{H}^{(L)} = \mathcal{S}_{simplex},
	\end{displaymath}
	where 
	\begin{equation}
	\begin{aligned}
	\mathcal{J}^{(l)} & =  -{\rm{Tr}}\left(\mathbf{H}^{\top}\mathbf{H}^{(l-1)}\mathbf{\Theta}^{(l)}\right)\\ &+  \frac{1}{2}{\rm{Tr}}\left(\frac{1}{1-\alpha}\mathbf{H}^{\top}{\widehat{\mathbf{A}}}^{-1}(\mathbf{H}-2\alpha\mathbf{H}^{(0)})\right).
	\end{aligned}
	\end{equation}
\end{theorem}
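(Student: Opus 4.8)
The plan is to mirror the proof of Theorem~1: compute the gradient of $\mathcal{J}^{(l)}$ with respect to $\mathbf{H}$, set it to zero to obtain the unconstrained optimum $\mathbf{H}^{*}$, and then project onto $\mathcal{S}^{(l)}$ via $\sigma(\cdot)$ using the activation-as-projection correspondence established earlier. First I would differentiate the fitting term $-\mathrm{Tr}(\mathbf{H}^{\top}\mathbf{H}^{(l-1)}\mathbf{\Theta}^{(l)})$, which contributes $-\mathbf{H}^{(l-1)}\mathbf{\Theta}^{(l)}$. For the regularizer I would expand it as $\frac{1}{2(1-\alpha)}\mathrm{Tr}(\mathbf{H}^{\top}\widehat{\mathbf{A}}^{-1}\mathbf{H}) - \frac{\alpha}{1-\alpha}\mathrm{Tr}(\mathbf{H}^{\top}\widehat{\mathbf{A}}^{-1}\mathbf{H}^{(0)})$ and differentiate each piece, using that $\widehat{\mathbf{A}}^{-1}$ is symmetric (since $\widehat{\mathbf{A}}$ is a symmetric normalization of $\mathbf{I}+\widetilde{\mathbf{A}}$); this yields $\frac{1}{1-\alpha}\widehat{\mathbf{A}}^{-1}\mathbf{H} - \frac{\alpha}{1-\alpha}\widehat{\mathbf{A}}^{-1}\mathbf{H}^{(0)}$.

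Setting the total gradient to zero gives $\frac{1}{1-\alpha}\widehat{\mathbf{A}}^{-1}\mathbf{H} = \mathbf{H}^{(l-1)}\mathbf{\Theta}^{(l)} + \frac{\alpha}{1-\alpha}\widehat{\mathbf{A}}^{-1}\mathbf{H}^{(0)}$. The key algebraic step is to left-multiply through by $(1-\alpha)\widehat{\mathbf{A}}$: because the regularizer already carries the inverse $\widehat{\mathbf{A}}^{-1}$, multiplying through cancels it cleanly and returns $\mathbf{H}^{*} = (1-\alpha)\widehat{\mathbf{A}}\mathbf{H}^{(l-1)}\mathbf{\Theta}^{(l)} + \alpha\mathbf{H}^{(0)}$. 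I would emphasize that---in contrast to the vanilla GCN case of Theorem~1, where one must approximate $(\mathbf{I}-\widetilde{\mathbf{A}})^{-1}\approx\mathbf{I}+\widetilde{\mathbf{A}}$---here the match is \emph{exact} and no first-order truncation is needed, precisely because the $\widehat{\mathbf{A}}^{-1}$ built into the APPNP regularizer inverts back to $\widehat{\mathbf{A}}$.

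The main obstacle is reconciling the factor $\mathbf{\Theta}^{(l)}$ appearing in the fitting term with its absence from the stated APPNP rule. I would resolve this by noting that APPNP places all learnable parameters in the initial transformation $\mathbf{H}^{(0)} = f_{\Theta}^{MLP}(\mathbf{X})$ and uses parameter-free propagation layers; setting $\mathbf{\Theta}^{(l)} = \mathbf{I}$ in the closed form then reduces $\mathbf{H}^{*}$ to exactly $(1-\alpha)\widehat{\mathbf{A}}\mathbf{H}^{(l-1)} + \alpha\mathbf{H}^{(0)}$. Finally, applying $\mathbf{H}^{(l)} = \sigma(\mathbf{H}^{*})$ with the projection interpretation of the constraint set $\mathcal{S}^{(l)}$ recovers the APPNP updating rule, completing the equivalence. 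A minor point to verify carefully is the sign and coefficient bookkeeping in the cross term $-2\alpha\mathbf{H}^{(0)}$, which must produce exactly the $\alpha\mathbf{H}^{(0)}$ residual-connection term after the $\widehat{\mathbf{A}}$ cancellation.
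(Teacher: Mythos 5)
Your proposal is correct and follows essentially the same route as the paper's proof: differentiate $\mathcal{J}^{(l)}$, set the gradient to zero to obtain $\mathbf{H}^{*} = (1-\alpha)\widehat{\mathbf{A}}\mathbf{H}^{(l-1)}+\alpha\mathbf{H}^{(0)}$, and project onto $\mathcal{S}^{(l)}$ via $\sigma(\cdot)$. Your explicit handling of $\mathbf{\Theta}^{(l)}$ (absorbing it as the identity since APPNP's propagation is parameter-free) is in fact tidier than the paper, which silently drops $\mathbf{\Theta}^{(l)}$ from the gradient, and your remark that no first-order truncation of an inverse is needed here is an accurate observation rather than a deviation.
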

\begin{proof}
	Taking the derivative of $\mathcal{J}^{(l)}$ w.r.t. $\mathbf{H}$ and setting it to zero, we come to 
	\begin{align}
	\frac{\partial \mathcal{J}^{(l)}}{\partial \mathbf{H}} =  - \mathbf{H}^{(l-1)} + \frac{1}{1-\alpha}{\widehat{\mathbf{A}}}^{-1}(\mathbf{H}-\alpha\mathbf{H}^{(0)}) = \mathbf{0}, 
	\end{align}
	which leads to 
	\begin{align}
	\mathbf{H}^{*} = (1-\alpha)\widehat{\mathbf{A}}\mathbf{H}^{(l-1)}+\alpha\mathbf{H}^{(0)}.
	\end{align}
	
	By projecting $\mathbf{H}^{*}$ onto $\mathcal{S}^{(l)}$, we could achieve 
	\begin{align}
	\mathbf{H}^{(l)} = \sigma\left((1-\alpha)\widehat{\mathbf{A}}\mathbf{H}^{(l-1)}+\alpha\mathbf{H}^{(0)}\right),\; l\in[L], 
	\end{align}
	which completes the proof.
	
	The above analyses reveal that when the regularizer is devised to $\frac{1}{2}{\rm{Tr}}\left(\frac{1}{1-\alpha}\mathbf{H}^{\top}{\widehat{\mathbf{A}}}^{-1}(\mathbf{H}-2\alpha\mathbf{H}^{(0)})\right)$, the framework \eqref{regularizer_framework} could give birth to APPNP \cite{KlicperaBojchevski2019Predict}.
\end{proof}

\begin{theorem}\label{JKNetproof}
	The updating rule of JKNet \cite{XuLiTian2018Representation} 
	\begin{align}
	\begin{split}
	\mathbf{H}^{(l)} = \sigma\left(\sum_{k=1}^{K}\alpha_{k}{\widehat{\mathbf{A}}}^{k}\mathbf{H}^{(l-1)}\mathbf{\Theta}^{(l)}\right),\; l\in [L],
	\end{split}
	\end{align}
	is equivalent to solving the following optimization 
	\begin{equation}
	\mathbf{H}^{(l)} = \arg\min_{\mathbf{H}\in\mathcal{S}^{(l)}}\mathcal{J}^{(l)}
	\end{equation}
	\begin{displaymath}
	s.t. ~\mathbf{H}^{(l)} = \mathcal{S}, l\in[L-1], \mathbf{H}^{(L)} = \mathcal{S}_{simplex},
	\end{displaymath}
	where
	\begin{equation}
	\begin{aligned}
	\mathcal{J}^{(l)} &=-{\rm{Tr}}\left(\mathbf{H}^{\top}\mathbf{H}^{(l-1)}\mathbf{\Theta}^{(l)}\right) \\ &+ \frac{1}{2}{\rm{Tr}}\left(\mathbf{H}^{\top}{\widehat{\mathbf{A}}}^{-1}(\mathbf{I} + \beta\widetilde{\mathbf{L}})\mathbf{H}\right).
	\end{aligned}
	\end{equation}
\end{theorem}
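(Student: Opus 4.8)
The plan is to reuse the variational recipe that established Theorems 1 and 2: locate the stationary point of the unconstrained objective $\mathcal{J}^{(l)}$, then enforce the feasible set by projecting through $\sigma(\cdot)$. First I would differentiate $\mathcal{J}^{(l)}$ with respect to $\mathbf{H}$. The fitting term contributes $-\mathbf{H}^{(l-1)}\mathbf{\Theta}^{(l)}$, exactly as in the earlier proofs, while the quadratic regularizer $\frac{1}{2}\mathrm{Tr}(\mathbf{H}^{\top}\widehat{\mathbf{A}}^{-1}(\mathbf{I}+\beta\widetilde{\mathbf{L}})\mathbf{H})$ contributes $\widehat{\mathbf{A}}^{-1}(\mathbf{I}+\beta\widetilde{\mathbf{L}})\mathbf{H}$, where I treat the coefficient matrix as symmetric in the same manner the $\widetilde{\mathbf{L}}$ term was handled in Theorem 1.

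Setting the gradient to zero gives $\widehat{\mathbf{A}}^{-1}(\mathbf{I}+\beta\widetilde{\mathbf{L}})\mathbf{H}^{*} = \mathbf{H}^{(l-1)}\mathbf{\Theta}^{(l)}$. Left-multiplying by $\widehat{\mathbf{A}}$ and then inverting $\mathbf{I}+\beta\widetilde{\mathbf{L}}$ yields the closed form $\mathbf{H}^{*} = (\mathbf{I}+\beta\widetilde{\mathbf{L}})^{-1}\widehat{\mathbf{A}}\mathbf{H}^{(l-1)}\mathbf{\Theta}^{(l)}$, so the whole difficulty is reduced to recognizing $(\mathbf{I}+\beta\widetilde{\mathbf{L}})^{-1}\widehat{\mathbf{A}}$ as a truncated polynomial in $\widehat{\mathbf{A}}$.

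To do so I would invoke the first-order identification $\widehat{\mathbf{A}}\approx\mathbf{I}+\widetilde{\mathbf{A}}$ already used in Theorem 1, which turns $\widetilde{\mathbf{L}}=\mathbf{I}-\widetilde{\mathbf{A}}$ into $2\mathbf{I}-\widehat{\mathbf{A}}$, so that $\mathbf{I}+\beta\widetilde{\mathbf{L}}=(1+2\beta)\mathbf{I}-\beta\widehat{\mathbf{A}}$ is affine in $\widehat{\mathbf{A}}$. Working in the shared eigenbasis, an eigenvalue $a$ of $\widehat{\mathbf{A}}$ maps to $a/((1+2\beta)-\beta a)$, whose expansion about $a=0$ is $\sum_{k=1}^{\infty}\beta^{k-1}(1+2\beta)^{-k}a^{k}$; equivalently, the Neumann series $(\mathbf{I}+\beta\widetilde{\mathbf{L}})^{-1}=\sum_{j=0}^{\infty}(-\beta)^{j}(2\mathbf{I}-\widehat{\mathbf{A}})^{j}$ multiplied by $\widehat{\mathbf{A}}$ is a power series in $\widehat{\mathbf{A}}$ whose lowest-order term has degree one. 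Truncating this expansion at order $K$ — precisely the finite-order convolution JKNet performs — gives $\mathbf{H}^{*}\approx\sum_{k=1}^{K}\alpha_{k}\widehat{\mathbf{A}}^{k}\mathbf{H}^{(l-1)}\mathbf{\Theta}^{(l)}$, and projecting onto $\mathcal{S}^{(l)}$ through $\sigma(\cdot)$ delivers the stated updating rule.

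I expect the main obstacle to be the polynomial-matching step rather than the differentiation. Two points need care: justifying the expansion (cleanest spectrally, since $|\beta a|/(1+2\beta)<1$ holds for every admissible eigenvalue $a$, so no extra smallness condition on $\beta$ is required), and reconciling the specific coefficients $\alpha_{k}=\beta^{k-1}/(1+2\beta)^{k}$ produced by the series with the free learnable coefficients $\alpha_{k}$ in JKNet — the equivalence is really at the level of the span of $\{\widehat{\mathbf{A}}^{k}\}_{k=1}^{K}$, so the theorem should be read as a match of functional forms after truncation. A secondary subtlety is the non-symmetry of $\widehat{\mathbf{A}}^{-1}(\mathbf{I}+\beta\widetilde{\mathbf{L}})$; as in the earlier theorems I would either assume $\widehat{\mathbf{A}}$ and $\widetilde{\mathbf{L}}$ commute, making the product symmetric, or replace the coefficient matrix by its symmetrization, which leaves the stationarity condition intact.
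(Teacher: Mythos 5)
Your proof follows the same skeleton as the paper's: differentiate $\mathcal{J}^{(l)}$, set the gradient to zero to get $\mathbf{H}^{*}=(\mathbf{I}+\beta\widetilde{\mathbf{L}})^{-1}\widehat{\mathbf{A}}\mathbf{H}^{(l-1)}\mathbf{\Theta}^{(l)}$, expand the resolvent as a Neumann series in $\widehat{\mathbf{A}}$, truncate to recover $\sum_{k=1}^{K}\alpha_k\widehat{\mathbf{A}}^{k}$, and project via $\sigma(\cdot)$. The one place you diverge is the affine identification of $\widetilde{\mathbf{L}}$ in terms of $\widehat{\mathbf{A}}$: you take $\widehat{\mathbf{A}}\approx\mathbf{I}+\widetilde{\mathbf{A}}$ so that $\widetilde{\mathbf{L}}=2\mathbf{I}-\widehat{\mathbf{A}}$ and $\mathbf{I}+\beta\widetilde{\mathbf{L}}=(1+2\beta)\mathbf{I}-\beta\widehat{\mathbf{A}}$, whereas the paper implicitly uses $\widetilde{\mathbf{L}}=\mathbf{I}-\widehat{\mathbf{A}}$ and factors $\mathbf{I}+\beta\widetilde{\mathbf{L}}=(\beta+1)\bigl(\mathbf{I}-\tfrac{\beta}{\beta+1}\widehat{\mathbf{A}}\bigr)$. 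Both readings are defensible given the paper's loose handling of $\widetilde{\mathbf{L}}$ versus $\widehat{\mathbf{L}}$, but the paper's choice yields $\alpha_k=\beta^{k-1}/(\beta+1)^{k}$ with $\sum_{k=1}^{\infty}\alpha_k=1$, which matches JKNet's convex aggregation weights, while your $\alpha_k=\beta^{k-1}/(1+2\beta)^{k}$ sum to $1/(1+\beta)$; since you explicitly read the equivalence at the level of the span of $\{\widehat{\mathbf{A}}^{k}\}_{k=1}^{K}$, this costs you only the normalization remark, not correctness. Your closing caveats (spectral justification of the expansion and the non-symmetry of $\widehat{\mathbf{A}}^{-1}(\mathbf{I}+\beta\widetilde{\mathbf{L}})$ when differentiating the quadratic form) address points the paper silently glosses over.
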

\begin{proof}
	Taking the derivative of $\mathcal{J}^{(l)}$ w.r.t. $\mathbf{H}$ and setting it to zero, we have 
	\begin{align}
	\frac{\partial \mathcal{J}^{(l)}}{\partial \mathbf{H}} =  - \mathbf{H}^{(l-1)}\mathbf{\Theta}^{(l)} + {\widehat{\mathbf{A}}}^{-1}(\mathbf{I} + \beta\widetilde{\mathbf{L}})\mathbf{H} = \mathbf{0}, 
	\end{align}
	which leads to
	\begin{equation}
	\mathbf{H}^{*}=\frac{1}{\beta+1}\left(\mathbf{I} - \frac{\beta}{\beta+1}\widehat{\mathbf{A}}\right)^{-1}\widehat{\mathbf{A}}\mathbf{H}^{(l-1)}\mathbf{\Theta}^{(l)}. 
	\end{equation}
	
	It is noted that the spectral radius of $\frac{\beta}{\beta+1}\widehat{\mathbf{A}}$ is smaller than one, indicating 
	\begin{align}
	\left(\mathbf{I} - \frac{\beta}{\beta+1}\widehat{\mathbf{A}}\right)^{-1} = \sum_{k=0}^{\infty}\left[\frac{\beta}{\beta+1}\widehat{\mathbf{A}}\right]^{k}. 
	\end{align}
	
	If its $(K$$-$$1)$-order approximation is employed, then there goes 
	\begin{align}
	\left(\mathbf{I} - \frac{\beta}{\beta+1}\widehat{\mathbf{A}}\right)^{-1} \approx \sum_{k=0}^{K-1}\frac{\beta^{k}}{(\beta+1)^{k}}\widehat{\mathbf{A}}^{k}, 
	\end{align}
	which suggests that $\mathbf{H}^{*}$ can be approximated by 
	\begin{align}
	\mathbf{H}^{*} = \sum_{k=1}^{K}\frac{\beta^{k-1}}{(\beta+1)^{k}}\widehat{\mathbf{A}}^{k}\mathbf{H}^{(l-1)}\mathbf{\Theta}^{(l)}.
	\end{align}
	
	If denote $\alpha_{k} = \frac{\beta^{k-1}}{(\beta+1)^{k}}$ ($k\in[K]$), then $\{\alpha_{k}\}_{k=1}^{\infty}$ is a set of parameters with $\sum_{k=1}^{\infty} \alpha_{k} = \frac{1}{\beta+1} \frac{1}{1-\frac{\beta}{\beta+1}} = 1$. 
	
	By projecting $\mathbf{H}^{*}$ onto $\mathcal{S}^{(l)}$, we can realize 
	\begin{align}
	\mathbf{H}^{(l)} = \sigma\left(\sum_{k=1}^{K}\alpha_{k}{\widehat{\mathbf{A}}}^{k}\mathbf{H}^{(l-1)}\mathbf{\Theta}^{(l)}\right), l \in [L],      
	\end{align}
	which completes the proof. 
	
	The above analyses reveal that when the regularizer is devised to $\frac{1}{2}{\rm{Tr}}\left(\mathbf{H}^{\top}{\widehat{\mathbf{A}}}^{-1}(\mathbf{I} + \beta\widetilde{\mathbf{L}})\mathbf{H}\right)$, the framework \eqref{regularizer_framework} can produce JKNet \cite{XuLiTian2018Representation}.
\end{proof}

\begin{theorem}
	Given $\mathbf{H}^{(0)} = f_{\Theta}^{MLP}(\mathbf{X})$ and a trainable
	projection vector $\alpha \in \mathbb{R}^{K+1}$, the updating rule of DAGNN \cite{LiuGaoJi2020Towards}
	\begin{align}
	\mathbf{H}^{(l)} =  \sigma\left(\sum_{k=0}^{K}\alpha_{k}{\widehat{\mathbf{A}}}^{k}\mathbf{H}^{(0)}\right), 
	\end{align}
	is equivalent to solving the following optimization 
	\begin{equation}
	\mathbf{H}^{(l)} = \arg\min_{\mathbf{H}\in\mathcal{S}^{(l)}}\mathcal{J}^{(l)}
	\end{equation}
	\begin{displaymath}
	s.t. ~\mathbf{H}^{(l)} = \mathcal{S},\; l\in[L-1],\; \mathbf{H}^{(L)} = \mathcal{S}_{simplex},
	\end{displaymath}
	where
	\begin{equation}
	\begin{aligned}
	\mathcal{J}^{(l)} &= -{\rm{Tr}}\left(\mathbf{H}^{\top}\mathbf{H}^{(l-1)}\mathbf{\Theta}^{(l)}\right) \\ &+ \frac{1}{2}{\rm{Tr}}\left(\mathbf{H}^{\top}(\mathbf{I} + \beta\widetilde{\mathbf{L}})\mathbf{H}\right).
	\end{aligned}
	\end{equation}
\end{theorem}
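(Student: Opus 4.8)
The plan is to mirror the three preceding proofs: compute the unconstrained stationary point of $\mathcal{J}^{(l)}$, massage the resulting matrix inverse into a truncated Neumann series in $\widehat{\mathbf{A}}$, and then recover the nonlinearity by projecting onto $\mathcal{S}^{(l)}$. First I would differentiate $\mathcal{J}^{(l)}$ with respect to $\mathbf{H}$, using $\frac{\partial}{\partial\mathbf{H}}\mathrm{Tr}(\mathbf{H}^{\top}\mathbf{H}^{(l-1)}\mathbf{\Theta}^{(l)}) = \mathbf{H}^{(l-1)}\mathbf{\Theta}^{(l)}$ and, since $\mathbf{I}+\beta\widetilde{\mathbf{L}}$ is symmetric, $\frac{\partial}{\partial\mathbf{H}}\frac{1}{2}\mathrm{Tr}(\mathbf{H}^{\top}(\mathbf{I}+\beta\widetilde{\mathbf{L}})\mathbf{H}) = (\mathbf{I}+\beta\widetilde{\mathbf{L}})\mathbf{H}$. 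Setting the gradient to $\mathbf{0}$ yields the closed form $\mathbf{H}^{*} = (\mathbf{I}+\beta\widetilde{\mathbf{L}})^{-1}\mathbf{H}^{(l-1)}\mathbf{\Theta}^{(l)}$, which is structurally simpler than the JKNet case because no factor $\widehat{\mathbf{A}}^{-1}$ precedes the regularizer here.

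The crux is to convert this inverse into the finite polynomial $\sum_{k=0}^{K}\alpha_{k}\widehat{\mathbf{A}}^{k}$ of DAGNN. Exactly as in the JKNet proof (Theorem~\ref{JKNetproof}), I would rewrite $\mathbf{I}+\beta\widetilde{\mathbf{L}} = (1+\beta)\mathbf{I}-\beta\widehat{\mathbf{A}}$, so that $\mathbf{H}^{*} = \frac{1}{1+\beta}\bigl(\mathbf{I}-\frac{\beta}{1+\beta}\widehat{\mathbf{A}}\bigr)^{-1}\mathbf{H}^{(l-1)}\mathbf{\Theta}^{(l)}$. Because the spectral radius of $\frac{\beta}{1+\beta}\widehat{\mathbf{A}}$ is below one, the Neumann expansion $\bigl(\mathbf{I}-\frac{\beta}{1+\beta}\widehat{\mathbf{A}}\bigr)^{-1} = \sum_{k=0}^{\infty}\bigl(\frac{\beta}{1+\beta}\bigr)^{k}\widehat{\mathbf{A}}^{k}$ converges, and absorbing the leading $\frac{1}{1+\beta}$ gives $\mathbf{H}^{*} = \sum_{k=0}^{\infty}\frac{\beta^{k}}{(1+\beta)^{k+1}}\widehat{\mathbf{A}}^{k}\mathbf{H}^{(l-1)}\mathbf{\Theta}^{(l)}$. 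Truncating to order $K$ and setting $\alpha_{k} = \frac{\beta^{k}}{(1+\beta)^{k+1}}$ recovers the DAGNN aggregation coefficients; I would remark that, in contrast to JKNet, the sum begins at $k=0$, and that $\sum_{k=0}^{\infty}\alpha_{k}=1$, matching DAGNN's convention of a normalized trainable projection vector $\alpha$.

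Finally, projecting $\mathbf{H}^{*}$ onto $\mathcal{S}^{(l)}$ (identity for $l\in[L-1]$, simplex for $l=L$) reinstates $\sigma(\cdot)$ and delivers the stated rule. The main obstacle I anticipate is reconciling the fitting term: the framework naturally produces $\widehat{\mathbf{A}}^{k}\mathbf{H}^{(l-1)}\mathbf{\Theta}^{(l)}$, whereas DAGNN propagates the fixed $\mathbf{H}^{(0)} = f_{\Theta}^{MLP}(\mathbf{X})$ with no per-layer weight. I would resolve this as in the APPNP proof, where $\mathbf{H}^{(0)}$ is likewise obtained from an MLP and feature transformation is decoupled from propagation: identifying the transformed input $\mathbf{H}^{(l-1)}\mathbf{\Theta}^{(l)}$ with $\mathbf{H}^{(0)}$ is the substantive step, since it precisely encodes DAGNN's design choice of collapsing the transformation into the initialization and letting each layer perform pure propagation. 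Gathering these steps shows that the regularizer $\frac{1}{2}\mathrm{Tr}(\mathbf{H}^{\top}(\mathbf{I}+\beta\widetilde{\mathbf{L}})\mathbf{H})$ gives birth to DAGNN under the framework, completing the argument.
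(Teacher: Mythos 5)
Your proposal is correct and follows essentially the same route as the paper's proof: differentiate, set the gradient to zero, rewrite $\mathbf{I}+\beta\widetilde{\mathbf{L}}$ as $(1+\beta)\mathbf{I}-\beta\widehat{\mathbf{A}}$, truncate the Neumann series at order $K$ (the paper simply defers this to its JKNet argument), and project onto $\mathcal{S}^{(l)}$. You are in fact more explicit than the paper on two points it glosses over --- the closed form $\alpha_{k}=\beta^{k}/(1+\beta)^{k+1}$ with $\sum_{k}\alpha_{k}=1$, and the identification of the fitting target $\mathbf{H}^{(l-1)}\mathbf{\Theta}^{(l)}$ with the fixed $\mathbf{H}^{(0)}$, which the paper silently substitutes into its gradient --- but these are elaborations of the same argument, not a different one.
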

\begin{proof}
	Taking the derivative of $\mathcal{J}^{(l)}$ w.r.t. $\mathbf{H}$ and setting it to zero, we can harvest
	\begin{align}
	\frac{\partial \mathcal{J}^{(l)}}{\partial \mathbf{H}} =  - \mathbf{H}^{(0)} + (\mathbf{I} + \beta\widetilde{\mathbf{L}})\mathbf{H} = \mathbf{0}.  
	\end{align}
	
	Similar to the proof of Theorem \ref{JKNetproof}, the $K$-order approximation of $\left(\mathbf{I} + \beta\widetilde{\mathbf{L}}\right)^{-1} =\frac{1}{\beta+1} \left(\mathbf{I} - \frac{\beta}{\beta+1}\widehat{\mathbf{A}}\right)^{-1}$ is utilized, and then we obtain 
	\begin{align}
	\mathbf{H}^{*} =  \sum_{k=0}^{K}\alpha_{k}{\widehat{\mathbf{A}}}^{k}\mathbf{H}^{(0)}. 
	\end{align}
	
	By projecting $\mathbf{H}^{*}$ onto $\mathcal{S}^{(l)}$, we can arrive at 
	\begin{align}
	\mathbf{H}^{(l)} = \sigma\left(\sum_{k=0}^{K}\alpha_{k}{\widehat{\mathbf{A}}}^{k}\mathbf{H}^{(0)}\right),\; l \in [L],      
	\end{align}
	which completes the proof.
	
	The above analyses reveal that when the regularizer is devised to $\frac{1}{2}{\rm{Tr}}\left(\mathbf{H}^{\top}(\mathbf{I} + \beta\widetilde{\mathbf{L}})\mathbf{H}\right)$, the framework \eqref{regularizer_framework} can produce DAGNN \cite{LiuGaoJi2020Towards}. 
\end{proof}

\begin{theorem}
	The updating rule of GNN-HF \cite{ZhuWang2021Interpreting} 
	\begin{equation}\label{GNNHFclosed}
	\begin{split}
	\mathbf{H}^{(l)} &= \sigma((\beta+\frac{1}{\alpha})\mathbf{I} \\&+ (1-\beta-\frac{1}{\alpha})\widehat{\mathbf{A}}^{-1}(\mathbf{I} + \beta\widehat{\mathbf{L}})\mathbf{H}^{(l-1)}\mathbf{\Theta}^{(l)}
	),
	\end{split}
	\end{equation}
	is equivalent to solving the following optimization
	\begin{equation}
	\mathbf{H}^{(l)} = \arg\min_{\mathbf{H}\in\mathcal{S}^{(l)}}\mathcal{J}^{(l)}
	\end{equation}
	\begin{displaymath}
	s.t. ~\mathbf{H}^{(l)} = \mathcal{S}_{+},\; l\in[L-1],\; \mathbf{H}^{(L)} = \mathcal{S}_{simplex},
	\end{displaymath}
	where
	\begin{equation}  
	\begin{aligned}
	\mathcal{J}^{(l)} &=-{\rm{Tr}}\left(\mathbf{H}^{\top}\mathbf{H}^{(l-1)}\mathbf{\Theta}^{(l)}\right) \\&+ \frac{1}{2}{\rm{Tr}}\left(\mathbf{H}^{\top}(\mathbf{I} + \mu\widehat{\mathbf{L}})^{-1}(\mathbf{I} + \lambda\widehat{\mathbf{L}})\mathbf{H}\right)
	\end{aligned}
	\end{equation}
	with $\lambda = \beta + \frac{1}{\alpha} - 1$ and $\mu = \beta$. 
\end{theorem}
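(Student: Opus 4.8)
The plan is to follow exactly the differentiate-then-project template that produced the vanilla GCN, APPNP, JKNet, and DAGNN: first locate the unconstrained stationary point $\mathbf{H}^{*}$ of $\mathcal{J}^{(l)}$, then recover the stated rule by projecting $\mathbf{H}^{*}$ onto $\mathcal{S}^{(l)}$ (which realizes $\sigma(\cdot)$, with $\mathcal{S}_{+}$ for the hidden layers and $\mathcal{S}_{simplex}$ for the last), and finally verify that $\mathbf{H}^{*}$ coincides with the GNN-HF filter once $\lambda=\beta+\frac{1}{\alpha}-1$ and $\mu=\beta$ are substituted.

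First I would differentiate $\mathcal{J}^{(l)}$ with respect to $\mathbf{H}$. The fitting term again contributes $-\mathbf{H}^{(l-1)}\mathbf{\Theta}^{(l)}$. For the quadratic regularizer $\frac{1}{2}\mathrm{Tr}(\mathbf{H}^{\top}\mathbf{M}\mathbf{H})$ with $\mathbf{M}=(\mathbf{I}+\mu\widehat{\mathbf{L}})^{-1}(\mathbf{I}+\lambda\widehat{\mathbf{L}})$, the gradient is $\frac{1}{2}(\mathbf{M}+\mathbf{M}^{\top})\mathbf{H}$ in general. The one step that needs genuine care is the symmetry of $\mathbf{M}$: since $\mathbf{I}+\mu\widehat{\mathbf{L}}$ and $\mathbf{I}+\lambda\widehat{\mathbf{L}}$ are both polynomials in the symmetric matrix $\widehat{\mathbf{L}}$, they commute, so $\mathbf{M}^{\top}=(\mathbf{I}+\lambda\widehat{\mathbf{L}})(\mathbf{I}+\mu\widehat{\mathbf{L}})^{-1}=(\mathbf{I}+\mu\widehat{\mathbf{L}})^{-1}(\mathbf{I}+\lambda\widehat{\mathbf{L}})=\mathbf{M}$. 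The gradient therefore collapses to $\mathbf{M}\mathbf{H}$, and setting $\frac{\partial\mathcal{J}^{(l)}}{\partial\mathbf{H}}=\mathbf{0}$ yields $(\mathbf{I}+\mu\widehat{\mathbf{L}})^{-1}(\mathbf{I}+\lambda\widehat{\mathbf{L}})\mathbf{H}=\mathbf{H}^{(l-1)}\mathbf{\Theta}^{(l)}$, i.e.\ $\mathbf{H}^{*}=(\mathbf{I}+\lambda\widehat{\mathbf{L}})^{-1}(\mathbf{I}+\mu\widehat{\mathbf{L}})\mathbf{H}^{(l-1)}\mathbf{\Theta}^{(l)}$.

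Next I would substitute $\mu=\beta$ and $\lambda=\beta+\frac{1}{\alpha}-1$ and rewrite the leading inverse into the GNN-HF form. Setting $\gamma=\beta+\frac{1}{\alpha}$ so that $\lambda=\gamma-1$, and using $\widehat{\mathbf{L}}=\mathbf{I}-\widehat{\mathbf{A}}$, I would expand $\mathbf{I}+(\gamma-1)\widehat{\mathbf{L}}=\gamma\mathbf{I}+(1-\gamma)\widehat{\mathbf{A}}=(\beta+\frac{1}{\alpha})\mathbf{I}+(1-\beta-\frac{1}{\alpha})\widehat{\mathbf{A}}$, so that $\mathbf{H}^{*}=\big((\beta+\frac{1}{\alpha})\mathbf{I}+(1-\beta-\frac{1}{\alpha})\widehat{\mathbf{A}}\big)^{-1}(\mathbf{I}+\beta\widehat{\mathbf{L}})\mathbf{H}^{(l-1)}\mathbf{\Theta}^{(l)}$, which is precisely the propagation matrix of \eqref{GNNHFclosed} acting on $\mathbf{H}^{(l-1)}\mathbf{\Theta}^{(l)}$.

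Finally, projecting $\mathbf{H}^{*}$ onto $\mathcal{S}^{(l)}$ gives $\mathbf{H}^{(l)}=\sigma(\mathbf{H}^{*})$, matching \eqref{GNNHFclosed} and confirming that the regularizer $\frac{1}{2}\mathrm{Tr}(\mathbf{H}^{\top}(\mathbf{I}+\mu\widehat{\mathbf{L}})^{-1}(\mathbf{I}+\lambda\widehat{\mathbf{L}})\mathbf{H})$ reproduces GNN-HF within the framework \eqref{regularizer_framework}. The main obstacle is the commutativity/symmetry argument in the gradient step, since the quadratic form involves a product of a matrix inverse and another matrix rather than a single symmetric matrix; once that symmetry is established, the remainder is the routine substitution of $\lambda,\mu$ and the $\widehat{\mathbf{L}}=\mathbf{I}-\widehat{\mathbf{A}}$ rewrite.
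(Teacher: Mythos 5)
Your proposal is correct and follows essentially the same route as the paper's own proof: differentiate $\mathcal{J}^{(l)}$, set the gradient to zero to get $\mathbf{H}^{*}=(\mathbf{I}+\lambda\widehat{\mathbf{L}})^{-1}(\mathbf{I}+\mu\widehat{\mathbf{L}})\mathbf{H}^{(l-1)}\mathbf{\Theta}^{(l)}$, substitute $\lambda=\beta+\frac{1}{\alpha}-1$, $\mu=\beta$ with $\widehat{\mathbf{L}}=\mathbf{I}-\widehat{\mathbf{A}}$ to recover the GNN-HF propagation matrix, and realize $\sigma(\cdot)$ as projection onto $\mathcal{S}^{(l)}$. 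Your explicit commutativity argument for the symmetry of $(\mathbf{I}+\mu\widehat{\mathbf{L}})^{-1}(\mathbf{I}+\lambda\widehat{\mathbf{L}})$ is a detail the paper leaves implicit, but it does not change the argument.
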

\begin{proof}
	Taking the derivative of $\mathcal{J}^{(l)}$ w.r.t. $\mathbf{H}$ and setting it to zero, we own
	\begin{equation}
	\frac{\partial \mathcal{J}^{(l)}}{\partial \mathbf{H}} =  - \mathbf{H}^{(l-1)}\mathbf{\Theta}^{(l)} + (\mathbf{I} + \mu\widehat{\mathbf{L}})^{-1}(\mathbf{I} + \lambda\widehat{\mathbf{L}})\mathbf{H} = \mathbf{0}. 
	\end{equation}
	which yields
	\begin{align}\label{Closd}
	\mathbf{H}^{*} = (\mathbf{I} + \lambda\widehat{\mathbf{L}})^{-1}(\mathbf{I} + \mu\widehat{\mathbf{L}})\mathbf{H}^{(l-1)}\mathbf{\Theta}^{(l)}.       
	\end{align}
	
	Substituting $\lambda = \beta + \frac{1}{\alpha} - 1$ and $\mu = \beta$ into Eq.~\eqref{Closd}, we obtain
	\begin{align}
	\begin{split}
	(\mathbf{I} + \lambda\widehat{\mathbf{L}})^{-1} &=  \left((1 + \lambda)\mathbf{I} - \lambda\widehat{\mathbf{A}}\right)^{-1} \\
	&= \left((\beta + \frac{1}{\alpha})\mathbf{I} + (1-\beta-\frac{1}{\alpha})\widehat{\mathbf{A}}\right)^{-1}. 
	\end{split}
	\end{align}
	
	By projecting $\mathbf{H}^{*}$ onto $\mathcal{S}^{(l)}$, we can ahieve 
	\begin{align}
	\mathbf{H}^{(l)} = \sigma\left(\mathbf{H}^{*}\right),\; l \in [L],     
	\end{align}
	which completes the proof. 
	
	The above analyses reveal that when the regularizer is devised to $\frac{1}{2}{\rm{Tr}}\left(\mathbf{H}^{\top}(\mathbf{I} + \mu\widehat{\mathbf{L}})^{-1}(\mathbf{I} + \lambda\widehat{\mathbf{L}})\mathbf{H}\right)$, the framework \eqref{regularizer_framework} can produce GNN-HF \cite{ZhuWang2021Interpreting}.
\end{proof}

\begin{theorem}
	The updating rule of GNN-LF \cite{ZhuWang2021Interpreting} 
	\begin{equation}\label{GNNLFclosed}
	\begin{split}
	\mathbf{H}^{(l)} &= \sigma((\beta\mathbf{I}+(1-\beta)\widehat{\mathbf{A}} \\&+ (\frac{1}{\alpha}-1)\widehat{\mathbf{L}})^{-1}(\beta\mathbf{I} + (1-\beta)\widehat{\mathbf{A}})\mathbf{H}^{(l-1)}), 
	\end{split}
	\end{equation}
	is equivalent to solving the following optimization 
	\begin{equation}
	\mathbf{H}^{(l)} = \arg\min_{\mathbf{H}\in\mathcal{S}^{(l)}}\mathcal{J}^{(l)}
	\end{equation}
	\begin{displaymath}
	s.t. ~\mathbf{H}^{(l)} = \mathcal{S}_{+},\; l\in[L-1],\; \mathbf{H}^{(L)} = \mathcal{S}_{simplex},
	\end{displaymath}
	where
	\begin{equation}
	\begin{aligned}
	J^{(l)} &=-Tr\left(\mathbf{H}^{\top}\mathbf{H}^{(l-1)}\mathbf{\Theta}^{(l)}\right) \\&+ \frac{1}{2}Tr\left(\mathbf{H}^{\top}(\mathbf{I} + \mu\widehat{\mathbf{A}})^{-1}(\mathbf{I} + \lambda\widehat{\mathbf{A}})\mathbf{H}\right)
	\end{aligned}
	\end{equation}
	with $\lambda = \frac{-\alpha\beta+2\alpha-1}{\alpha\beta-\alpha+1}$ and $\mu = \frac{1}{\beta}-1$. 
\end{theorem}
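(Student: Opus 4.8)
The plan is to follow the same template used for the preceding GNN-HF theorem, specializing it to a rational filter in $\widehat{\mathbf{A}}$ rather than in $\widehat{\mathbf{L}}$. First I would take the derivative of $\mathcal{J}^{(l)}$ with respect to $\mathbf{H}$. The only point requiring care is the quadratic term: since $(\mathbf{I}+\mu\widehat{\mathbf{A}})^{-1}$ and $(\mathbf{I}+\lambda\widehat{\mathbf{A}})$ are both functions of the symmetric matrix $\widehat{\mathbf{A}}$, they commute and their product $\mathbf{M}=(\mathbf{I}+\mu\widehat{\mathbf{A}})^{-1}(\mathbf{I}+\lambda\widehat{\mathbf{A}})$ is symmetric, so $\partial\,\mathrm{Tr}(\mathbf{H}^{\top}\mathbf{M}\mathbf{H})/\partial\mathbf{H}=2\mathbf{M}\mathbf{H}$. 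Setting the gradient $-\mathbf{H}^{(l-1)}\mathbf{\Theta}^{(l)}+\mathbf{M}\mathbf{H}$ to zero and inverting $\mathbf{M}$ then gives the stationary point $\mathbf{H}^{*}=(\mathbf{I}+\lambda\widehat{\mathbf{A}})^{-1}(\mathbf{I}+\mu\widehat{\mathbf{A}})\mathbf{H}^{(l-1)}\mathbf{\Theta}^{(l)}$, exactly paralleling Eq.~\eqref{Closd}.

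Next I would substitute the prescribed constants and reduce $\mathbf{H}^{*}$ to the claimed closed form. For the numerator, plugging in $\mu=\tfrac{1}{\beta}-1$ gives $\mathbf{I}+\mu\widehat{\mathbf{A}}=\tfrac{1}{\beta}\bigl(\beta\mathbf{I}+(1-\beta)\widehat{\mathbf{A}}\bigr)$, which already produces the factor $\beta\mathbf{I}+(1-\beta)\widehat{\mathbf{A}}$ appearing in Eq.~\eqref{GNNLFclosed}, up to the scalar $1/\beta$. For the denominator I would use the identity $\widehat{\mathbf{A}}=\mathbf{I}-\widehat{\mathbf{L}}$ to re-express $\mathbf{I}+\lambda\widehat{\mathbf{A}}$ and then substitute $\lambda=\frac{-\alpha\beta+2\alpha-1}{\alpha\beta-\alpha+1}$, aiming to rewrite it (again up to an overall scalar) as $\beta\mathbf{I}+(1-\beta)\widehat{\mathbf{A}}+(\tfrac{1}{\alpha}-1)\widehat{\mathbf{L}}$. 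The scalars from numerator and denominator should cancel, after which projecting $\mathbf{H}^{*}$ onto $\mathcal{S}^{(l)}$ via $\sigma(\cdot)$ yields the GNN-LF update, with $\mathbf{\Theta}^{(l)}$ absorbed into the transformation as in Eq.~\eqref{GNNLFclosed}.

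The hard part will be the algebraic reconciliation in the denominator. Because both the numerator filter $1+\mu a$ and the denominator filter $1+\lambda a$ are rational functions of an eigenvalue $a$ of $\widehat{\mathbf{A}}$, the cleanest verification is spectral: diagonalize $\widehat{\mathbf{A}}$ and check the scalar identity $\frac{1+\mu a}{1+\lambda a}=\frac{\beta+(1-\beta)a}{(\beta+\frac{1}{\alpha}-1)+(2-\beta-\frac{1}{\alpha})a}$ for every eigenvalue $a$, which reduces the matrix claim to matching the constant and the linear coefficients of a single polynomial identity. This is where all the bookkeeping lives: one must track the common scalar factor carefully and confirm that the prescribed $\lambda$ and $\mu$ are exactly the values that make the $\mathbf{I}$-coefficient and the $\widehat{\mathbf{A}}$-coefficient agree simultaneously after the $\widehat{\mathbf{A}}\leftrightarrow\widehat{\mathbf{L}}$ conversion. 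I would therefore isolate this scalar computation as the crux and carry it out first, since everything else is the standard differentiate--solve--project routine already validated in the GNN-HF and JKNet arguments.
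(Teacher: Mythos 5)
Your plan follows the paper's proof essentially verbatim: differentiate $\mathcal{J}^{(l)}$, set the gradient to zero to obtain $\mathbf{H}^{*}=(\mathbf{I}+\lambda\widehat{\mathbf{A}})^{-1}(\mathbf{I}+\mu\widehat{\mathbf{A}})\mathbf{H}^{(l-1)}\mathbf{\Theta}^{(l)}$, substitute the prescribed $\lambda$ and $\mu$, convert between $\widehat{\mathbf{A}}$ and $\widehat{\mathbf{L}}$, and project via $\sigma(\cdot)$; your proposed spectral verification of the rational-filter identity is just a harmless reformulation of the paper's direct matrix computation. One small correction to your bookkeeping: the numerator and denominator scalars do not fully cancel but leave a residual factor $\frac{\alpha\beta-\alpha+1}{\alpha\beta}$, which the paper disposes of exactly as you anticipate elsewhere in your plan, by absorbing its reciprocal $\frac{\alpha\beta}{\alpha\beta-\alpha+1}$ into the learnable $\mathbf{\Theta}^{(l)}$.
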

\begin{proof}
	Taking the derivative of $\mathcal{J}^{(l)}$ w.r.t. $\mathbf{H}$ and setting it to zero, we can get
	\begin{equation}
	\frac{\partial \mathcal{J}^{(l)}}{\partial \mathbf{H}} =  - \mathbf{H}^{(l-1)}\mathbf{\Theta}^{(l)} + (\mathbf{I} + \mu\widehat{\mathbf{A}})^{-1}(\mathbf{I} + \lambda\widehat{\mathbf{A}})\mathbf{H} = \mathbf{0},  
	\end{equation} 
	which leads to 
	\begin{align}\label{LFClosd}
	\mathbf{H}^{*} = (\mathbf{I} + \lambda\widehat{\mathbf{A}})^{-1}(\mathbf{I} + \mu\widehat{\mathbf{A}})\mathbf{H}^{(l-1)}\mathbf{\Theta}^{(l)}.       
	\end{align}
	\begin{table*}[!tbp]
	\caption{Specific ($\alpha$, $\beta$, $r$) and other parameters of tsGCN on all datasets.}
	\resizebox{0.99\textwidth}{!}{
		\begin{tabular}{m{3cm}<{\centering}|m{1.3cm}<{\centering}m{1.3cm}<{\centering}m{1.5cm}<{\centering}m{2.2cm}<{\centering}m{2.2cm}<{\centering}m{2.2cm}<{\centering}}
			\toprule[1pt]
			Datasets/Parameters & $\alpha$  & $\beta$ & $r$    & Learning rate      & Weight decay   & Hidden units \\ \midrule
			Cora                     & $1.0$     & $0.2$  & $\lfloor d/16 \rfloor$   & $1 \times 10^{-2}$ & $5 \times 10^{-4}$  & $32$           \\
			Citeseer                 & $1.0$     & $0.4$  & $\lfloor d/16 \rfloor$   & $1 \times 10^{-2}$ & $5 \times 10^{-4}$  & $32$           \\
			Pubmed                   & $1.0$     & $0.3$  & $\lfloor d/2048 \rfloor$ & $1 \times 10^{-2}$ & $5 \times 10^{-4}$  & $32$           \\
			ACM                      & $1.0$     & $0.9$  & $\lfloor d/64 \rfloor$   & $1 \times 10^{-2}$ & $5 \times 10^{-4}$  & $32$           \\
			BlogCatalog              & $1.0$     & $0.5$  & $\lfloor d/64 \rfloor$   & $1 \times 10^{-2}$ & $5 \times 10^{-4}$  & $32$           \\
			CoraFull                 & $1.0$     & $0.1$  & $\lfloor d/8 \rfloor$    & $1 \times 10^{-2}$ & $5 \times 10^{-4}$  & $32$           \\
			Flickr                   & $1.0$     & $1.0$  & $\lfloor d/64 \rfloor$   & $1 \times 10^{-2}$ & $5 \times 10^{-4}$  & $32$           \\
			UAI                      & $1.0$     & $1.0$  & $\lfloor d/16 \rfloor$   & $1 \times 10^{-2}$ & $5 \times 10^{-4}$  & $32$           \\ \bottomrule[1pt]
		\end{tabular}\label{ParaSettings}}
\end{table*}

	Absorbing the scale $\frac{\alpha\beta}{\alpha\beta-\alpha+1}$ into the to-be-learnt variable $\mathbf{\Theta}^{(l)}$, and substituting $\lambda = \frac{-\alpha\beta+2\alpha-1}{\alpha\beta-\alpha+1}$ and $\mu = \frac{1}{\beta}-1$ into Eq.~\eqref{LFClosd}, we can harvest
	\begin{equation}
	\small
	\begin{aligned}
	& \frac{\alpha\beta}{\alpha\beta-\alpha+1}(\mathbf{I} + \lambda\widehat{\mathbf{A}})^{-1}(\mathbf{I} + \mu\widehat{\mathbf{A}}) \\
	& = \frac{\alpha\beta}{\alpha\beta-\alpha+1}(\mathbf{I} + \frac{-\alpha\beta+2\alpha-1}{\alpha\beta-\alpha+1}\widehat{\mathbf{A}})^{-1}(\mathbf{I} + (\frac{1}{\beta}-1)\widehat{\mathbf{A}}) \\
	& = \left((1-\frac{1}{\beta}+\frac{1}{\alpha\beta})\mathbf{I} + (-1 + \frac{2}{\beta} - \frac{1}{\alpha\beta})\widehat{\mathbf{A}}\right)^{-1}(\mathbf{I} + (\frac{1}{\beta}-1)\widehat{\mathbf{A}})\\
	& = \left((\beta-1+\frac{1}{\alpha})\mathbf{I} + (-\beta + 2 - \frac{1}{\alpha})\widehat{\mathbf{A}}\right)^{-1}(\beta\mathbf{I} + (1-\beta)\widehat{\mathbf{A}}). 
	\end{aligned}
	\end{equation}
	
	By projecting $\mathbf{H}^{*}$ onto $\mathcal{S}^{(l)}$, we can attain 
	\begin{align}
	\mathbf{H}^{(l)} = \sigma\left(\mathbf{H}^{*}\right), l \in [L],      
	\end{align}
	
	For notation consistency, we denote $\lambda$ and $\mu$ as $\alpha$ and $\beta$ in Table \ref{ConnectionRegularizerGCN}, completing the proof. 
	
	The above analyses reveal that when the regularizer is set to $\frac{1}{2}{\rm{Tr}}\left(\mathbf{H}^{\top}(\mathbf{I} + \mu\widehat{\mathbf{A}})^{-1}(\mathbf{I} + \lambda\widehat{\mathbf{A}})\mathbf{H}\right)$, the framework \eqref{regularizer_framework} can generate GNN-LF \cite{ZhuWang2021Interpreting}.
\end{proof}

\subsection{More Experimental Settings and Results}
In this part, we provide more experimental settings and results for tsGCN, including hyperparameter settings, t-SNE visualizations of various methods, and the parameter sensitivity analysis of tsGCN w.r.t. F1-score.

\textbf{Hyperparameter Settings.} The detailed values of several hyperpaerameters are recorded in Table \ref{ParaSettings}, which can be used to reproduce the reported experimental results. And the code is also provided as a supplementary file.

\textbf{More visualizations.} We draw the t-SNE of embeddings generated by all methods on all datasets from Fig.~\ref{Tsne_Cora} to Fig.~\ref{Tsne_UAI}, from which we have the following observations:
\begin{itemize}
    \item The results are generally matched with the quantitative performance, i.e., tsGCN achieves better results than the others on most datasets.
    \item Embeddings generated by tsGCN achieve better inter-class separation alongside intra-class clustering than those generated by tsGCN (inv), even when their quantitative performance is comparable.
\end{itemize}
\textbf{Parameter Sensitivity.} It can be seen that the F1-scores of tsGCN w.r.t. ($\alpha$, $\beta$) hold the similar trends with the classification accuracies of tsGCN.

\begin{figure*}[!tbp]
	\centering
	\includegraphics[width=0.99\textwidth]{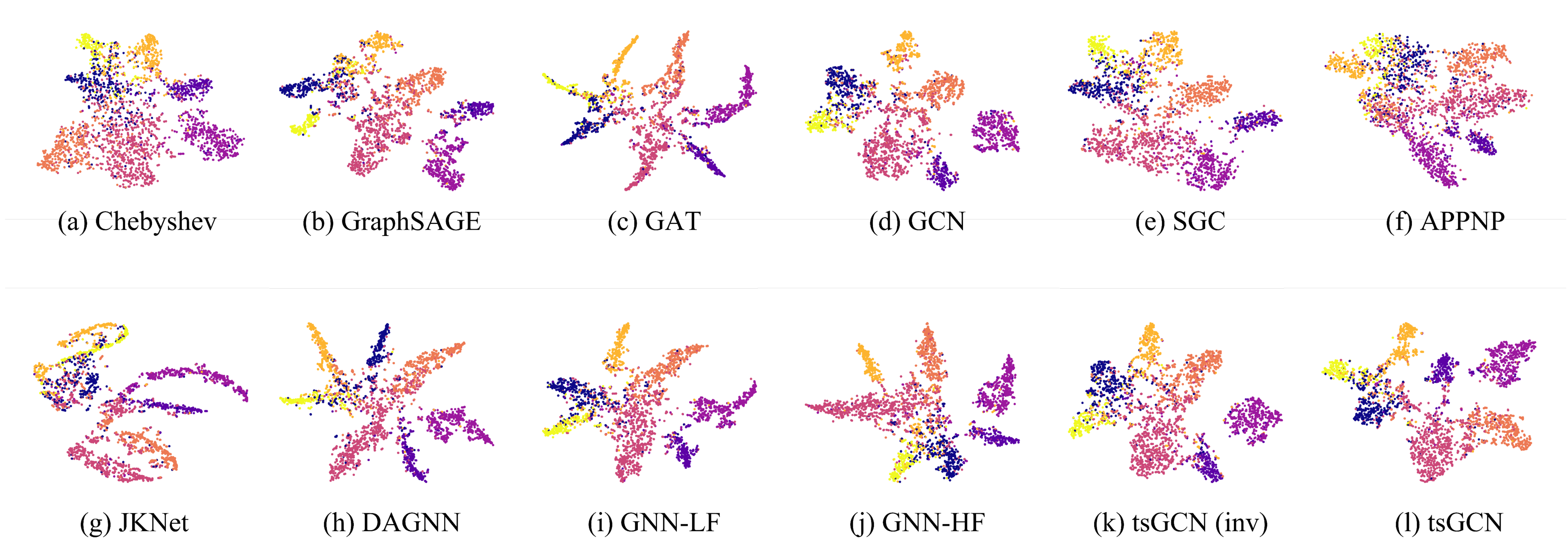}\\
	\caption{Different methods' t-SNE visualizations on Cora, where each color corresponds to one class.}
	\label{Tsne_Cora}
\end{figure*}

\begin{figure*}[!tbp]
	\centering
	\includegraphics[width=0.99\textwidth]{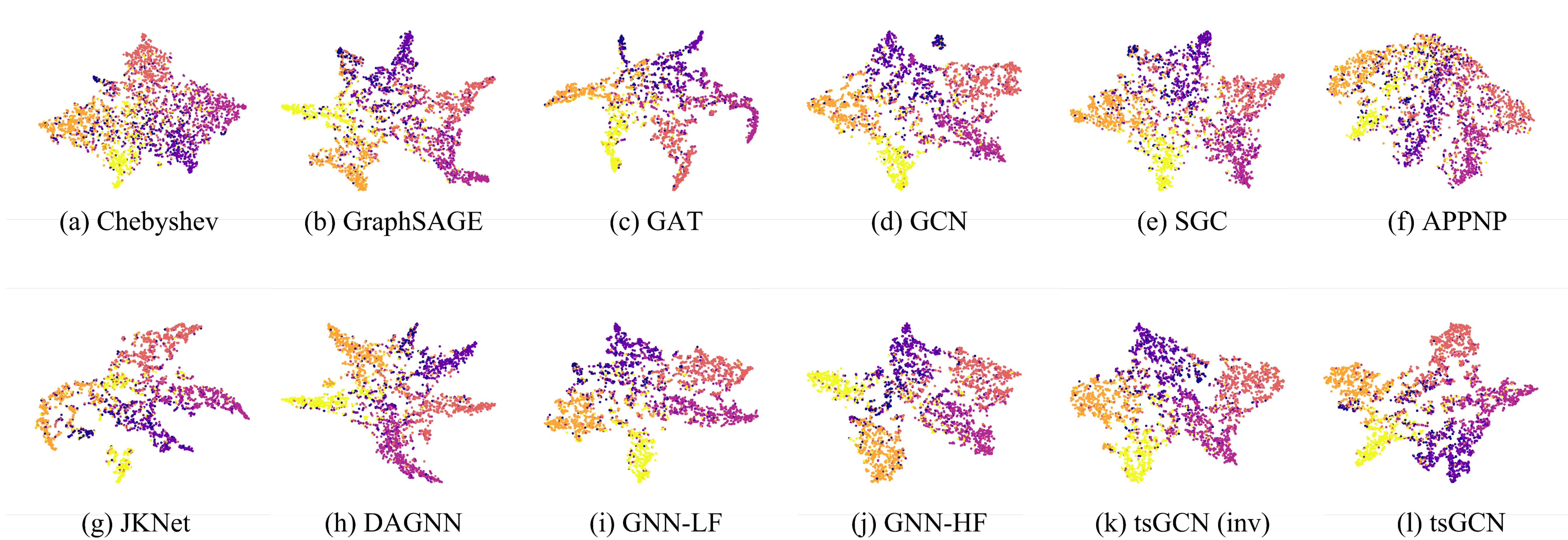}\\
	\caption{Different methods' t-SNE visualizations on Citeseer, where each color corresponds to one class.}
	\label{Tsne_Citeseer}
\end{figure*}

\begin{figure*}[!tbp]
	\centering
	\includegraphics[width=0.99\textwidth]{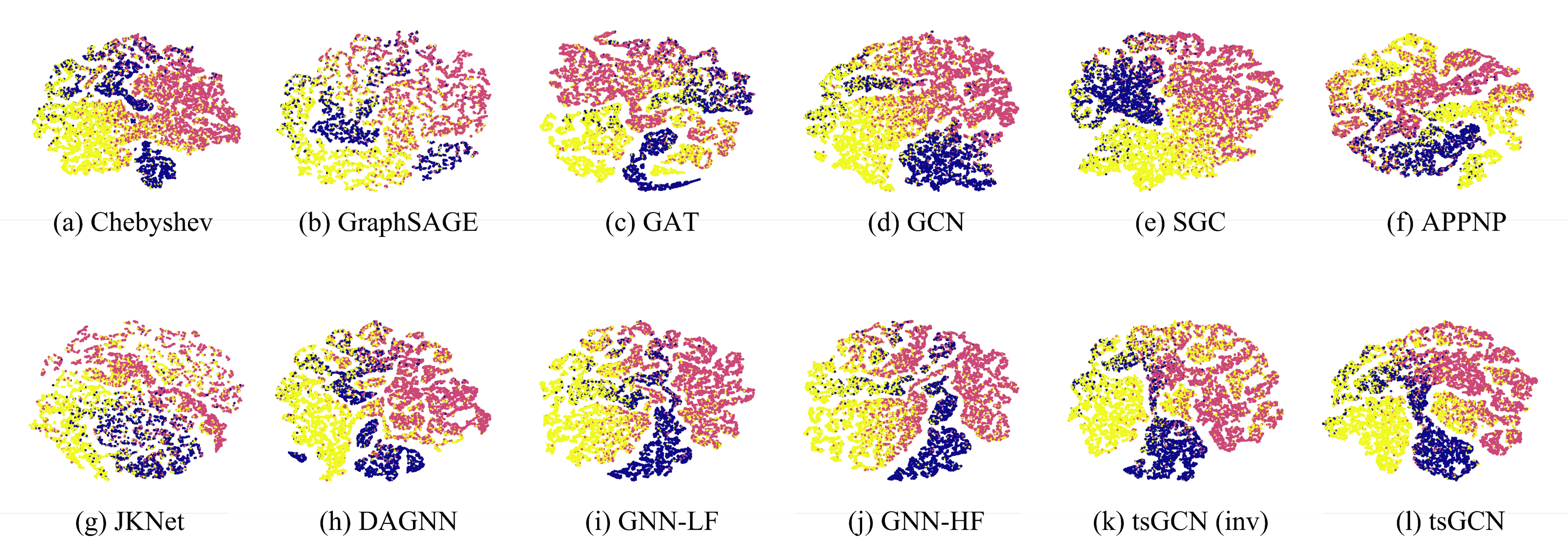}\\
	\caption{Different methods' t-SNE visualizations on Pubmed, where each color corresponds to one class.}
	\label{Tsne_Pubmed_}
\end{figure*}

\begin{figure*}[!tbp]
	\centering
	\includegraphics[width=0.99\textwidth]{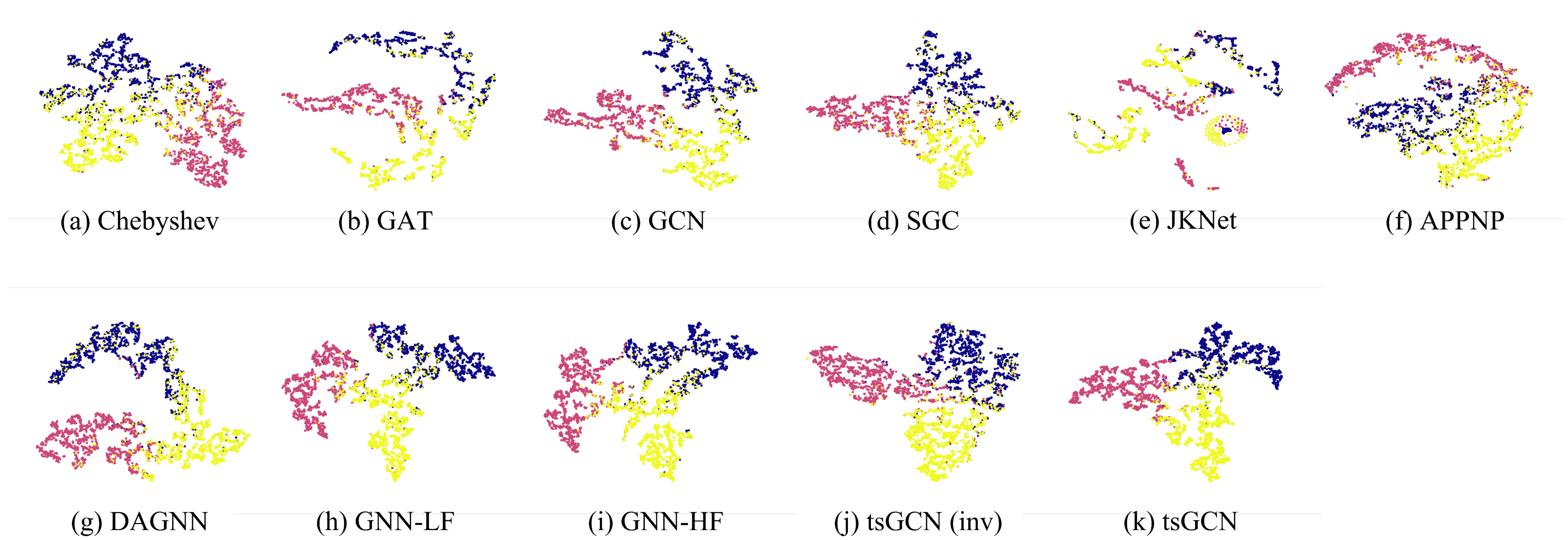}\\
	\caption{Different methods' t-SNE visualizations on ACM, where each color corresponds to one class. Note that GraphSAGE fails to run on ACM.}
	\label{Tsne_ACM}
\end{figure*}

\begin{figure*}[!tbp]
	\centering
	\includegraphics[width=0.99\textwidth]{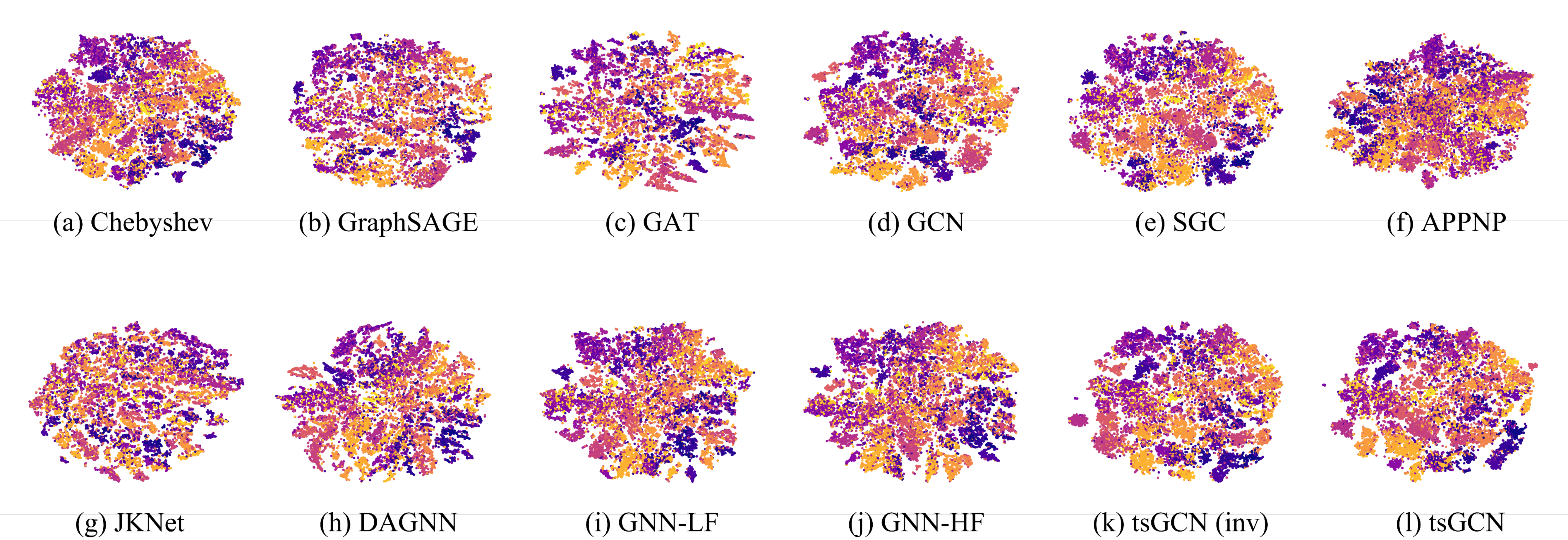}\\
	\caption{Different methods' t-SNE visualizations on CoraFull, where each color corresponds to one class.}
	\label{Tsne_CoraFull_}
\end{figure*}

\begin{figure*}[!tbp]
	\centering
	\includegraphics[width=0.99\textwidth]{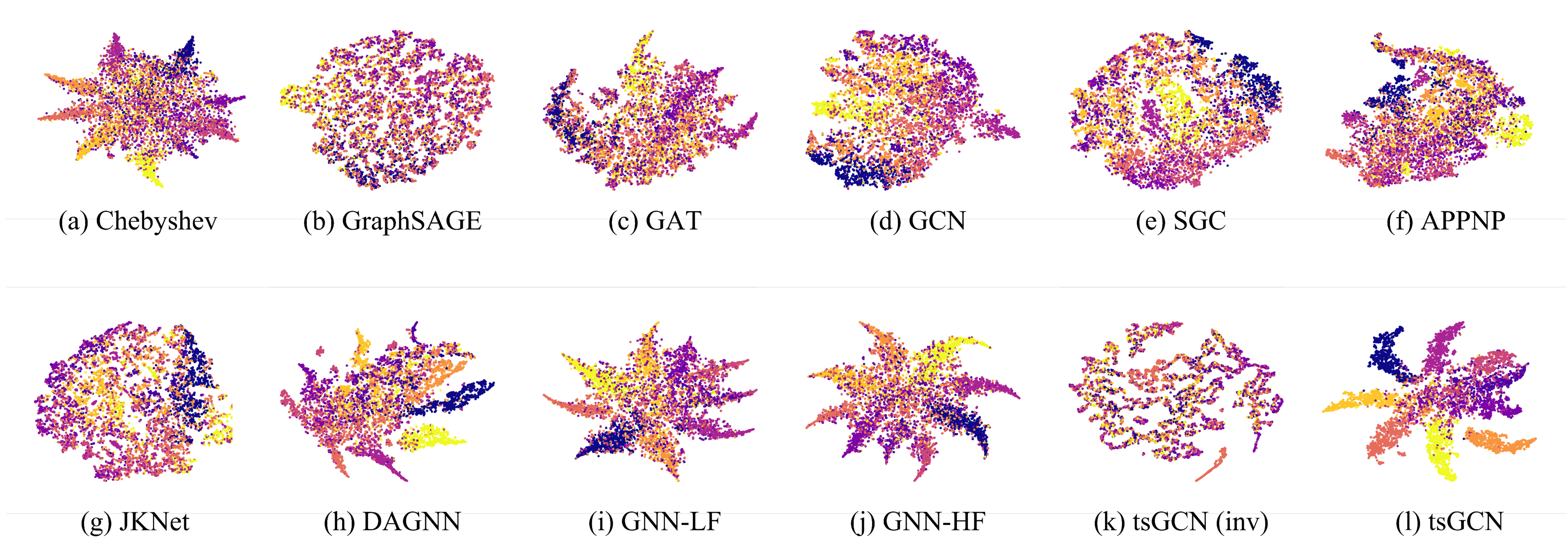}\\
	\caption{Different methods' t-SNE visualizations on Flickr, where each color corresponds to one class.}
	\label{Tsne_Flickr}
\end{figure*}

\begin{figure*}[!tbp]
	\centering
	\includegraphics[width=0.99\textwidth]{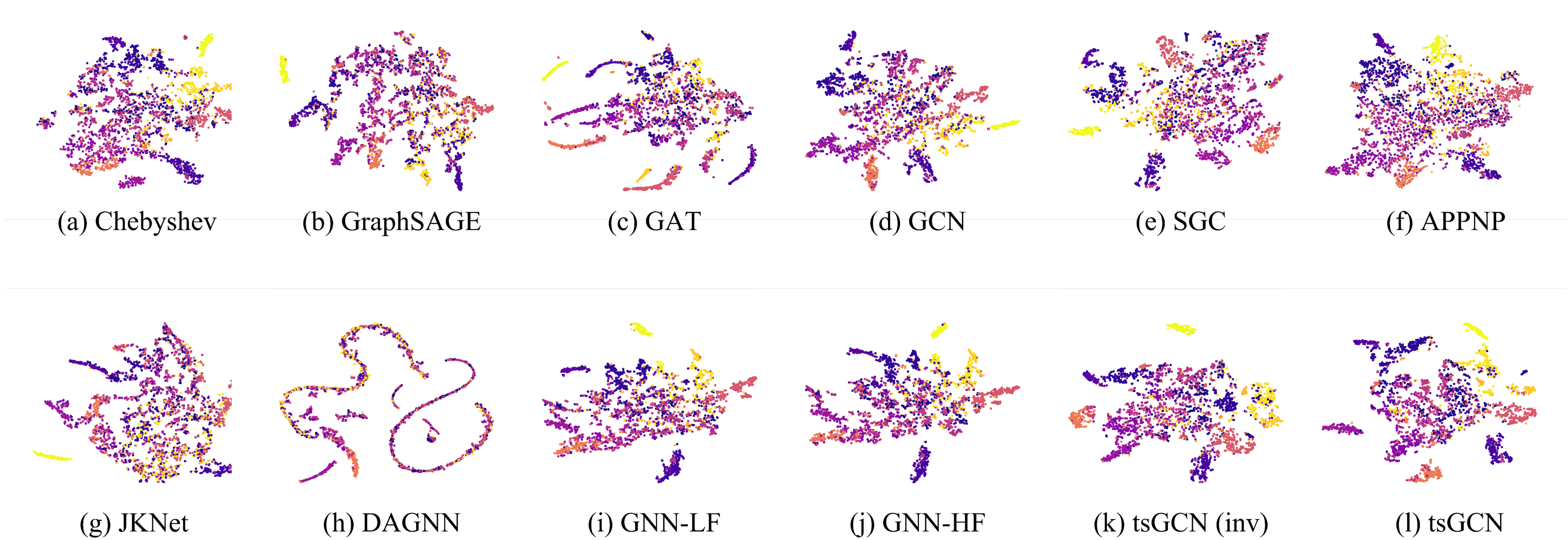}\\
	\caption{Different methods' t-SNE visualizations on UAI, where each color corresponds to one class.}
	\label{Tsne_UAI}
\end{figure*}

\begin{figure*}[!tbp]
	\centering
	\includegraphics[width=0.99\textwidth]{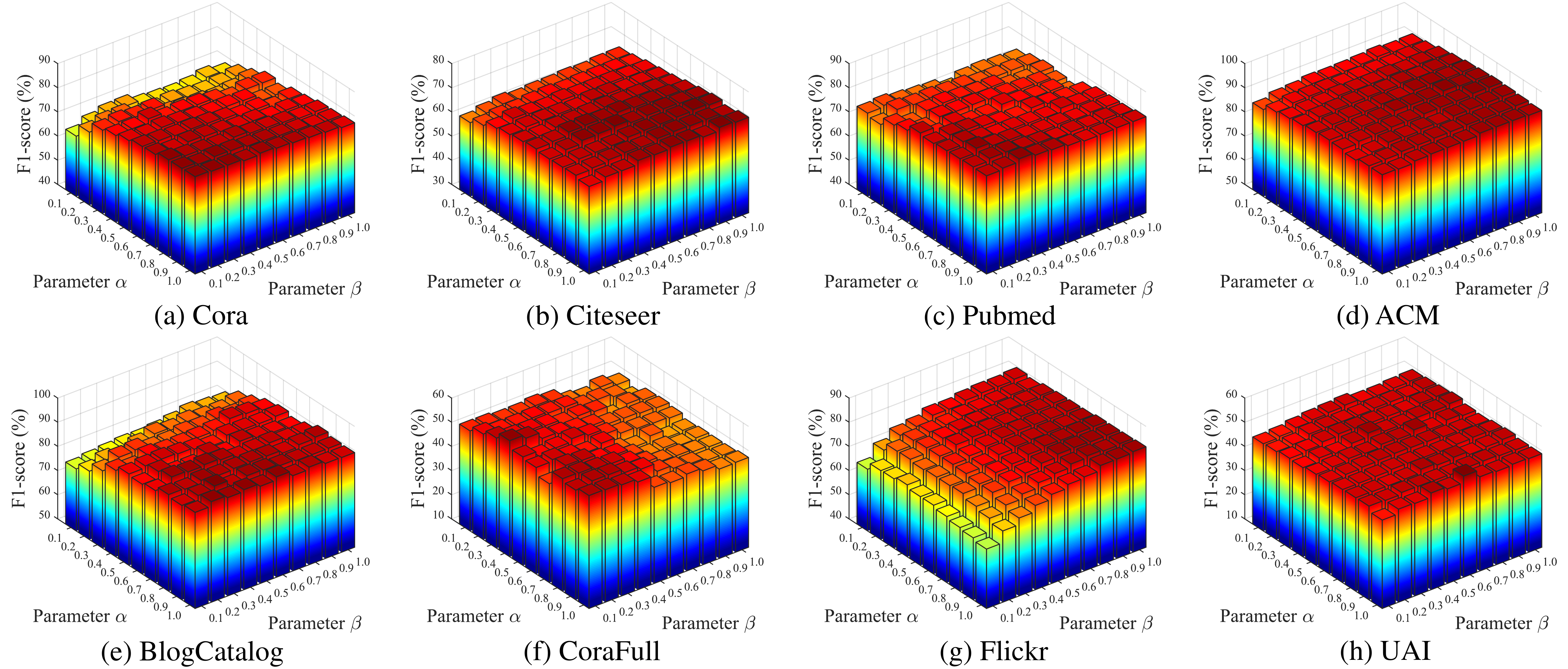}\\
	\caption{The classification F1-scores of tsGCN w.r.t. different hyperparameters $\alpha$ and $\beta$ on all datasets.}
	\label{Sensitivity_f1}
\end{figure*}

\bibliographystyle{num}

\end{document}